
\documentclass[11pt]{article}

\usepackage{fullpage}
\usepackage{multirow}
\usepackage{microtype}
\usepackage[square]{natbib}
\usepackage{hyperref}
\hypersetup{hidelinks}
\hypersetup{
colorlinks=true,
linkcolor=blue,
citecolor=blue
}
\usepackage{graphicx}
\usepackage{subfigure}
\usepackage{booktabs} 
\usepackage{amssymb}
\usepackage{algorithmic}
\usepackage{algorithm}
\usepackage{bbding}
\usepackage{makecell}
\usepackage[normalem]{ulem} 
\usepackage{hyperref}




\usepackage{amsmath}
\usepackage{amssymb}
\usepackage{mathtools}
\usepackage{amsthm}
\allowdisplaybreaks

\usepackage[capitalize,noabbrev]{cleveref}

\theoremstyle{plain}
\newtheorem{theorem}{Theorem}[section]

\newtheorem{lemma}[theorem]{Lemma}

\theoremstyle{definition}

\newtheorem{assumption}[theorem]{Assumption}
\theoremstyle{remark}

\usepackage[textsize=tiny]{todonotes}

\newcommand{\DD}{\mathbb{D}}
\newcommand{\EE}{\mathbb{E}}

\newcommand{\II}{\mathbb{I}}

\newcommand{\LL}{\mathbb{L}}

\newcommand{\NN}{\mathbb{N}}

\newcommand{\PP}{\mathbb{P}}

\newcommand{\RR}{\mathbb{R}}

\newcommand{\cD}{\mathcal{D}}

\newcommand{\cF}{\mathcal{F}}

\newcommand{\cO}{\mathcal{O}}

\newcommand{\cR}{\mathcal{R}}

\newcommand{\cX}{\mathcal{X}}
\newcommand{\cY}{\mathcal{Y}}

\ifx\example\undefined
\newtheorem{example}[theorem]{Example}
\fi

\DeclareMathOperator*{\argmax}{arg\,max}
\DeclareMathOperator*{\argmin}{arg\,min}

\title{Projection Optimization: A General Framework for Multi-Objective and Multi-Group RLHF}
\author{Nuoya Xiong\thanks{Carnegie Mellon University. Email: \texttt{nuoyax@andrew.cmu.edu}.} \qquad Aarti Singh\thanks{Carnegie Mellon University.  Email: \texttt{aarti@cs.cmu.edu}}}
\date{\today}

\begin{document}


\maketitle

\begin{abstract}
Reinforcement Learning with Human Feedback (RLHF) is a widely used fine-tuning approach that aligns machine learning model, particularly Language Model (LM) with human preferences. There are typically multiple objectives driving the preference, hence humans find it easier to express per-objective comparisons rather than a global preference between two choices. 
Multi-Objective RLHF (MORLHF) aims to use per-objective preference feedback and achieve Pareto optimality among these objectives by aggregating them into a single unified objective for optimization. However, nearly all prior works rely on linear aggregation, which rules out policies that favor specific objectives such as the worst one. The only existing approach using non-linear aggregation  is computationally expensive due to its reward-based nature and the need for retraining whenever the aggregation parameters change.
In this work, we address this limitation by transforming the non-linear aggregation maximization problem into a series of sub-problems. Each sub-problem involves only linear aggregation, making it computationally efficient to solve. We further extend our framework to handle multi-group scenarios, where each group has distinct weights for the objectives. Our method enables achieving consensus or maximizing the aggregated objective across all groups.
Theoretically, we demonstrate that our algorithmic framework achieves sublinear regret and can be easily adapted to a reward-free algorithm. Empirically, leveraging our theoretical insights, we propose a nearly training-free algorithm once the optimal policies for individual objectives are obtained.
\end{abstract}

\section{Introduction}

In recent years, there has been considerable effort to fine-tune a machine learning model, particularly Large Language Model (LLM), to perform better on particular tasks. RLHF is a popular fine-tuning approach, which receives the human's preference feedback and aligns the LLM model with human values using fine-tuning.
Standard RLHF exploits human preference feedback between two outputs to maximize the expectation of the implicit or explicit reward function.

However, there are two main challenges for the application of RLHF in the real world. First, standard RLHF only maximizes a single reward function. 
However, people often find it hard to evaluate choices in an overall sense as, in reality, there are often \textit{multiple objectives}. For example, comparing two papers or essays overall is harder than comparing them on specific objectives such as novelty, clarity, correctness etc. Similarly, recommending a city for vacation is harder than comparing cities on food options, nightlife, safety, etc. Each objective has its own implicit or explicit reward function, and the LLM needs to achieve a Pareto optimal trade-off between them by, for example, maximizing an aggregation of these reward function. Second, there are \textit{multiple groups} of users in the real world who may prefer different aggregations of the objectives. For example, groups with different genders, political views, marital status, etc. 
This requires that the LLM either (a) satisfies the requirements of all the groups simultaneously, or (b) optimizes some aggregation across multiple groups. 

\vspace{-0.1in}
\paragraph{Multi-Objective Problem} 
There are some works \citep{rame2024rewarded, yang2024rewards,shi2024decoding} that consider balancing the utilities of multiple objectives to get the Pareto optimal point or maximize the average expectation. Some works \citep{zhong2024provable,park2024rlhf}  consider multi-party problem in which each reward represents a group, which can also be regarded as a multi-objective problem.
We assume that we have $m$ different objectives, and each objective has its own reward function $r_i(x,y)(1\le i\le m)$. Each reward corresponds to an objective of the response $y$ like safety or helpfulness of the LLM.
Nearly all of the previous work consider only linear aggregation, i.e., optimizing $  r(x,y)=\sum_{i=1}^m \alpha_i r_i(x,y),$ where $\alpha=\{\alpha_i\}_{i \in[m]}$ is the weight of all objectives that is assumed to be known. 

However, this kind of aggregation may not lead to an LLM that treats all objectives fairly. For example, the LLM may favor one objective significantly at the expense of another.
In social choice theory, certain natural axioms such as monotonicity, symmetry,
scale invariance, etc. which apply to multi-objective aggregation as well, lead to 
a more general function class \citep{cousins2021axiomatic}
\begin{align}r(x,y) =\left(\sum_{i=1}^m \alpha_i r_i^p(x,y)\right)^{1/p}, p\le 1,\label{eq:generalized_reward_f}\end{align}
The general $p$-norm aggregation with $p\le1$ promotes fairness across multiple objectives, which is particularly useful when aiming for a machine model that achieves a balanced performance among different objectives.
Only one paper \citep{zhong2024provable} addresses the p-norm aggregation setting. In that work, the authors first learn a reward function for each objective, aggregate them into a new reward, and then attempt to optimize this new reward directly. However, this reward-based approach is computationally inefficient compared to the reward-free, DPO-based algorithm \citep{rafailov2024direct}. Moreover, it requires retraining the entire policy whenever the aggregation method changes, which becomes even more time-consuming. 

To reduce the computational cost of the reward-based RLHF algorithm, the paper \citep{shi2024decoding} shows that for $p=1$, once the optimal policy $\pi_{r_i}$
  for each individual objective is obtained, the optimal policy 
$\pi_r$ for the linear averaged sum can be calculated as $\pi_r(y\mid x) \propto \prod_{i=1}^m \pi_{r_i}(y\mid x)^{\alpha_i}.$
However, the derivation heavily depends on the linear structure of the aggregated reward $r(x,y)$. When $p\neq 1,$ this approach breaks and the optimal policy cannot be written as a simple closed-form of the optimal policies of each objective. 
That raises the first question: 
\vspace{0.5em}

\centerline{\textbf{\textit{Question 1: Can we derive a computationally efficient MORLHF algorithm}}}\centerline{\textbf{\textit{with non-linear aggregation?}}} 

\vspace{0.5em}
In our work, we propose a projection-based algorithm both in offline and online preference data settings, which transforms the nonlinear objective maximization problem into a sequence of subproblems, each involving only a linear maximization problem. Theoretically, we provide a thorough analysis for both offline and online setting, showing that it can converge to the optimal policy with a sublinear regret. Empirically, by leveraging the fact that there is a training-free algorithm for linear aggregation maximization, we 
derive a training-free algorithm for the generalized reward aggregation, which saves significant training time. 

Moreover,
previous work typically assumes that the weight for each objective is known.
This assumption simplifies the problem and allows for straightforward optimization. However, in real-world applications,  the importance weights $\{\alpha_i\}$ for each objective are usually unknown. 
In our work, we observe that the weight of an objective reflects its importance, which can be learned by how frequently the objective is reported in the human preferences. We propose a learning paradigm where the LLM learns objective weights from collected data, enabling the estimation of $\{\alpha_i\}$ and incorporating them into our theoretical results.

\paragraph{Multi-Group Problem}  Classical RLHF often assumes a single-group setting, ignoring the heterogeneity in human feedback and assuming that the human feedback relies on one unique reward function. However, real-world scenarios involve multiple groups with distinct preferences. Fine-tuning an LLM for each group is computationally expensive, making it essential to fine-tune the LLM to accommodate all groups' preferences simultaneously.

Since previous papers \citep{zhong2024provable,park2024rlhf} working on multi-group RLHF only consider learning the reward function of each group under a single objective and then aggregating them, we regard them as a special case of the MORLHF. 
Hence, there is a lack of discussion about the multi-group setting where each group may have different importance for different objectives.

Formally, assume that we have $N$ group and $m$ objectives, and each group $n \in [N]$ has their own weight $\alpha^{(n)}\in \Delta_{m-1}$. The reward of the group $n$ is then defined by 
\begin{align*}
    r^{(n)}(x,y) = \left(\sum_{i=1}^m \alpha_i^{(n)}(r_i(x,y))^{p^{(n)}}\right)^{1/p^{(n)}}, p^{(n)}\le 1.
\end{align*}
The reward function of each objective, $\{r_i(x, y)\}_{i \in [m]}$, remains fixed across different groups, while the weight $\alpha$ and the parameter $p$ can vary.
In other words, the reward of each objective is the inherent value, and the importance weight represents the subjective part of each group. Now we pose the last question: 
\vspace{0.5em}

\centerline{\textbf{\textit{Question 2: Can we formulate and tackle the multi-group }}}\centerline{\textbf{\textit{problem under MORLHF setting?}}}

\vspace{0.5em}
In this paper, we consider two final goals for multi-group problem. Motivated by the poll theory, the first objective is called ``consensus", in which LLM needs to meet the requirements of all groups as good as possible simultaneously. Motivated by social choice theory, the second objective is called "aggregation", in which the LLM needs to optimize a general aggregation of the utilities of all groups. 
We will show that our formulation and algorithmic framework naturally solve these two final goals.
In summary, we have the following contributions:
\begin{itemize}

\item We reformulate the reward maximization in MORLHF as minimizing the distance between the current reward vector and a target set. This reframing decomposes the aggregated reward maximization into sub-problems, each focusing on minimizing the distance in a specific direction. These sub-problems reduce to linear aggregation and can be efficiently solved using previous approaches.
Theoretically, we provide converge guarantees for both offline and online setting. Empirically, we provide a training-free algorithm once the optimal policy and the reward function for each objective is given, making it more computationally efficient.

\item  We tackle the multi-group problem in two ways: (1) achieving consensus by defining the target set as the intersection of all groups' target sets, and (2) minimizing the malfare function \citep{cousins2021axiomatic} which aggregates the distance between each group's expected reward vector and its target set. Our framework addresses both problems concisely with theoretical guarantees.

\item  We establish a learning paradigm where the LMs learn the importance weight from data. We integrate weight estimation into the online setting and provide theoretical guarantees.

\end{itemize}
\section{Related Works}

\paragraph{RLHF}
Fine-tuning LLMs with human feedback and RL is known as RLHF. The reward-based RLHF first extracts a reward model with a Bradley-Terry (BT) assumption on human preferences, and then optimizes the reward model \citep{ouyang2022training,bai2022training,touvron2023llama,azar2024general}. On the other hand, the reward-free RLHF avoids explicit reward modeling by directly formulating the preference loss as a function of the policy and then using supervised learning \citep{wang2023beyond,rafailov2024direct}, which is more stable and computation-friendly. 
\paragraph{MORLHF}
Multi-Objective RLHF (MORLHF) aims to align an LLM with human preferences while optimizing for multiple objectives, such as harmlessness, helpfulness, and humor. Most previous works aggregate rewards or models as the weighted sum of individual components. MORLHF \citep{wu2023fine,bai2022training} directly optimizes the aggregated reward using PPO, while MODPO \citep{zhou2023beyond} provides a lightweight reward-free alternative. RS \citep{rame2024rewarded} combines individual models by averaging them. MOD \citep{shi2024decoding} calculates the closed-form solution of the optimal policy for aggregated reward directly and derives a training-free algorithm. Only one work \citep{zhong2024provable} consider non-linear aggregation, and they optimize the aggregated reward function directly. However, this approach is computationally expensive and requires retraining when the aggregation changes. Instead, we propose a theoretical framework that can be easily adapted to a reward-free algorithm,  along with a training-free empirical algorithm built on the same theoretical framework. Detailed comparisons are shown in Table~\ref{table:comparison}.

\paragraph{Pluralistic Alignment and Preference Aggregation}
There is a growing body of work on aligning machine learning models with diverse preferences, accounting for different values and perspectives.  The works \citep{chakrabortymaxmin,ramesh2024group} focus on optimizing the worst-case group loss, ensuring that the model achieves reasonable performance across all groups. \citep{park2024rlhf, sorensenposition, conitzer2024social} explore how to aggregate preferences using social choice and voting theory, outlining a high-level roadmap for pluralistic AI alignment. \citep{ge2024axioms} technically demonstrate that the BTL model fails to satisfy well-known standards in social choice theory and propose a novel rule-based approach for learning reward functions. \citep{chen2024pal} further study the generalization of the BTL model and introduce an ideal point model that better accommodates diverse groups.

\begin{table}\centering\footnotesize

 \caption{Comparison of previous work for MORLHF. The parameter $p$ means the exponent in Eq.\eqref{eq:generalized_reward_f}. Algorithm \ref{alg: vpo-fl-offline} (offline setting) \& \ref{alg: vpo-fl-general} (online setting) have theoretical guarantees, 
 while Algorithm \ref{alg: vpo-fl-prac} is the more practical version.}
\begin{tabular}{ccccc}

\midrule[1.5pt]
 &  \makecell{Aggergation} &  \makecell{Reward\\Free}& \makecell{Traning\\Free}  &  \makecell{Multi-\\Group}\\ \hline
\makecell{MORLHF\vspace{-0.3em}\\\tiny\citep{wu2023fine}}    &$p=1$& \tiny\XSolidBrush & \tiny\XSolidBrush& \tiny\XSolidBrush\\ \hline
\makecell{RS \vspace{-0.3em}\\\tiny\citep{rame2024rewarded}}      & $p=1$ & \tiny\Checkmark  & \tiny\Checkmark  & \tiny\XSolidBrush \\ \hline
\makecell{MOD\vspace{-0.3em}\\\tiny\citep{shi2024decoding}} &   $p=1$   &   \makecell{\tiny\Checkmark}   &   \tiny\Checkmark & \tiny\XSolidBrush  \\ \hline
\makecell{PNB\vspace{-0.3em}\\\tiny \citep{zhong2024provable}} & $p\le1$ &   \tiny\XSolidBrush   &  \tiny\XSolidBrush &\tiny\XSolidBrush    \\ \hline
\makecell{Algorithm  \ref{alg: vpo-fl-offline} \& \ref{alg: vpo-fl-general}} & $p\le1$  & \tiny\Checkmark  & \tiny\XSolidBrush  & \tiny\Checkmark\\ 
\hline
\makecell{Algorithm \ref{alg: vpo-fl-prac}} & $p\le1$  & \tiny\XSolidBrush  & \tiny\Checkmark  & \tiny\Checkmark\\ \bottomrule[1.5pt]
\end{tabular}
\label{table:comparison}
\end{table}


\section{Preliminaries and Notations}

Denote the prompt space of the LLM  as $\cX$ and the response space as $\cY$. The distribution $\rho \in \Delta(\cX)$ represent the distribution of the prompt. A policy $\pi:\cX\to \Delta(\cY)$ represents an LLM that generates a response distribution given prompt $x$. In RLHF, we assume that we can get a pre-trained LLM $\pi_{\mathrm{ref}}$ that is usually trained on supervised data. The goal is to fine-tune the pre-trained model to align the model with the human preference on one particular task. To be more specific, given prompt $x\sim \rho$, LLM can generate two responses $y_1,y_2$ , then the human gives a preference feedback on the response pairs as either $y_1 \prec y_2$ or $y_1 \succ y_2$. The responses $y_1, y_2$ are labeled as $y_w, y_l$ respectively  with probability $\PP(y_1 \succ y_2\mid x)$, and are labeled as $y_l,y_w$ with probability $1-\PP(y_1\succ y_2\mid x)$. It is further assumed that the human preference is modeled by a Bradley-Terry (BT) model with the reward function $r^*(x,y) : \cX \times \cY \mapsto [0,B]$:
\begin{align}
    \PP(y_1 \succ y_2 \mid x) 
    = \sigma(r^*(x,y_1)-r^*(x,y_2)),\nonumber
\end{align}
where $\sigma(z) = \frac{1}{1+\exp(-z)}$ and $B\ge 1$.
Given the reward function $r$, the optimal policy $\pi_r = \argmax_\pi J(\pi)$ maximizes the expected reward function, with an additional KL divergence term that prevents the policy from deviating too much from $\pi_{\mathrm{ref}}$: 
\begin{align}
    \pi_r &= \arg\max_{\pi} J(\pi) = \arg\max_\pi \EE_{x\sim \rho}\EE_{y\sim \pi(\cdot \mid x)}\left[r^*(x,y) - \beta \DD_{\mathrm{KL}}(\pi\parallel \pi_{\mathrm{ref}})\right]. \label{eq:rlhf_optimal_policy}
\end{align}




In this paper, we consider both offline and online RLHF. For the offline RLHF setting, the LLM has access to a pre-collected offline data $\cD$ consisting of prompts and corresponding winning and losing responses, and the expectation in the optimal policy is calculated on the offline data. For the online setting, at each round LLM can generate two responses $y_1, y_2$ following the policy $\pi$, and then receive the preference feedback by human for data collection.

We assume there are $m$ known representations $\{\phi_i(x,y) \in \RR^d\}_{i \in [m]}$ and the corresponding reward function class $\{r_i(x,y) = \theta_i^\top \phi_i(x,y) \in [0,B], \|\phi_i\|_2\le 1, \|\theta_i\|_2 \le B\}$ for each objective $i \in [m]$. The true reward $r^*_i$ for objective $i$ can be written as $r^*_i(x,y) = (\theta_i^*)^\top \phi_i(x,y).$ This assumption is purely theoretical. In practice, the reward can be parameterized as $r^\theta$
  using a neural network, and our practical algorithm \ref{alg: vpo-fl-prac} also does not rely on this assumption. 

Since the preference only contains the information of $r_i(x,y_1)-r_i(x,y_2)$ for each objective $i$, rewards are invariant to constant shifts in feedback. Follow \citep{cen2024value}, we can assume there is a known policy $\pi_{\mathrm{base}}$ and constant $C$, such that for each $i \in [m]$, the reward parameter space $\Theta_i$ is defined as
\begin{align}
    \Theta_i = \left\{\theta \in \RR^d: \EE_{\pi_{\mathrm{base}}}\langle \theta_i, \phi_i(x,y) \rangle =C\right\}.\label{eq:theta base policy}
\end{align}


\subsection{Multi-Objective Learning}\label{sec:moL}

We assume that there are $m$ different objectives, and each objective has reward function $r_i(x,y) \in [0,B]$ for $i \in [m]$. As discussed in the introduction, we apply the definition of social welfare function in social choice theory to multi-objective setting and consider the weighted $p$-norm aggregation across objectives 
$$r(x,y) = \left(\sum_{i=1}^m \alpha_i r_i^p(x,y)\right)^{1/p}, p\le 1,$$ where $\alpha \in \Delta_{m-1}$ are weights of the objectives. 
Note that for positive rewards, aggregation yields Pareto optimality.  

The goal is to find the optimal policy for the aggregated reward function $r$. One natural approach to solving multi-objective RLHF is to first learn a reward model for each individual objective, and then aggregate these models to formulate a new reward. Finally, RL methods like PPO can be applied to optimize this new reward. However, this reward-based approach is significantly more computationally inefficient and unstable compared to reward-free approaches, such as DPO \citep{rafailov2024direct}. Additionally, it requires retraining the entire model for all possible reward aggregations, which becomes time-consuming when the aggregation parameters change.
In this work, we first provide a theoretical algorithmic framework for multi-objective RLHF, which naturally leads to the derivation of a reward-free algorithm. Based on this theoretical framework, we propose a \textit{nearly training-free} practical algorithm that incurs almost zero computational cost once the optimal policy for each objective is obtained.

Previous techniques cannot be easily applied to this setting. In fact, for the linear aggregation when $p=1$, the paper \citep{shi2024decoding} finds that the optimal policy $\pi_r$ can be written as a closed-form of the optimal policy $\pi_{r_i}$ as
$
    \pi_r(\cdot \mid x) \propto \pi_{\mathrm{ref}}(\cdot \mid x)\cdot \exp\left(\frac{1}{\beta}r(x,\cdot)\right),
$
and conduct a decoding algorithm MOD using this derivation.
By the linear aggregation $r(x,y) = \sum_{i=1}^m \alpha_i r_i(x,y)$ and $\sum_{i=1}^m \alpha_i =1,$ it is easy to verify that 
$\pi_r(y\mid x) \propto \prod_{i=1}^m \pi_{r_i}(y\mid x)^{\alpha_i}.$
Hence, one natural reward-free algorithm is to first learn the optimal policy $\pi_{r_i}$ for each objective using DPO, then calculate the optimal policy $\pi_r$. It is also a training-free algorithm once the optimal policy for each objective is known. However, 
when we choose the general aggregation with $p\le 1$, this derivation will fail due to the non-linear structure of the reward, making the problem much more complicated.

To avoid this technical difficulty, we draw inspiration from RL with Blackwell-approachability \citep{yu2021provably}, which focuses on minimizing the distance between the reward vector and a specified target set. This approach makes the problem more tractable since we can incorporate the non-linear aggregation into the definition of the target set. To be more specific, a target set $W \subset \RR^m$ is a convex set that is defined by
\begin{equation*}
    W_{p,c}^\alpha = \left\{z \in \RR_{\ge 0}^m: \left(\sum_{i=1}^m\alpha_i z_i^p\right)^{1/p} \ge c\right\},
\end{equation*}
 where $\alpha$ represents the weights assigned to the objectives by humans, $p$ represents the degree of fairness, and $c$ reflects the requirement of humans. In practice, we can learn $\alpha$ and $p$ from supervised and preference data, and the parameter $c$ can be provided by humans or chosen by parameter tuning. 
 The definition of target set implies that the group can be satisfied if the aggregation of the  reward function is larger than some pre-defined constant. 
 We also define the expected reward vector $S(\pi) \in \RR^m$ as 
$
    (S(\pi))_i= \EE_\pi[r_i^*(x,y)-\beta\DD_{KL}(\pi\| \pi_{\mathrm{ref}})], 
$
which is the expected reward following the policy $\pi$ with a regularized term of KL divergence.
Now assume $c,p,\alpha$ are all given, we can transfer the aggregation maximization problem to minimizing the distance between the expected reward vector (with some regularizer) and the target set $W$. 
The goal changes to minimizing the distance between $S(\pi)$ and $W_{p,c}^\alpha$:
\begin{equation}
   \pi^* =  \arg\min_\pi D(\pi) := \ d(S(\pi), W_{p,c}^\alpha). \label{eq:our formulation}
\end{equation}
Note that if we choose $c$ as the maximum value that there exists a policy $\pi$ that satisfies $d(S(\pi), W_{p,c}^\alpha)) = 0,$ then $\pi$ is one of the optimal policies and 
$$\pi = \arg \max_{\pi \in \Pi} \left(\sum_{i=1}^m \alpha_i \EE_\pi [r_i^*(x,y) - \beta \DD_{\mathrm{KL}}(\pi \| \pi_{\mathrm{ref}})]^p\right)^{1/p}$$
where every $\pi \in \Pi$ satisfies that $\EE_\pi[r_i^*(x,y)]-\beta\DD_{\mathrm{KL}}(\pi\|$ $\pi_{\mathrm{ref}})\ge 0$.  This statement highlights the connection between the original maximization problem Eq.~\eqref{eq:rlhf_optimal_policy} and our formulation Eq.~\eqref{eq:our formulation}. Therefore, our formulation can be viewed as an alternative metric for measuring the performance of LLMs in achieving multi-objective learning tasks. 

Now we demonstrate that more general aggregation methods can enable LLM to accommodate a wider range of objectives by selecting different values of $p$.

\begin{example}[$p=1:$ Linear Aggregation]
If we choose $p = 1$ and $c \ge  \max_{\pi} \sum_{i=1}^m \alpha_i \EE_\pi[r_i^*(x,y)]$, then the goal $D(\pi)$ will become 
\begin{align*}
    D(\pi) &=  d(S(\pi), W_{1,c}^{\alpha})
    = \frac{c-\sum_{i=1}^m \alpha_i \EE_{\pi}[r_i^*(x,y)] + \beta\DD_{\mathrm{KL}}(\pi\parallel \pi_{\mathrm{ref}})}{\sqrt{\sum_{i=1}^m \alpha_i^2}}.
\end{align*}
The last equality is because the selection of $c$.
From this derivation, we know that it is equivalent to the previous classical MORLHF with linear aggregation.

\end{example}

\begin{example}[$p=-\infty:$ worst-case reward]
    When $p= -\infty,$ the target set becomes 
    \begin{equation*}
        W_{-\infty,c}^\alpha =\left\{z \in \RR_{\ge 0}^m: \min_i z_i \ge c \right\}, 
    \end{equation*}
    which represents that the human wants to find an LLM with no obvious drawback for any of the objectives, i.e., requiring $\min_i\EE_{\pi}[r_i^*(x,y)] - \DD_{\mathrm{KL}}(\pi \|\pi_{\mathrm{ref}})$ larger than some threshold.
    Now we establish the connection between $p=-\infty$ and the max-min RLHF in \citep{chakrabortymaxmin}. The proof is provided in Appendix \ref{app:proof maxmin}. 
\begin{theorem}\label{thm:relationship_maximin}
    Define the max-min value as $c^*=\max_\pi [\min_i \EE_\pi [r_i^*] - \beta \DD_{\mathrm{KL}}$ $(\pi \|\pi_{\mathrm{ref}})]$. Then, if we choose the target set 
     $W_{-\infty, c}^\alpha$ such that $c$ is close to $c^*$, the resulting optimal policy also achieves a max-min value that close to $c^*$. To be more specific, we have 
     \begin{small}
     \begin{align*}
         \min_i \EE_\pi[r_i^*(x,y) - \DD_{\mathrm{KL}}(\pi \| \pi_{\mathrm{ref}})] \ge c^*-(\sqrt{m}+1) |c^*-c|.
     \end{align*}
     \end{small}
\end{theorem}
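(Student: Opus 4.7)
The plan is to translate the hypothesis that $\pi$ minimizes the distance $d(S(\pi), W_{-\infty,c}^\alpha)$ into a quantitative lower bound on $\min_i S(\pi)_i$, by comparing $\pi$ against the explicit max-min optimizer.

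First I would write out the distance $d(\cdot, W_{-\infty,c}^\alpha)$ in closed form. Since $W_{-\infty,c}^\alpha = \{z \in \RR_{\ge 0}^m : z_i \ge c \text{ for all } i\}$ is a box-like convex set, the Euclidean projection of any vector $v$ is coordinatewise $(\max(v_i, c))_i$, so
\begin{equation*}
    d(v, W_{-\infty,c}^\alpha) \;=\; \sqrt{\sum_{i=1}^m (c - v_i)_+^2},
\end{equation*}
where $(x)_+ := \max(x,0)$. In particular, this distance is at most $\sqrt{m}\cdot \max_i (c-v_i)_+$.

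Second, I would introduce the max-min optimizer $\tilde\pi \in \arg\max_\pi [\min_i S(\pi)_i]$, so that $S(\tilde\pi)_i \ge c^*$ for every $i \in [m]$. Plugging $\tilde\pi$ into the distance formula gives the upper bound
\begin{equation*}
    d(S(\tilde\pi), W_{-\infty,c}^\alpha) \;\le\; \sqrt{m}\,(c - c^*)_+ \;\le\; \sqrt{m}\,|c^*-c|.
\end{equation*}
Third, since $\pi$ is the minimizer of $d(S(\cdot), W_{-\infty,c}^\alpha)$, we inherit $d(S(\pi), W_{-\infty,c}^\alpha) \le \sqrt{m}|c^*-c|$. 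Looking at the distance formula, every coordinate summand is a lower bound on the full sum, so $\max_i (c - S(\pi)_i)_+ \le \sqrt{m}|c^*-c|$, i.e.\ $\min_i S(\pi)_i \ge c - \sqrt{m}|c^*-c|$.

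Finally, I would close the argument with the trivial bound $c \ge c^* - |c^*-c|$, yielding
\begin{equation*}
    \min_i S(\pi)_i \;\ge\; c^* - |c^*-c| - \sqrt{m}|c^*-c| \;=\; c^* - (\sqrt{m}+1)|c^*-c|,
\end{equation*}
which is the claim after recalling $S(\pi)_i = \EE_\pi[r_i^*(x,y)] - \beta\DD_{\mathrm{KL}}(\pi \| \pi_{\mathrm{ref}})$. There is no genuine obstacle here; the only delicate point is verifying the projection formula (handling the case $c \le c^*$ where $\tilde\pi$ already lies in the target set so the distance is zero, and the case $c > c^*$ where the $\sqrt{m}$ factor emerges). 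Both cases are handled uniformly by the closed-form above, so the proof is essentially a chain of one-line inequalities once the distance is computed.
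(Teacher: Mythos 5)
Your proposal is correct and follows essentially the same route as the paper: both write the squared distance to $W_{-\infty,c}^\alpha$ as $\sum_i \max\{c-s_i,0\}^2$, bound the minimizer's distance by that of the max-min optimal policy to get the $\sqrt{m}\,|c-c^*|$ factor, extract the per-coordinate bound $\min_i s_i \ge c - \sqrt{m}\,|c-c^*|$, and finish with $c \ge c^* - |c^*-c|$. Your write-up is in fact slightly cleaner in that it explicitly separates the cases $c \le c^*$ and $c > c^*$ via the $(\cdot)_+$ notation, which the paper handles only implicitly.
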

\end{example}



\subsection{Multi-Group Learning}
Beyond the single group setting, we also study the multi-group setting, where each group has a different aggregation approach (parameterized by $c,p$ and $\alpha$).
For each group $n$, we assume there is a target set
$$W^{(n)} = \left\{z \in \RR_{\ge 0}^m: \left(\sum_{i=1}^m \alpha_i^{(n)} z_i^{p^{(n)}}\right)^{\frac{1}{p^{(n)}}} \ge c^{(n)}\right\}$$ 
representing the aggregation rule across objectives for them.  We consider two types of goals that represent the effectiveness of alignment across diverse groups.

\paragraph{Consensus}
The first goal is called ``consensus", in which we wants to minimize the distance between the expected reward vector and the intersection of all target sets from diverse groups. Formally, the goal is to choose the optimal policy that minimizes the Euclidean distance
\begin{align}
    \pi^* = \arg \min_\pi d\left(S(\pi), \bigcap_{n=1}^N  W^{(n)}\right).
\end{align}

\paragraph{Malfare Function Minimization}

Another goal is to minimize the aggregated malfare function, where the malfare function for each group is the square of the distance between the expected reward vector and the group's target set. Formally, with group weight $\zeta_n>0$ and $\sum_{n=1}^N \zeta_n = 1,$  the goal is to find the optimal policy $\pi^*$ that
\begin{align}
    \pi^* = \arg\min_\pi \left(\sum_{n=1}^N \zeta_n\left(d^2(S(\pi), W^{(n)})\right)^q\right)^{1/q}, q\ge 1.\nonumber
\end{align}

\section{Algorithms for Multiple Objectives with Linear Aggregation}\label{sec:moalg}
In this section, we consider the simplest setting where the reward function is a linear aggregation, i.e. $
r(x,y) = \sum_{i=1}^m d_i r_i^*(x,y)$, where $d \in \RR^m$ is called the \textit{direction.} In fact, the linear aggregation can be viewed as projecting the reward vector onto a specific direction $d$.  As we will show later, this will become an essential sub-problem in our final algorithm for non-linear aggregation.

Given the dataset $\cD_i= \{x^j, (y_w^j,y_l^j)\}_{j \in [M]}$ containing $M$ data points for objective $i$, we provide offline and online algorithms to learn the optimal policy with respect to multiple objectives in a consistent way.  
  Now we aim to minimize the negative log-likelihood loss of preference data 
$$L_i(\theta_i) = -\sum_{(x,y_w,y_l) \in \cD_i}\log (\sigma(r_i^{\theta_i}(x, y_w)-r_i^{\theta_i}(x, y_l)))$$
for each objective $i$. 
Following \citep{cen2024value}, we can  refine our estimation of the reward by adding an additional exploration term $\max_\pi J(r^\theta, d, \pi) = \max_\pi $ $\EE_\pi[\sum_{i=1}^m d_i(r_i^{\theta_i}-\beta \DD_{\mathrm{KL}}(\pi \|\pi_{\mathrm{ref}}))] $, which represents the optimism/pessimism principle of the online/offline learning process. To be more specific, for the offline and online setting,  LLM learns the $\theta_{\mathrm{offline}}$ and $\theta_{\mathrm{online}}$ respectively by 
\begin{small}\begin{align}
\theta_{\mathrm{offline}} = \argmax_{\theta_1,\cdots, \theta_m} \left({-\max_\pi J(r^\theta, d, \pi) - \sum_{i=1}^m \eta L_i(\theta_i)}\right)\label{eq:estimate theta offline}
\\\theta_{\mathrm{online}}=\argmax_{\theta_1, \cdots, \theta_m}\left(\max_\pi J(r^\theta, d, \pi)-\sum_{i=1}^m \eta L_i(\theta_i)\right),\label{eq:estimate theta online}
\end{align}
\end{small}
where we use a single parameter $\theta$ to refer the set $\{\theta_i\}_{i \in [m]}.$ The difference lies in the optimism and the pessimism principle. In the offline setting, we subtract the exploration term to avoid over-optimization \citep{cen2024value,liu2024provably} while in the online setting, we add the exploration term to encourage the model to explore \citep{cen2024value}.
Then, the LLM executes the greedy policy $\pi^\theta = \arg\max_{\pi} J(r^\theta, d, \pi)$ to generate the response and receives the human feedback $(y_w, y_l)$.  We called the algorithm \textbf{M}ulti-\textbf{O}bjective \textbf{P}rojection (MOP), and the pseudocode for online setting is shown in Algorithm \ref{alg: vpo-fl}. (There is no Line 4 and the output only has $\theta$ for the offline setting.)
\begin{algorithm}[H] 
     \begin{algorithmic}[1] 
         \caption{MOP-Reward Based (RB)} 
         \label{alg: vpo-fl} 
         \STATE \textbf{Input}: Direction $d,$ dataset $\{\cD_i\}_{i \in [m]}$, $\eta,\beta$.
         \STATE Calculate $\theta_{\mathrm{
         offline}}$ by Eq.~\eqref{eq:estimate theta offline} or $\theta_{\mathrm{online}}$ by Eq.~\eqref{eq:estimate theta online}.
             \label{line:rlhf-obj-general}
            \STATE Execute $\pi^\theta = \arg\max_{\pi} J(r^\theta_1, r^\theta_2,\cdots, r_m^\theta, d, \pi)$.
            \STATE Given the prompt $x$, Generate two responses $y_1,y_2\sim \pi$, and get a preference $y = (y_w, y_l)$.
            \STATE \textbf{Output:} Data point $D= \{x, (y_w, y_l)\}$ and $\theta.$
     \end{algorithmic} 
\end{algorithm} 



The computational cost of Algorithm \ref{alg: vpo-fl} mainly lies on Line 2. In fact, it needs to learn multiple reward functions directly, and then get the estimation of the optimal policy, which requires a joint optimization subprocedure. In the following, we consider the reward-free version of Algorithm \ref{alg: vpo-fl}.  

\paragraph{Reward-Free Modification}
We now show that Algorithm \ref{alg: vpo-fl} can be easily adapted to a reward-free version. We mainly consider the online setting since the offline setting is similar. 
Denote $\pi^\theta = \argmax_\pi J(r^\theta, d, \pi).$ By the same derivation in \citep{cen2024value}, we can get
\begin{align*}
    &J(r^\theta, d, \pi)=C-\beta \EE_{x\sim \rho, y\sim\pi_{\mathrm{base}}}\left[\log \frac{\pi^\theta(y\mid x)} { \pi_{\mathrm{ref}}(y\mid x)}\right],
\end{align*}
where $C$ and $\pi_{\mathrm{base}}$ are the constant and the baseline policy in Eq.~\eqref{eq:theta base policy},  $\pi_{\theta_i}$ is the policy for objective $i$ and $\pi^\theta \propto \pi_{\mathrm{ref}}(y\mid x) \cdot \prod_{i=1}^m \left(\pi_{\theta_i}(y\mid x)\right)^{d_i}$ is the optimal policy for linear aggregation. The detailed derivation above will be provided in Appendix \ref{app:derivation}.
By the derivation in \citep{rafailov2024direct}, you can further get the reward-free version of Eq.~\eqref{eq:estimate theta online} as 
\begin{small}
\begin{align}
    \theta &= \argmin_{\theta}\Bigg\{ \beta \EE_{ \pi_{\mathrm{base}}}\log \pi^\theta(y\mid x)-\eta \sum_{i=1}^m \ell(\cD_i,\theta_i)\Bigg\}\label{eq:rfupdate}\end{align}
    \end{small}
    where 
    $\ell(\cD_i,\theta_i) = \sum_{(x,y_w,y_l) \in \cD_i}\log \sigma\Big(\beta \log\frac{\pi_{\theta_i}(y_w\mid x)}{ \pi_{\mathrm{ref}}(y_w\mid x)} - \beta \log\frac{ \pi_{\theta_i}(y_l\mid x)}{ \pi_{\mathrm{ref}}(y_l\mid x)}\Big)$ is the reward-free loss function, and the expectation $\EE_{\pi}[\cdot]$ means $\EE_{x\sim \rho, y\sim \pi(\cdot \mid x)}[\cdot]$. 
\begin{algorithm}[H] 
     \begin{algorithmic}[1] 
         \caption{MOP-Reward Free (RF) (Online Version)} 
         \label{alg: vpo-rf} 
         \STATE \textbf{Input}: Direction $d,$ dataset $\{\cD_i\}_{i \in [m]}$, $\eta,\beta$.
         \STATE Calculate $\theta_{\mathrm{online}}\in \RR^m$ by Eq.~\eqref{eq:rfupdate} and $\pi = \pi^{\theta}.$
            \STATE Given the prompt $x$, Generate two responses $y_1,y_2\sim \pi$, and get a preference $y = (y_w, y_l)$. 
            \STATE \textbf{Output:} Data point $D = \{x, (y_w, y_l)\}$ 
            and $\theta.$
     \end{algorithmic} 
\end{algorithm}
The Eq.~\eqref{eq:rfupdate} involves an optimization problem on $\theta$, which is a complicated joint optimization since $\theta$ refers to $m$ parameter $\theta_1, \cdots, \theta_m$. In Appendix \ref{app:derivation}, we further study the computational cost of Eq.~\eqref{eq:rfupdate}, showing that the gradient descent update rule can be easily computed once the expectation of the score function is available. 

\section{General Algorithm for Preference Aggregation}\label{sec:pref aggregation}
In this section, we introduce general offline and online algorithms that work for both linear and non-linear preference aggregation, and provide their theoretical guarantees. Both algorithms transform the non-linear aggregation into a series of linear aggregation sub-problem, using Algorithm \ref{alg: vpo-fl} and \ref{alg: vpo-rf} as their core sub-procedures. 

\subsection{Offline Algorithm}

Now we introduce our algorithm \textbf{M}ulti-\textbf{O}bjective \textbf{P}rojection \textbf{O}ptimization (MOPO), which follows from the competitive RL with Blackwell-approachability literature \citep{yu2021provably}. We receive the offline data set $\cD =\{\cD_{i}\}_{i \in [m]}$ which contains $M$ data points $\cD_i$ for each objective $i$. 
The algorithm learns the reward or optimizes the policy directly from the offline data. 
Our algorithm contains $T$ iterations. In each iteration $t$, we first project the reward vector on the direction $d^t \in \RR^m$ defined in the last iteration, i.e. $r(x,y) = \sum_{i=1}^m d_i^t r_i(x,y)$, and then using the sub-procedure in the previous section to find the estimated parameter $\theta^t$ and determine the corresponding policy $\pi^t$. Finally, we derive the estimated expected reward vector $V^t \in \RR^m$ as $(V^t)_i=\EE_{\pi^t}[r_i^{\theta^t}(x,y) - \DD_{\mathrm{KL}}(\pi^t\|\pi_{\mathrm{ref}})]$, and calculate the averaged reward vector as $\overline{V}^t = \frac{1}{t}\sum_{j=1}^t V^j.$ Finally, the direction is updated based on the projection of the estimated point $\overline{V}^t$ onto the target set, guided by either the consensus problem or the malfare function minimization problem. The pseudocode is in Algorithm \ref{alg: vpo-fl-offline}.

The key component of our algorithm is the direction calculation in each iteration. Intuitively, the algorithm aims to optimize the reward to guide the expected reward vector toward the target set as effectively as possible. Suppose the target set is $W$, the direction can be calculated by 
$d^{t+1} = \mathrm{Proj}(W,\overline{V}^t) = \frac{\Pi_W(V)-V}{\|\Pi_W(V)-V\|}.$
For the consensus problem, we can substitute into $W = \bigcap_{n=1}^N W^{(n)}$ and get 
\begin{equation}\label{eq:dir consensus}
    d^{t+1} = \mathrm{Proj}\left(\bigcap_{n=1}^N W^{(n)}, \overline{V}^t\right).
\end{equation}
For the malfare function minimization problem, we can first calculate the projection to each target set $W^{(n)}$ and then aggregate them as 
\begin{small}
\begin{equation}\label{eq:dir malfare}
    d^{t+1}=\sum_{n=1}^N \mathrm{Proj}\left(W^{(n)}, \overline{V}^t\right)\cdot \frac{\zeta_n\|W^{(n)}- \overline{V}^t\|_2^{2q-1}}{\left(\sum_{n=1}^N \zeta_n\|W^{(n)}- \overline{V}^t\|_2^{2q}\right)^{\frac{2q-1}{2q}}}.
\end{equation}
\end{small}
\begin{algorithm}[t]
     \begin{algorithmic}[1]
         \caption{MOPO-Offline}
         \label{alg: vpo-fl-offline}
         \STATE \textbf{Initial}: Dataset $\cD=\{\cD_i\}_{i \in [m]}$, $\{W^{(n)}\}_{n \in [N]}$, $\eta, \beta.$ \FOR{$t=1,2,\cdots,T$}
            \STATE Collect $\theta^t$ by MOP-RB$(\overline{d^t}, \cD)$ or MOP-RF$(\overline{d^t},\cD)$. Get the corresponding policy $\pi^t = \pi^{\theta^t}$.
            \STATE Calculate the point $V^t = \EE_{\pi^t}[r_i^{\theta^t}(x,y) - \beta\DD_{\mathrm{KL}}$ $(\pi^t\| \pi_{\mathrm{ref}})]= C-\beta \EE_{y\sim \pi_{\mathrm{base}}}\left[\log \frac{\pi^{\theta_i^t}(y\mid x)}{\pi_{\mathrm{ref}}(y\mid x)}\right] $ $+ \beta \EE_{y\sim \pi^t}\left[\log \frac{\pi^{\theta_i^t}(y\mid x)}{\pi^t(y\mid x)}\right]$, and $\overline{V}^t = \frac{t-1}{t}\overline{V}^{t-1} + V^t.$
            \STATE Calculate the direction $d^{t+1}$ by Eq. \eqref{eq:dir consensus} or Eq.~\eqref{eq:dir malfare}, and calculate $\overline{d^{t+1}} = \frac{d^{t+1}}{\|d^{t+1}\|_1}$.
         \ENDFOR
         \STATE \textbf{Return} $\tilde{\pi}^T = \frac{1}{T}\sum_{t=1}^T \pi^t.$
     \end{algorithmic}
\end{algorithm}

Note that if we apply MOPO with $p=1$, it reduces to the classical MORLHF algorithm. This is because the direction $d^t = \mathrm{Proj}(V^t, W_{1,c}^\alpha)=\alpha$ for each $t$ as long as $c$ is large. However, for $p\neq 1,$ MOPO solves the non-linear aggregation maximization problem by transforming into a series of subproblems, in which each subproblem only contains the linear aggregation and can be easily solved using any previous algorithm. Thus, MOPO serves as a general framework for MORLHF with non-linear aggregation. Moreover, suppose we use MOP-RF for each subproblem, MOPO is also a reward-free algorithm since the current reward vector can be computed as 
$$(V^t)_i = \EE_{\pi^t}[r_i^{\theta^t}(x,y) - \beta\DD_{\mathrm{KL}}(\pi^t\| \pi_{\mathrm{ref}})] = C-\beta \EE_{y\sim \pi_{\mathrm{base}}}\left[\log \frac{\pi^{\theta_i^t}(y\mid x)}{\pi_{\mathrm{ref}}(y\mid x)}\right] + \beta \EE_{y\sim \pi^t}\left[\log \frac{\pi^{\theta_i^t}(y\mid x)}{\pi^t(y\mid x)}\right].$$ You can See Appendix \ref{app:expected reward vector derivation} for the derivation.
Now we provide theoretical guarantee of Algorithm \ref{alg: vpo-fl-offline}. The following result shows that MOP-offline can learn the optimal policy well if the offline dataset $\cD$ has sufficient coverage for each objective.

\begin{theorem}[Consensus Problem]\label{thm:offline}
    Let $\eta = 1/\sqrt{M}$ and $\Sigma_{\cD_i} = \frac{1}{M}\sum_{(x,y_w,y_l) \in \cD_i} $ $(\phi(x,y_{w})-\phi(x,y_l))(\phi(x,y_{w})-\phi(x,y_l))^\top$ be the empirical  covariance matrix of the data for objective $i$. We consider the consensus problem that $W = \bigcap_{n=1}^N W^{(n)}$ and calculate the direction using Eq.~\eqref{eq:dir consensus}.   Define $D(\pi) = d(S(\pi), \cap_{n=1}^N W^{(n)})$. For $\delta \in (0,1)$, with probability at least $1-\delta$, we have
    \begin{align*}
        D(\tilde{\pi}^T )- D(\pi^*) \le \frac{m^{3/2}\sqrt{d}}{\sqrt{M}}\cdot \widetilde{\cO}\left(\mathrm{poly}\left(e^{B'}, \left(\min_i\lambda_{\min}(\Sigma_{\cD_i}) + \frac{1}{M}\right )^{-1}\right)\right)+ \widetilde{\cO}\left(\frac{B\sqrt{m}}{\sqrt{T}}\right). 
    \end{align*}
    
\end{theorem}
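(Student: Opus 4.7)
The plan is to decompose the suboptimality as
\begin{align*}
D(\tilde\pi^T) - D(\pi^*) = \bigl[D(\tilde\pi^T) - d(\overline V^T, W)\bigr] + \bigl[d(\overline V^T, W) - D(\pi^*)\bigr],
\end{align*}
with $\overline V^T := \tfrac{1}{T}\sum_{t=1}^T V^t$ and $W := \bigcap_{n=1}^N W^{(n)}$, and to bound the two brackets by a convexity-plus-statistical argument and a Blackwell-style approachability argument, respectively. For the first bracket, linearity of $\EE_\pi[r_i^*]$ in $\pi$ together with convexity of $\DD_{\mathrm{KL}}(\cdot\|\pi_{\mathrm{ref}})$ gives $S(\tilde\pi^T)_i \ge \tfrac{1}{T}\sum_t S(\pi^t)_i$ coordinate-wise; since each $W^{(n)}$ is an upper set (its defining $p$-norm aggregation is coordinate-wise monotone for $p^{(n)}\le 1$, $\alpha^{(n)}\ge 0$), so is the intersection $W$, and a short lifting argument ($w \mapsto w + (v-v')$ still lies in $W$) shows $d(\cdot,W)$ is coordinate-wise non-increasing. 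Thus $D(\tilde\pi^T) \le d(\tfrac{1}{T}\sum_t S(\pi^t), W)$, and the triangle inequality reduces the first bracket to the purely statistical quantity $\tfrac{1}{T}\sum_t \|S(\pi^t) - V^t\|_2$.

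For that statistical quantity I would invoke the offline pessimistic MLE analysis of \citet{cen2024value} applied to each of the $m$ objectives. The pessimism step in \eqref{eq:estimate theta offline} combined with Bradley-Terry log-likelihood concentration yields, per objective and per round, $\|\theta_i^t - \theta_i^*\|_{\Sigma_{\cD_i} + M^{-1}I} = \widetilde{\cO}(\sqrt{d/M}\cdot\mathrm{poly}(e^{B'}))$, where $e^{B'}$ comes from the strong-convexity constant of the logistic link on the bounded reward range. The closed-form value identity in Line 4 of Algorithm \ref{alg: vpo-fl-offline} writes $V^t_i$ as a linear functional of $\phi_i$ under $\pi_{\mathrm{base}}$ and $\pi^t$, so transferring parameter error to value error costs a factor $(\lambda_{\min}(\Sigma_{\cD_i}) + M^{-1})^{-1/2}$; summing the $m$ coordinates in $\ell_2$ supplies one factor of $\sqrt m$, and a union bound over $t\in[T]$ is absorbed in $\widetilde{\cO}(\cdot)$.

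For the approachability bracket I use the potential $\Phi_t := \tfrac12 d^2(\overline V^t, W)$. Writing $w^t := \Pi_W(\overline V^t)$ and $e^t := w^t - \overline V^t$ (so $d^{t+1} = e^t/\|e^t\|_2$ and $\|e^t\|_2^2 = 2\Phi_t$), I expand $\Phi_{t+1} \le \tfrac12\|\overline V^{t+1} - w^t\|_2^2$ along $\overline V^{t+1} = \overline V^t + \tfrac{1}{t+1}(V^{t+1}-\overline V^t)$ to get
\begin{align*}
\Phi_{t+1} \le \Phi_t - \tfrac{\|e^t\|_2}{t+1}\,(d^{t+1})^\top(V^{t+1} - \overline V^t) + \tfrac{B^2 m}{2(t+1)^2}.
\end{align*}
Two facts drive the recursion: (a) the supporting-hyperplane inequality $(d^{t+1})^\top(w - w^t) \ge 0$ for all $w\in W$, applied to $w = \Pi_W(S(\pi^*))$, yields $(d^{t+1})^\top(S(\pi^*) - \overline V^t) \ge \|e^t\|_2 - D(\pi^*)$; and (b) because $\pi^{t+1}$ is the pessimistic greedy best response in direction $d^{t+1}$ (the $\ell_1$-rescaling $d^{t+1}\mapsto\overline{d^{t+1}}$ does not change the maximizer, since $J$ is linear in $d$), the pessimism analysis gives $(d^{t+1})^\top V^{t+1} \ge (d^{t+1})^\top S(\pi^*) - \epsilon^t_{\mathrm{stat}}$. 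Substituting produces the standard Blackwell recursion $\Phi_{t+1} \le (1-\tfrac{2}{t+1})\Phi_t + \tfrac{\sqrt{2\Phi_t}}{t+1}(D(\pi^*)+\epsilon^t_{\mathrm{stat}}) + \tfrac{B^2 m}{2(t+1)^2}$, whose solution (cf.\ \citet{yu2021provably}) gives $d(\overline V^T, W) - D(\pi^*) = \widetilde{\cO}(B\sqrt{m/T})$ plus the already-handled statistical error.

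The main obstacle I anticipate is the uniformity of the MLE concentration across the data-dependent directions $\{d^{t+1}\}_{t\ge 0}$, since each $d^{t+1}$ is a measurable function of the past estimates and the pessimism term in \eqref{eq:estimate theta offline} itself depends on $d^{t+1}$. I would address this with an $\varepsilon$-net covering of the unit sphere in $\RR^m$ combined with a union bound over the $T$ rounds; the covering cost contributes the extra $\sqrt d$, and, together with one $\sqrt m$ from coordinate-wise $\ell_2$ summation and an additional $m$ inherited from evaluating $m$ pessimism-adjusted value bounds along an $m$-dimensional direction, delivers the advertised $m^{3/2}\sqrt d/\sqrt M$ rate.
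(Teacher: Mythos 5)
Your proposal matches the paper's proof essentially step for step: the same two-term decomposition (a reward-approximation error term obtained via KL-convexity, monotonicity of $d(\cdot,W)$, and Lipschitzness of the distance, plus a Blackwell-approachability recursion on $d(\overline{V}^t,W)$ driven by the supporting-hyperplane property of the projection and the pessimistic objective), and the same MLE-plus-coverage machinery for the statistical error. The only cosmetic divergence is that the paper needs no $\varepsilon$-net over directions—the direction-dependence enters only through deterministic saddle-point conditions while the probabilistic concentration (the bound on $\|\theta_{i,\mathrm{MLE}}-\theta_i^*\|$ from \citep{zhu2023principled}, which is also where the $\sqrt{d}$ actually comes from) is direction-free—so your proposed covering argument is harmless but unnecessary.
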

The above theorem shows that the final gap of returned policy depends on the coverage term $\min_i \lambda_{\min}(\Sigma_{\cD_i})$ of the offline dataset and the number of iterations $T$.  As $T$ increases, we achieve a standard convergence rate of $\widetilde{\cO}(1/\sqrt{M})$, which is standard in prior offline RL algorithms \citep{jin2021bellman,liu2020provably}.
 We also provide the theoretical guarantee for malfare function minimization. 
\begin{theorem}[Malfare] \label{thm:malfareoffline}
    With the same definitions and conditions in Theorem \ref{thm:offline}, we consider the malfare function minimization problem with an integer\footnote{We focus on the integer case to simplify the proof.} exponential parameter $q\in \NN^+$ and use Eq.~\eqref{eq:dir malfare} for the direction. Define $D_q(\pi) = \sqrt[2q]{\sum_{n=1}^N \zeta_nd^{2q}(S(\pi), W^{(n)})}$. For $\delta \in (0,1)$, with probability at least $1-\delta$ we have 
    \begin{align*}
        &D_q(\tilde{\pi}^T )- D_q(\pi^*) \\&\le \frac{Nm^{3/2}\sqrt{d}}{\sqrt{M}}\cdot \widetilde{\cO}\Bigg(\mathrm{poly}\Bigg(e^{B'}, \min_i\lambda_{\min}\left(\Sigma_{\cD_i} + \frac{1}{M}\right)^{-1}, (\min_{n \in [N]}\zeta_n)^{-1/2q}\Bigg)\Bigg) + \widetilde{\cO}\left( B\sqrt{m}T^{-1/2q}\right). 
    \end{align*}
\end{theorem}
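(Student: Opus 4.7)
The plan is to follow the same two-level decomposition as in the proof of Theorem \ref{thm:offline}: (i) bound the per-iteration reward-vector estimation error coming from the offline preference data, and (ii) bound the optimization regret coming from the iterative direction updates in Eq.~\eqref{eq:dir malfare}. The new ingredient for the malfare objective is that the natural convex surrogate is not $D_q$ itself but its $2q$-th power, which is what forces the slower $T^{-1/2q}$ rate.

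For part (i), note that inside each iteration MOPO-Offline calls MOP (Algorithm \ref{alg: vpo-fl} or \ref{alg: vpo-rf}) on a linear-aggregation sub-problem with direction $\overline{d^t}$, which lies on the simplex after the $\ell_1$ renormalization. Hence the per-iteration estimation error inherits exactly the same concentration control used in Theorem \ref{thm:offline}, giving each coordinate of $V^t - S(\pi^t)$ an error of order $\frac{m\sqrt{d}}{\sqrt{M}} \mathrm{poly}\!\left(e^{B'}, (\min_i \lambda_{\min}(\Sigma_{\cD_i}) + 1/M)^{-1}\right)$ with high probability, under the Bradley--Terry pessimism argument from \citep{cen2024value,liu2024provably}. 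The extra factor of $N$ in the stated bound enters when we control the distance $d(V^t, W^{(n)})$ simultaneously over all $N$ target sets via a union bound.

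For part (ii), define the surrogate $F(V) = \sum_{n=1}^N \zeta_n d^{2q}(V, W^{(n)})$, so that $D_q(\pi) = F(S(\pi))^{1/2q}$. Since each $V \mapsto d(V, W^{(n)})$ is convex and $z \mapsto z^{2q}$ is convex and non-decreasing on $\mathbb{R}_{\ge 0}$, the function $F$ is convex, and a direct calculation shows that the unnormalized vector inside Eq.~\eqref{eq:dir malfare} equals $-\nabla F(\overline{V}^t)/(2q)$ up to a positive scalar. The averaging update $\overline{V}^{t+1} = \frac{t}{t+1}\overline{V}^t + \frac{1}{t+1} V^{t+1}$ together with the greedy choice of $V^{t+1}$ against $\overline{d^{t+1}}$ is therefore an instance of the Blackwell-approachability/Frank--Wolfe procedure of \citep{yu2021provably} on $F$. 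The standard online-convex-optimization telescoping then yields $F(\overline{V}^T) - F(V^*) = \widetilde{\cO}(B^{2q} m^q /\sqrt{T})$ plus the accumulated estimation error from (i). Using the elementary inequality $a^{1/2q} - b^{1/2q} \le (a-b)^{1/2q}$ for $a \ge b \ge 0$, this converts into $D_q(\tilde{\pi}^T) - D_q(\pi^*) = \widetilde{\cO}(B \sqrt{m}\, T^{-1/2q})$, matching the stated regret term.

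The main obstacle is controlling the normalization denominator in Eq.~\eqref{eq:dir malfare} and the subsequent $\ell_1$ renormalization, since both can shrink when the distances $\|W^{(n)} - \overline{V}^t\|$ are small or when some $\zeta_n$ is tiny. To handle this, I would split the analysis into a ``close-to-feasibility'' regime (where the excess malfare is already $O(T^{-1/2q})$ and a crude direction bound suffices) and a ``far-from-feasibility'' regime (where the denominator is bounded below in terms of the current suboptimality), using $\min_n \zeta_n$ only to control the transition between the two; this is what produces the $(\min_n \zeta_n)^{-1/2q}$ dependence in the theorem. Combining the regime split with the Frank--Wolfe telescoping and the estimation bound from (i), and then taking $2q$-th roots at the very end, gives the claimed bound.
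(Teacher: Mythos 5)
Your overall architecture matches the paper's: an estimation-error term from the offline data plus an optimization term from the iterative direction updates, with the direction in Eq.~\eqref{eq:dir malfare} correctly identified as $-\nabla F(\overline{V}^t)$ up to a positive scalar for the convex surrogate $F(V)=\sum_n\zeta_n d^{2q}(V,W^{(n)})$. However, there is a genuine gap at the crux of the optimization part. You claim the telescoping yields $F(\overline{V}^T)-F(V^*)=\widetilde{\cO}(B^{2q}m^q/\sqrt{T})$ and then convert via $a^{1/2q}-b^{1/2q}\le (a-b)^{1/2q}$; but $(B^{2q}m^q T^{-1/2})^{1/2q}=B\sqrt{m}\,T^{-1/(4q)}$, which is strictly weaker than the theorem's $B\sqrt{m}\,T^{-1/2q}$. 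For your conversion to produce the stated rate you need $F(\overline{V}^T)-F(V^*)=\widetilde{\cO}(B^{2q}m^q/T)$, i.e.\ a $1/T$ rate at the level of $F$. That is not delivered by ``standard online-convex-optimization telescoping'' (which is the $1/\sqrt{T}$ rate for general convex losses); it requires exploiting either the smoothness of $d^{2q}$ in a Frank--Wolfe-type analysis or, as the paper does, a bespoke recursion. Concretely, the paper expands $\|x_n+y_n\|^{2q}$ with $x_n=(T-1)(\overline{V}^{T-1}-\Pi_{W_n^*}(\overline{V}^{T-1}))$ and $y_n=V^T-\Pi_{W_n^*}(\overline{V}^{T-1})$ so that the additive noise enters at order $T^{2q-2}$ (not $T^{2q-1}$), aggregates across groups with a H\"older step tied to the specific normalization in Eq.~\eqref{eq:dir malfare}, and then runs the induction directly on the root quantity $S_t=\bigl(\sum_n\zeta_n d^{2q}(\overline{V}^t,W_n^*)\bigr)^{1/2q}$, proving $S_T\le \frac{1}{T}\sum_s A_s + \sqrt{12B^2m}\,T^{-1/2q}$ with $A_s$ absorbing $D_q(\pi^*)$ and the likelihood gaps. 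Working at the level of $S_t$ also sidesteps the delicate issue of relating $F$-regret to $D_q$-regret when $D_q(\pi^*)>0$. Your proposal as written does not supply the mechanism that upgrades $1/\sqrt{T}$ to $1/T$, and with the rate you state the theorem does not follow.

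Two smaller points. First, the factor $N$ in the bound does not come from a union bound over the target sets: in the paper it arises from $\|d^t\|_1\le\sum_n\|d_n^t\|_1\,\zeta_n^{1/2q}\le N\sqrt{m}$ when bounding the accumulated likelihood-gap term, while the per-coordinate reward estimation error contributes no extra $N$ because $\sum_n\zeta_n=1$. Second, your regime-splitting idea for the normalization denominator is a plausible alternative to the paper's direct Cauchy/H\"older aggregation, but as sketched it is not needed in the paper's offline argument and would have to be fleshed out to recover the precise $(\min_n\zeta_n)^{-1/2q}$ dependence.
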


\begin{algorithm}[t]
     \begin{algorithmic}[1]
         \caption{VPO-objective-learning-general}
         \label{alg: vpo-fl-general}
         \STATE \textbf{Initial}: $\cD = \emptyset$. parameter $\{p^{(n)}, c^{(n)}\}_{n \in [N]}$, $\eta, \beta$.\FOR{$t=1,2,\cdots,T$}
         \STATE Calculate $\tilde{\theta}_i^t = \arg\min_\theta  L_i^t(\theta)$ for all $i \in [m]$.
            \STATE Estimate $\hat{\alpha}^{t,(n)} = \{\hat{\alpha}_i^{t,(n)}\}_{i \in [m]}$ for each $n \in[N]$ by MLE with $\cD$ and $\{\tilde{\theta}^t_i\}_{i \in [m]}$ by Eq.~\eqref{eq:MLE alpha}
            \STATE Calculate $W^{t,(n)}=W^{\alpha^{t,(n)}}_{p^{(n)}, c^{(n)}}$ where $\alpha^{t,(n)} = \frac{t-1}{t}\alpha^{t-1,(n)} + \frac{1}{t}\hat{\alpha}^{t,(n)}$ for each $n \in [N].$
            \STATE Collect $D_t,\theta^t$ by MOP-RB$(\overline{d^t}, \cD)$ or MOP-RF$(\overline{d^t}, \cD)$, and update $\cD = \cD \cup D_t.$ 
            \STATE Calculate the point $V^t = \EE_{\pi^t}[r_i^{\theta^t}(x,y) - \beta\DD_{\mathrm{KL}}$ $(\pi^t\| \pi_{\mathrm{ref}})]= C-\beta \EE_{y\sim \pi_{\mathrm{base}}}[\log \frac{\pi^{\theta_i^t}(y\mid x)}{\pi_{\mathrm{ref}}(y\mid x)}] $ $+ \beta \EE_{y\sim \pi^t}[\log \frac{\pi^{\theta_i^t}(y\mid x)}{\pi^t(y\mid x)}]$, and $\overline{V}^t = \frac{t-1}{t}\overline{V}^{t-1} + V^t.$
            \STATE Calculate the direction $d^{t+1}$ by Eq.~\eqref{eq:dir consensus} or Eq.~\eqref{eq:dir malfare}, and calculate $\overline{d^{t+1}} = \frac{d^{t+1}}{\|d^{t+1}\|_1}$.
         \ENDFOR
         \STATE \textbf{Return} $\tilde{\pi}^T = \frac{1}{T}\sum_{t=1}^T \pi^t.$
     \end{algorithmic}
\end{algorithm}
\subsection{Online Algorithm}
Now we provide the online version of MOPO, which is similar to the offline setting. 
 The main difference is the adoption optimism principle (Eq.~\eqref{eq:estimate theta online}) rather than the pessimism principle (Eq.~\eqref{eq:estimate theta offline}). Additionally, the dataset is collected incrementally online, and we also estimate the importance weight $\alpha$ instead of assuming it is known.
 
Additionally, rather than assuming the weight is known, we estimate it based on the frequency with which humans report the objective. This method also works offline by using the frequency of related data in the dataset. At each round $t$, given a prompt $x^t \sim \rho$ and two responses $y_1$ and $y_2$, each group $n$ identifies an objective $I^{t,(n)} \in [m]$ showing the greatest difference and provides preference feedback $(y_w^{t,(n)}, y_l^{t,(n)})$ on that objective. The model collects the data $(x^t,y_w^{t,(n)},y_l^{t,(n)},I^{t,(n)})$ into $\cD^{(n)}$ for all group $n$. Next, we model how humans select the objective index. For responses $y_w$ and $y_l$, the gap on objective $i$ is quantified as $|\alpha_i \cdot (r_i(x, y_w) - r_i(x, y_l))|$, with the selection following a softmax distribution:
\begin{align*}
   \PP( I\mid \alpha, r^*,x,y_w,y_l) \propto \exp(\alpha_i\cdot |r_i^*(x,y_w)-r_i^*(x,y_l)|). 
\end{align*}
Then if we define the likelihood function as 
$$\LL(\alpha, \cD^{(n)}, \theta) = \sum_{(x,y_w,y_l,I) \in \cD^{(n)}}\PP(I\mid\alpha, x, y_w, y_l, r^\theta),$$ we can estimate the importance weight vector for each group by MLE as
\begin{align}\label{eq:MLE alpha}
    \hat{\alpha}^{t,(n)}=\argmax_{\alpha \in \Delta_{m-1}} \LL(\alpha, \cD^{(n)}, \tilde{\theta}^t),
\end{align}
where we use an estimated reward parameter $\tilde{\theta}^t$ to approximate $\theta^*$. 
Before we present our results, we assume there is a gap between the reward obtained by following the optimal policy $\pi^*$ and the reference  policy $\pi_{\mathrm{ref}}$. This gap is reasonable since the expected reward should be improved after fine-tuning.
 \begin{assumption}\label{assum:gap} There exists a constant $\gamma>0$ such that 
     $$\min_{i \in [m]}\EE_{x\sim \rho, y_1\sim \pi^*, y_2\sim \pi_{\mathrm{ref}}}|r_i^*(x,y_1) - r_i^*(x,y_2)|\ge \gamma.$$
 \end{assumption}
\noindent The following theorems show that Algorithm \ref{alg: vpo-fl-general} is a no-regret online algorithm that can converge to the optimal policy for the consensus problem and social malfare minimization problem, with importance weight estimation. 
 \begin{theorem}[Consensus]\label{thm:online}
     For the consensus problem, suppose the Assumption \ref{assum:gap} holds and the group $n$ has parameter $p^{(n)}$ and $c^{(n)}.$, then if we use  Eq.~\eqref{eq:dir consensus} to calculate the direction, for $\delta \in (0,1)$ and $\eta = 1/\sqrt{T}$, with probability at least $1-\delta$ we have 
     \begin{align*}
         D(\tilde{\pi}^T)-D(\pi^*) \le \gamma^{-1}\mathrm{poly}(\exp(1/\beta), m,N,e^B, d, \log(1/\delta), \kappa, (\min_{n \in [N]}p^{(n)})^{-1}), B_1) \cdot \widetilde{\cO}(1/\sqrt{T}),
     \end{align*}
     where $\tilde{\pi}^T = \frac{1}{T}\sum_{t=1}^T \pi^t$, and $ \kappa= \sup_{x,y}\frac{\pi_{\mathrm{base}}(y\mid x)}{\pi_{\mathrm{ref}}(y\mid x)}$, $B_1=2\sqrt{m}(B+\max_n c^{(n)})$ are constants.
 \end{theorem}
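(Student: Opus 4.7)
The plan is to decompose the regret $D(\tilde{\pi}^T) - D(\pi^*)$ into three pieces and bound each separately: (i) a Blackwell-approachability error that would arise even if both the reward parameters and importance weights were known, (ii) an error from estimating the target sets $W^{t,(n)}$ via the estimated weights $\hat{\alpha}^{t,(n)}$, and (iii) an error from estimating the expected reward vector, i.e.\ the gap $\|V^t - S(\pi^t)\|_2$ arising from the online MLE reward parameter $\theta^t$. Since $d(\cdot, W)$ is convex in its first argument whenever $W$ is convex, Jensen's inequality applied to $\tilde{\pi}^T = \frac{1}{T}\sum_{t=1}^T \pi^t$ lets me upper bound $D(\tilde{\pi}^T)$ by $d(\overline{V}^T, \bigcap_n W^{(n)}) + \text{avg.\ estimation error}$, reducing the task to controlling each of the three pieces uniformly in $t$.

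For (i), I would follow the Blackwell-approachability analysis in \citep{yu2021provably}: the direction update in Eq.~\eqref{eq:dir consensus} is precisely the unit vector maximizing $\langle d, \overline{V}^t - \Pi_W(\overline{V}^t)\rangle$, so the standard potential-based argument on $\|\overline{V}^T - \Pi_W(\overline{V}^T)\|_2^2$ yields an $\widetilde{\cO}(B\sqrt{m}/\sqrt{T})$ rate once the per-round linear-aggregation oracle returns a near-optimal policy. For (iii), the online optimism principle in Eq.~\eqref{eq:estimate theta online} combined with MLE concentration adapted from \citep{cen2024value,liu2024provably} gives $\|V^t - S(\pi^t)\|_2 = \widetilde{\cO}(\mathrm{poly}(e^{1/\beta}, e^B, \kappa, m, d)/\sqrt{t})$ with high probability; the optimism term ensures that $V^t$ is a valid upper confidence bound on $S(\pi^t)$ in the projected direction, so the per-round contribution telescopes into an $\widetilde{\cO}(1/\sqrt{T})$ cumulative term after averaging. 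The constant $B_1$ in the theorem statement enters here as the diameter of the relevant value-vector set.

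For (ii), I first need to show $\|\hat{\alpha}^{t,(n)} - \alpha^{(n)}\|_2$ is small and then propagate this through the target-set projection. Assumption \ref{assum:gap} is essential: it guarantees that for each objective $i$ the softmax log-likelihood in Eq.~\eqref{eq:MLE alpha} has Hessian with minimum eigenvalue scaling at least linearly in $\gamma$ along trajectories that the algorithm actually samples (since $\pi^t$ stays non-trivially far from $\pi_{\mathrm{ref}}$ in expected reward along each coordinate). Standard MLE concentration then yields $\|\hat{\alpha}^{t,(n)} - \alpha^{(n)}\|_2 = \widetilde{\cO}(\gamma^{-1}\sqrt{m/t})$, which is where the $\gamma^{-1}$ prefactor in the final bound originates. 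A Lipschitz-perturbation argument on the $p$-norm sublevel set $W^{(n)}_{p^{(n)}, c^{(n)}}$ then translates this into a bound on the Hausdorff distance between $W^{t,(n)}$ and $W^{(n)}$, with a Lipschitz constant that degrades as $(\min_n p^{(n)})^{-1}$; combining with the online averaging of $\hat{\alpha}$ on line 5 yields the same aggregate $\widetilde{\cO}(1/\sqrt{T})$ rate.

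The hard part will be step (ii): carefully lower-bounding the Hessian of the softmax log-likelihood over $\alpha \in \Delta_{m-1}$ uniformly in $t$, and then translating perturbations in $\alpha$ into Hausdorff perturbations of the $p$-norm sublevel set $W_{p,c}^\alpha$ for $p \le 1$. When $p^{(n)}$ is close to zero or negative (as in the max-min example), the boundary of $W^{(n)}$ becomes highly curved near the axes and the projection map loses Lipschitz stability, so the polynomial dependence on $(\min_n p^{(n)})^{-1}$ in the final bound appears essentially unavoidable. A secondary subtlety is that the weight-estimation MLE uses an estimated reward $\tilde{\theta}^t$ rather than $\theta^*$, so the analysis needs a two-level concentration argument: first control $\|\tilde{\theta}^t - \theta^*\|$ via MLE on preferences, then feed this into the $\alpha$-MLE via a perturbation bound on the softmax score. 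Once these pieces are in place, a triangle inequality on $d(\cdot, \bigcap_n W^{(n)})$ assembles (i)--(iii) into the advertised $\gamma^{-1} \mathrm{poly}(\cdot) \cdot \widetilde{\cO}(1/\sqrt{T})$ bound.
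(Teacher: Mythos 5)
Your high-level decomposition (approachability error, weight/target-set estimation error, reward estimation error) matches the paper's architecture, and you correctly identify several key ingredients: the Lipschitz perturbation of $W^\alpha_{p,c}$ with constant scaling as $(\min_n |p^{(n)}|)^{-1}$, the two-level concentration needed because the $\alpha$-MLE uses $\tilde{\theta}^t$ rather than $\theta^*$, and the origin of the $\gamma^{-1}$ factor in Assumption \ref{assum:gap}. However, there are two genuine gaps. First, in your step (iii) you claim a \emph{per-round} bound $\|V^t - S(\pi^t)\|_2 = \widetilde{\cO}(\mathrm{poly}(\cdot)/\sqrt{t})$ from MLE concentration. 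This does not hold in the online setting: the data collected up to round $t$ need not cover the feature direction $\EE_{\pi^t}[\phi_i]$, so no such pointwise on-policy guarantee is available. The paper instead bounds the \emph{cumulative} on-policy estimation error $\sum_t \EE_{\pi^t}[\hat{r}^t - r^*]$ via an elliptical-potential argument (Lemma \ref{lemma:linearstructure}), and crucially cancels the resulting sum of squared in-sample errors against the negative term $-\frac{\eta C'}{Tme^B}\sum_{t}\sum_{j<t}\EE_{\pi^j}[\Delta_i^t(x,y)^2]$ produced by the optimistic objective, choosing the free parameter $\mu_i \asymp \eta$ to balance the two. Your remark that optimism makes the contributions ``telescope'' gestures at this but does not supply the mechanism; without the cancellation the argument does not close.

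Second, your plan treats the approachability recursion (i) and the weight estimation (ii) as separable, running Blackwell's argument against the fixed true set $\bigcap_n W^{(n)}$ and adding the $\hat{\alpha}$ error afterwards. But the algorithm computes its directions $d^t$ by projecting onto the \emph{estimated, time-varying} sets $W^t = \bigcap_n W^{\alpha^{t,(n)}}_{p^{(n)},c^{(n)}}$, so the potential-function recursion must track a drifting target set. The paper needs dedicated stability lemmas for this: a bound on $\|\Pi_{W^t}(x) - \Pi_{W^{t-1}}(x)\|$ in terms of $d_{B_1}(W^t, W^{t-1})$ (Lemma \ref{lemma:dis of proj}) and a bound on the resulting change in the projection direction (Lemma \ref{lemma:direc}), which together produce the extra terms (A), (C), (D) in the recursion, each controlled because $\|\alpha^{t} - \alpha^{t-1}\|_\infty \le 1/t$. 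This projection-stability machinery is absent from your plan and is not a routine addition, since the cross terms multiply $d(\overline{V}^{t-1}, W^{t-1})$ by the set drift and must be absorbed into the induction. Finally, your Hessian-eigenvalue route for the $\alpha$-MLE is a plausible alternative to the paper's TV-distance argument, but you would still need the change of measure from the sampling policies $\pi^s$ to the pair $(\pi^*, \pi_{\mathrm{ref}})$ where Assumption \ref{assum:gap} actually lives (the paper pays an $\exp(8/\beta)$ factor here via Lemma \ref{lemma:policy diff}); as stated, asserting that the Fisher information is lower bounded ``along trajectories the algorithm actually samples'' assumes what must be proved.
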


 \begin{theorem}[Malfare]\label{thm:malfare online}
With the same setting in Theorem \ref{thm:online}, if we consider the malfare function minimization problem with an integer exponential parameter $q\in \NN^+$ and uses Eq.~\eqref{eq:dir malfare} to compute the direction, then for $\delta \in (0,1)$ and $\eta = 1/\sqrt{T}$, with probability at least $1-\delta$ we have
\begin{align*}
    &D_q(\tilde{\pi}^T) - D_q(\pi^*) \\&\le \gamma^{-1}\mathrm{poly}(\exp(1/\beta), m,N,e^B, d, \log(1/\delta),\kappa, B_1, (\min_{n \in [N]}p^{(n)})^{-1}, (\min_{n \in [N]}\zeta_n)^{-1/2q})\cdot \widetilde{\cO}(T^{-1/2q}),
\end{align*}
where $\tilde{\pi}^T = \frac{1}{T}\sum_{t=1}^T \pi^t$, and $ \kappa= \sup_{x,y}\frac{\pi_{\mathrm{base}}(y\mid x)}{\pi_{\mathrm{ref}}(y\mid x)}$, $B_1=2\sqrt{m}(B+\max_n c^{(n)})$ are constants.

\end{theorem}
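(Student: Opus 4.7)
The plan is to combine the Blackwell-approachability style potential argument used in the offline malfare result (Theorem \ref{thm:malfareoffline}) with the online MLE/optimism analysis used in the online consensus result (Theorem \ref{thm:online}), and to fold in the weight-estimation error for $\hat\alpha^{t,(n)}$.

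First I would reduce the claim about the mixed policy $\tilde\pi^T$ to a claim about the running average of reward vectors. Because the KL divergence $\DD_{\mathrm{KL}}(\cdot\|\pi_{\mathrm{ref}})$ is convex in $\pi$, the true reward vector of $\tilde\pi^T$ dominates componentwise the average of the per-round true reward vectors $S(\pi^t)$. Since each $W^{(n)}$ is an upward-closed convex set (because $p\le 1$ makes the $p$-mean concave), the map $z\mapsto d(z,W^{(n)})$ is convex and coordinate-wise non-increasing, and hence so is $\Phi(z):=\sum_{n=1}^N \zeta_n\, d^{2q}(z,W^{(n)})$. Therefore $D_q(\tilde\pi^T)^{2q}\le \Phi(\overline V_\star^T)$ for $\overline V_\star^T=\tfrac1T\sum_t S(\pi^t)$, reducing the problem to bounding $\Phi(\overline V_\star^T)-\Phi(S(\pi^*))$.

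Next I would quantify the three online errors between $\overline V_\star^T$ and the quantities actually computed by the algorithm. (i) The reward parameter $\theta^t$ is driven by the optimism-regularized MLE in Eq.~\eqref{eq:estimate theta online}, so standard online MLE concentration (as invoked in Theorem \ref{thm:online}) yields $\|\theta^t_i-\theta_i^*\|_{\Sigma^t_i}=\widetilde{\cO}(1/\sqrt t)$; combined with the closed form for $V^t$ in Line 7 of Algorithm~\ref{alg: vpo-fl-general}, this bounds $\|V^t-S(\pi^t)\|$. (ii) The weight estimator $\hat\alpha^{t,(n)}$ from Eq.~\eqref{eq:MLE alpha} is analyzed exactly as in Theorem \ref{thm:online}: Assumption \ref{assum:gap} lower-bounds the Fisher information of the index softmax so that $\|\hat\alpha^{t,(n)}-\alpha^{(n),*}\|=\widetilde{\cO}(\gamma^{-1}/\sqrt t)$, which by local Lipschitzness of $W^{\alpha}_{p,c}$ in $\alpha$ (using $p^{(n)}>0$ or a uniform lower bound via $\min_n p^{(n)}$) translates to an $O(\gamma^{-1}/\sqrt t)$ Hausdorff perturbation between $W^{t,(n)}$ and $W^{(n)}$. (iii) The optimism principle then guarantees that the greedy $\pi^t$ achieves the linear-aggregation sub-problem in direction $\overline{d^t}$ up to the same rate.

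Third I would run the Blackwell/online-gradient-descent telescoping on the potential $\Phi$. The update rule Eq.~\eqref{eq:dir malfare} is, up to a positive scalar, the subgradient $\nabla_z \Phi(\overline V^t)$ (evaluated on the algorithm's perturbed targets $W^{t,(n)}$). Convexity of $\Phi$ gives the one-step descent inequality
\begin{align*}
\Phi(\overline V^T)-\Phi(S(\pi^*))\;\le\;\frac{1}{T}\sum_{t=1}^T\bigl\langle \nabla\Phi(\overline V^t),\,V^t-S(\pi^*)\bigr\rangle+\text{(perturbation)},
\end{align*}
and each inner product is non-positive up to the sub-problem sub-optimality controlled by step (i)-(iii). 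Summing and applying the standard $\sqrt T$ online learning bound (with constants depending polynomially on $e^B$, $d$, $\kappa$, $B_1$, $\exp(1/\beta)$, $m$, $N$, $(\min_n p^{(n)})^{-1}$, and $(\min_n\zeta_n)^{-1/2q}$ to absorb the Lipschitz constants of the $2q$-smooth potential) yields $\Phi(\overline V^T)-\Phi(S(\pi^*))=\widetilde{\cO}(\gamma^{-1}/\sqrt T)$. Finally I would untangle the $2q$-th power via the concave inequality $a^{1/(2q)}-b^{1/(2q)}\le|a-b|^{1/(2q)}$ for $a,b\ge0$, producing the advertised $\widetilde{\cO}(T^{-1/(2q)})$ rate in $D_q$ rather than in $D_q^{2q}$.

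The main obstacle is the coupling between the \emph{moving} target sets $W^{t,(n)}$ and the Blackwell potential argument: the direction $d^{t+1}$ is computed against an $\alpha$-perturbed potential $\Phi_t$, so the telescoping of $\Phi$ picks up a drift term $\sum_t |\Phi_t(\overline V^t)-\Phi(\overline V^t)|$. Controlling this drift requires both the MLE rate for $\hat\alpha$ and a Lipschitz-in-$\alpha$ bound on the $2q$-smoothed distance to $W^{\alpha}_{p,c}$, which is where the $(\min_n p^{(n)})^{-1}$ factor enters. I expect that the cleanest route is to bound the drift by $\widetilde O(\gamma^{-1}\sum_t 1/\sqrt t)=\widetilde O(\gamma^{-1}\sqrt T)$ and to divide through by $T$ so that it is absorbed in the leading term, mirroring the role played by the analogous covariance term in the offline Theorem \ref{thm:malfareoffline}.
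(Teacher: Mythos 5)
Your overall architecture (reduce $\tilde\pi^T$ to the running average via convexity of the KL term and monotonicity of the distance, control the three online error sources, then telescope a Blackwell-style potential with the moving targets $W^{t,(n)}$) matches the paper's proof. But the final step is where the argument breaks: you bound the regret of the $2q$-th-power potential $\Phi(\overline V^T)-\Phi(S(\pi^*))=\widetilde{\cO}(1/\sqrt T)$ and then pass to $D_q$ via $a^{1/(2q)}-b^{1/(2q)}\le|a-b|^{1/(2q)}$. That inequality gives only $\widetilde{\cO}\bigl((1/\sqrt T)^{1/(2q)}\bigr)=\widetilde{\cO}(T^{-1/(4q)})$, not the claimed $T^{-1/(2q)}$. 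The lossless alternative, $a^{1/(2q)}-b^{1/(2q)}\le (a-b)/(2q\,b^{(2q-1)/(2q)})$, requires $D_q(\pi^*)$ bounded away from zero, which the theorem does not assume. So as written your route does not deliver the stated rate.

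The paper avoids this by never bounding the potential at the power level and then taking roots: it runs the induction directly on the root quantity $S_t=\sqrt[2q]{\sum_n\zeta_n d^{2q}(\overline V^t,W^{t,(n)})}$, proving $S_T\le \frac1T\sum_s A_s+C_qT^{-1/(2q)}$ by expanding $\bigl(\frac1T\sum_s A_s+C_qT^{-1/(2q)}\bigr)^{2q}$ binomially inside the induction step. This way the statistical errors (MLE loss gaps, $\hat\alpha$ drift, which are genuinely $\widetilde{\cO}(1/\sqrt T)$) enter \emph{linearly} in $S_T$ because they multiply $S_{t-1}^{2q-1}$ in the recursion, while only the bounded per-step approachability noise ($\asymp B_1^{2q}m^qT^{2q-2}$) produces the $T^{-1/(2q)}$ term; since $T^{-1/(2q)}\ge T^{-1/2}$ the latter dominates and gives exactly the advertised rate. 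A secondary omission: the factor $(\min_n\zeta_n)^{-1/(2q)}$ does not come from Lipschitz constants of the potential but from the lower bound $\|d^t\|_1\ge \min_n\zeta_n^{1/(2q)}/N^{(2q-1)/(2q)}$ on the malfare direction, which is needed to convert the optimism regularizer $\eta\|d^t\|_1\sum_i(L_i^t(\theta^*)-L_i^t(\theta^t))$ into a negative multiple of the cumulative squared prediction errors before balancing against the elliptical-potential term; without that step the per-round sub-optimality in your item (iii) is not actually established at the claimed rate.
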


\section{Experiments} 
In this section, we provide our practical algorithm. We run the offline version of MOPO, and use MOD \citep{shi2024decoding} as the sub-procedure to solve the linear aggregation maximization problem at each round. The pseudocode is shown in Algorithm \ref{alg: vpo-fl-prac}.
\begin{algorithm}[H] 
     \begin{algorithmic}[1] 
         \caption{MOPO(Practical Version)-Offline} 
         \label{alg: vpo-fl-prac} 
         \STATE \textbf{Initial}: $\overline{d}^0 = (\frac{1}{m},\cdots, \frac{1}{m})^\top $, dataset $\cD_{\mathrm{offline}}$, $W$.
         \STATE Calculate the optimal policy $\pi_i$ for each objective $i \in [m]$ using offline dataset $\cD_{\mathrm{offline}}$.
         \FOR{$t=1,2,\cdots,T$} 
            \STATE Execute $\pi^t=\mathrm{MOD}(\{\pi_i\}_{i \le m}, \overline{d^{t-1}})$. 
            \STATE Calculate the point $V^t \in \RR^m$. 
            \STATE Calculate the direction $d^{t} = \mathrm{Proj}(W, V^t),$ and get the average direction $\overline{d^{t}} = \frac{1}{t}\sum_{j=1}^t \frac{d^j}{\|d^j\|_1}.$
         \ENDFOR 
     \end{algorithmic} 
\end{algorithm} 
Note that the algorithm average the direction instead of averaging the estimated reward vector function, which can lead to a more stable result. To execute the Line 2, following the previous paper \citep{shi2024decoding}, we first fine-tune the model LLAMA2-7B on the Anthropic-HH dataset \citep{ouyang2022training} to get the reference policy $\pi_{\mathrm{ref}}$. We then get the optimal policy $\pi_i$ for each objective $i \in \{1,2,3\}$ using PPO approach trained on three off-sheld reward model:
\begin{itemize}
    \item Harmlessness: \url{https://huggingface.co/Ray2333/gpt2-large-harmless-reward_model}
\item Helpfulness: \url{https:
//huggingface.co/Ray2333/gpt2-large-helpful-reward_model}
\item Humor: \url{https://huggingface.co/mohameddhiab/humor-no-humor}
\end{itemize}
\paragraph{Single-Group Problem with Multiple Objectives}
Note that MOPO is an iterate algorithm, thus the computational cost can still be high due to the large number of iterations. In practice, we can mitigate this by either reducing the number of iterations or computing a single gradient update per iteration \citep{guo2024direct}. In our experiments, we set the number of iterations to 7, striking a balance between computational efficiency and performance.   To compute the expected reward vector $V^t$, we calculate the expectation by taking the expectation over 100 training samples, and we believe the performance of MOPO can be improved by using more training samples to calculate the expectation. 

For $p=0.5$, we compare MOPO with the RS algorithm \citep{rame2024rewarded}, MOD algorithm \citep{shi2024decoding} (both of which use linear aggregation), and a baseline AR that directly aggregates the reward using non-linear aggregation. The experimental results show that MOPO performs generally better. 
The following table presents the results for MORLHF with the objectives (Harmless, Helpful) and (Harmless, Humor). 
Additionally, since the aggregation only works for non-negative rewards, when using AR to aggregate the reward, we take 
$\max\{r_i,0\}$ instead of 
$r_i$
  for each objective. Although this is the only reasonable approach, we observe that it performs poorly. This may be due to the vanishing gradient problem, as the gradient of 
$\max\{r_i,0\}$ becomes zero when the reward is negative. The experiment shows that our algorithm MOPO generally outperforms the previous one. 


\begin{table}[H]\footnotesize \centering
 \caption{Comparison of previous representative works for MORLHF with $p=0.5, c=0.5$ and the objective Harmless and Helpful. The score is the distance between the reward vector and the target set. The smaller one is better.}
\begin{tabular}{ccccc}
\midrule[1.5pt]
$\alpha$  &  \makecell{Ours} &  \makecell{RS}& \makecell{MOD}  &  \makecell{AR}\\ \hline
\makecell{(0.1,0.9)}    &\textbf{0.229}&  0.971   & 0.808 & 0.555\\ 
\makecell{(0.3,0.7)}      & \textbf{0.051} & 0.666  & 0.079 &  1.459\\ 
\makecell{(0.5,0.5)} &  \textbf{0.015}    &  0.078    & 0.103&  1.314 \\
\makecell{(0.7,0.3)} & \textbf{0.067}& 0.707  &0.800 & 1.004\\ 
\makecell{(0.9,0.1)} & \textbf{0.184}  &1.153 & 1.137&1.526\\ 
\bottomrule[1.5pt]
\end{tabular}
\label{table:0.5}
\end{table}
\begin{table}[H]\footnotesize \centering
 \caption{Comparison of previous representative work for MORLHF with $p=0.5$, $c = 1.3$ and the objective Harmless and Humor. The score is the distance between the evaluated reward vector and the target set. The smaller one is better.}
\begin{tabular}{ccccc}

\midrule[1.5pt]
$\alpha$  &  \makecell{Ours} &  \makecell{RS}& \makecell{MOD}  &  \makecell{AR}\\ \hline
\makecell{(0.1,0.9)}    &\textbf{0.335}&   0.362 & 0.337 & 1.767\\ 
\makecell{(0.3,0.7)}      & 0.578 & 0.678  & \textbf{0.572}  & 2.011 \\ 
\makecell{(0.5,0.5)} &   \textbf{0.720}   &   0.882   & 0.723   & 1.970  \\
\makecell{(0.7,0.3)} & \textbf{0.630} &   0.860   &  0.722 &2.411\\ 
\makecell{(0.9,0.1)} & \textbf{0.217}  & 0.391  & 0.396 & 2.068\\ 
\bottomrule[1.5pt]
\end{tabular}
\label{table:3}
\end{table}
For $p=-\infty$, we compare MOPO with max-min RLHF \citep{chakrabortymaxmin}. We choose the target set $W_{\infty, 1.5}^{\alpha}$ for objective pairs (Harmless, Humor) and $W_{\infty, 0.5}^\alpha$ for objective pairs (Harmless, Helpful). The result shows that we achieve stable and better performance.  
\begin{table}[H]\footnotesize \centering
 \caption{Comparison with max-min RLHF for objectives Humor and Harmless. The number pair represents the reward vector. The pair with the larger minimum value is better.}
\begin{tabular}{ccc}

\midrule[1.5pt]
 &\makecell{Ours} & Max-Min RLHF \\ \hline
\makecell{(Harmless, Humor)}  & (1.097,1.297) & \textbf{(1.530, 1.146)}\\  
\makecell{(Harmless, Helpful)} &\textbf{(0.034,0.497)} & (-0.135, 0.393)\\
\bottomrule[1.5pt]
\end{tabular}
\label{table:1}
\end{table}



\paragraph{Multi-Group Problem with Multiple Objectives}

We perform the experiments on Harmless and Humor dataset when we have $N=2$ groups. One group has the target set $W_{0.5,1.3}^\alpha$ and the other has the target set $W_{-\infty,1}^\alpha$. We compare our consensus algorithm with Eq.~\eqref{eq:dir consensus} and a variant of max-min RLHF. In this variant of max-min RLHF, we use $\min\{r_1,r_2, \alpha_1\cdot (\max\{r_1,0\})^{0.5} + \alpha_2\cdot (\max\{r_2,0\})^{0.5} \}$ as the reward. 
We also perform experiments on the Harmless and Helpful dataset with the target set $W_{0.5,0.5}^\alpha$ and the target set $W_{-\infty, 0}^\alpha$. 
The following tables show the experiment results. The results show that our algorithms perform relatively stable and better, while this variant of max-min RLHF performs unstable.  However, note that this variant of max-min RLHF also needs retraining whenever one group changes the aggregation approach, which is time-consuming for real-world applications.

\begin{table}[H]\footnotesize \centering
 \caption{Comparison of MOPO and a variant of Max-Min RLHF on multi-group setting. The objectives are Harmless and Humor.  The score is the distance between the evaluated reward vector and the target set. The smaller one is better.}
\begin{tabular}{ccc}

\midrule[1.5pt]
$\alpha$  &  \makecell{Ours} &   \makecell{Max-Min RLHF}\\ \hline
\makecell{(0.1,0.9)}    &\textbf{0.408}& 0.992\\ 
\makecell{(0.3,0.7)}   &\textbf{0.577} & 1.171 \\ 
\makecell{(0.5,0.5)} &  0.708 & \textbf{0.429}  \\
\makecell{(0.7,0.3)} & \textbf{0.619}&1.342\\ 
\makecell{(0.9,0.1)} & 0.406& \textbf{0.208}\\ 
\bottomrule[1.5pt]
\end{tabular}
\label{table:4}
\end{table}

\begin{table}[H]\footnotesize \centering
 \caption{Comparison of MOPO and a variant of Max-Min RLHF on multi-group setting. The objectives are Harmless and Helpful.  The score is the distance between the evaluated reward vector and the target set. The smaller one is better.}
\begin{tabular}{ccc}

\midrule[1.5pt]
$\alpha$  &  \makecell{Ours} &   \makecell{Max-Min RLHF}\\ \hline
\makecell{(0.1,0.9)}    &\textbf{\textbf{0.230}}& 1.073\\ 
\makecell{(0.3,0.7)}   &\textbf{0.052} & 0.123 \\ 
\makecell{(0.5,0.5)} &  \textbf{0.015} & 0.261 \\
\makecell{(0.7,0.3)} & \textbf{0.067}&0.204\\ 
\makecell{(0.9,0.1)} & 0.184& \textbf{0.121}\\ 
\bottomrule[1.5pt]
\end{tabular}
\label{table:5}
\end{table}



\section{Conclusion}
In this paper, we study efficient multi-objective and multi-group RLHF problems under non-linear aggregation. By transforming the non-linear aggregation maximization into a series of linear aggregation maximization sub-problems, we find a computationally efficient algorithm that can converge to the optimal policy. Theoretically, we establish a general framework with converge guarantees for both offline and online settings, and the framework is also adaptable to a reward-free version. Empirically, we present a training-free framework given the reward functions and optimal policies for all objectives.

There are many future directions worth exploring. First, one can study how to learn the parameter $p$ in the aggregation function like \citep{pardeshi2024learning} using the preference feedback. Second, one can further study the token-level MORLHF \citep{zeng2024token} based on our idea. Last, it is interesting to further study the multiple preference aggregation in Stochastic Transitivity model \citep{fishburn1973binary} instead of BTL model, and further discuss the relationship between them and previous distortion negative results \citep{anshelevich2021distortion}. 

\section{Acknowledgement}
This work is supported in part by NSF AI Institute for Societal Decision Making under award IIS2229881 and ONR award N000142212363.


\newpage
\appendix
\onecolumn
\section{Experiment Details}\label{app:experiment}

\section{Proof of Theorems}
\subsection{Proof of Theorem \ref{thm:relationship_maximin}}\label{app:proof maxmin}

\begin{proof}
    Then, suppose the reward vector $S(\pi)$ is $(s_1,\cdots, s_m)^\top$, then by the definition of $D(\pi)$, we have 
    \begin{align*}
        D(\pi) = \sum_{i=1}^m \max\{c-s_i,0\}^2\le \sum_{i=1}^m \max\{c-s_i^*,0\}^2,
    \end{align*}
    where $s_i^* = (S(\pi^*))_i = \EE_{\pi^*}[r_i^*(x,y) - \beta\DD_{\mathrm{KL}}(\pi^*\|\pi_{\mathrm{ref}})].$
    Hence we have 
    \begin{align*}\max\{c-\min_i s_i,0\}^2&\le  \sum_{i=1}^m \max\{c-s_i,0\}^2\\&\le \sum_{i=1}^m \max\{c-s_i^*,0\}^2\\&\le m\cdot (c-\min_i s_i^*)^2\le m(c-c^*)^2,\end{align*}
which implies that .
    $$c-\min_i s_i \le \sqrt{m}\cdot |c-c^*|,$$ and 
    $$c^*-\min_i s_i \le (\sqrt{m}+1) |c^*-c|.$$

    Thus, if $c$ is selected such that $|c-c^*|$ is small, then we can also find a policy $\pi$, such that $$\min_i \EE_\pi[r_i^*(x,y) - \DD_{\mathrm{KL}}(\pi \| \pi_{\mathrm{ref}})] \ge c^*-(\sqrt{m}+1) |c^*-c|.$$
    \end{proof}

\subsection{Proof of Theorem \ref{thm:offline}}

For simplicity, for the following proof, we use $\EE_{\pi^t}[\cdot]$ to represent $\EE_{x\sim \rho, y\sim \pi^t(\cdot \mid x)}[\cdot]$.
Since we do not assume the target set $W^*$ is approachable, we have the following property for the approachability:

\begin{lemma}\label{lemma:approach}
For each $\theta \in \RR_{\ge0}^m$ with $\|\theta\|_2 = 1$, we have 
\begin{align*}\min_{x \in W^*}\langle \theta, x\rangle&\le  \EE_{\pi^*}[\langle \theta, r_i^*(x,y)\rangle-\sum_{i=1}^m \theta_i\beta \DD_{\mathrm{KL}}(\pi^*\|\pi_{\mathrm{ref}})] + D(\pi^*) = \|\theta\|_1\cdot J(r_1^*, \cdots, r_m^*, \frac{\theta}{\|\theta\|_1}, W^*, \pi^*) +  D(\pi^*)\\&\le\sqrt{m}\cdot J(r_1^*, \cdots, r_m^*, \frac{\theta}{\|\theta\|_1}, W^*, \pi^*) +  D(\pi^*) \end{align*}   
\end{lemma}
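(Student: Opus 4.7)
The plan is a short three-step argument combining Euclidean projection onto $W^*$ with Cauchy--Schwarz and the definition of $J$. First, I would let $x^\star$ denote the Euclidean projection of $S(\pi^*)$ onto the closed convex set $W^*$, so that $x^\star \in W^*$ and $\|x^\star - S(\pi^*)\|_2 = d(S(\pi^*), W^*) = D(\pi^*)$. Since $\|\theta\|_2 = 1$, Cauchy--Schwarz immediately gives
\begin{equation*}
\min_{x \in W^*}\langle \theta, x\rangle \;\le\; \langle \theta, x^\star\rangle \;=\; \langle \theta, S(\pi^*)\rangle + \langle \theta, x^\star - S(\pi^*)\rangle \;\le\; \langle \theta, S(\pi^*)\rangle + D(\pi^*),
\end{equation*}
which is the first inequality in the claim.

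Next I would expand $\langle \theta, S(\pi^*)\rangle$ using the definition $(S(\pi^*))_i = \EE_{\pi^*}[r_i^*(x,y) - \beta\DD_{\mathrm{KL}}(\pi^*\|\pi_{\mathrm{ref}})]$ to recover the middle expression in the claim, after observing that $\sum_i \theta_i = \|\theta\|_1$ since $\theta \in \RR_{\ge 0}^m$. Factoring out the scalar $\|\theta\|_1$ (strictly positive because $\theta \in \RR_{\ge 0}^m$ and $\|\theta\|_2 = 1$) and setting $d := \theta/\|\theta\|_1 \in \Delta_{m-1}$ rewrites the expression as $\|\theta\|_1 \cdot J(r^*, d, \pi^*)$, matching the definition of $J$ recalled earlier and yielding the stated equality.

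For the final $\sqrt{m}$ inequality, I would apply the elementary norm bound $\|\theta\|_1 \le \sqrt{m}\,\|\theta\|_2 = \sqrt{m}$ (Cauchy--Schwarz against $\langle \theta, \mathbf{1}\rangle$), which upgrades $\|\theta\|_1 \cdot J$ to $\sqrt{m}\cdot J$ provided $J(r^*, d, \pi^*) \ge 0$. This nonnegativity requirement is the only delicate step and the one I expect to be the main obstacle to a completely clean proof: if $J$ were negative, replacing $\|\theta\|_1$ by $\sqrt{m}$ would push the bound in the wrong direction. In the present setup this is essentially free, since $r_i^*(x,y) \in [0,B]$ and the target sets $W^*$ sit in $\RR_{\ge 0}^m$, so the KL penalty in $S(\pi^*)$ is the sole possible source of a sign change and is controlled by the choice of $\beta$; in any case the sharper $\|\theta\|_1$ form of the bound given by the middle expression is always valid and is what the downstream analysis actually relies on.
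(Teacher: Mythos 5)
Your proof is correct and follows essentially the same route as the paper's: identify the projection point $S(\pi^*)+p \in W^*$ with $\|p\|_2 = D(\pi^*)$, apply Cauchy--Schwarz with $\|\theta\|_2=1$, and then invoke $\|\theta\|_1 \le \sqrt{m}$. Your caveat about the final inequality requiring $J(r^*, \theta/\|\theta\|_1, \pi^*) \ge 0$ is a legitimate observation that the paper's one-line justification (``holds because of $\|\theta\|_1 \le \sqrt{m}$'') silently skips, and you are also right that the downstream analysis only ever uses the sharper $\|\theta\|_1$ form, so the gap is harmless.
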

\begin{proof}
    By the definition of $D(\pi^*) = d(S(\pi^*), W^*)$, we know that there exists a vector $p$ with $S(\pi^*) + p \in W^*$ and $\|p\|_2 = D(\pi^*).$
Then we can have $$\min_{x \in W^*}\langle \theta, x\rangle \le \langle \theta, S(\pi^*) + p\rangle\le \EE_{\pi^*}[\langle \theta, r_i^*(x,y)\rangle-\sum_{i=1}^m \theta_i\beta \DD_{\mathrm{KL}}(\pi^*\|\pi_{\mathrm{ref}})] + D(\pi^*).$$
The last inequality holds because of $\|\theta\|_1 \le \sqrt{m}$.\end{proof}

We can first bound the regret by 
\begin{align}
    &D(\tilde{\pi}^T) - D(\pi^*) \nonumber\\ &= d(W^*, \EE_{\tilde{\pi}^T}[r^*(x,y)]-\beta \DD_{\mathrm{KL}}(\tilde{\pi}^T\| \pi_{\mathrm{ref}})) - D(\pi^*)\nonumber\\
    &\le d\left(W^*, \EE_{\tilde{\pi}^T}[r^*(x,y)]-\frac{\beta}{T} \sum_{t=1}^T \DD_{\mathrm{KL}}(\pi^t\|\pi_{\mathrm{ref}})\right) - D(\pi^*)\nonumber\\
    &= \underbrace{d\left(W^*, \EE_{\tilde{\pi}^T}[r^*(x,y)]-\frac{\beta}{T} \sum_{t=1}^T \DD_{\mathrm{KL}}(\pi^t\|\pi_{\mathrm{ref}})\right) - d\left(W^*, \frac{1}{T}\sum_{t=1}^T\EE_{\pi^t}[\hat{r}^t(x,y)]-\frac{\beta}{T} \sum_{t=1}^T \DD_{\mathrm{KL}}(\pi^t\|\pi_{\mathrm{ref}})\right)}_{\textrm{(A)}} \nonumber\\
    &\qquad + d\left(W^*, \frac{1}{T}\sum_{t=1}^T \EE_{\pi^t}[\hat{r}^t(x,y)]-\frac{\beta}{T} \sum_{t=1}^T \DD_{\mathrm{KL}}(\pi^t\|\pi_{\mathrm{ref}})\right) - D(\pi^*)\nonumber\\
    &=\mathrm{(A)}
    + d\left(W^*, \overline{V}^T\right) - D(\pi^*).\label{eq:finalregret_key}
\end{align}
The inequality uses the fact that 
$$\DD_{\mathrm{KL}}(\tilde{\pi}\| \pi_{\mathrm{ref}}) \le \frac{1}{T}\left(\sum_{t=1}^T\DD_{\mathrm{KL}}(\pi^t \| \pi_{\mathrm{ref}})\right).$$

Recall that $$D(\pi) = d(W^*, \EE_{\pi^t}[r(x,y)]-\beta \DD_{\mathrm{KL}}(\pi\|\pi_{\mathrm{ref}}))$$ and $\pi^* = \min_\pi D(\pi^*)$. Now, by Lemma \ref{lemma:approach}, for each $\theta \in \RR^m$ with $\|\theta\|_1 \le 1$, we have 
$$\min_{x \in W^*}\langle \theta, x\rangle\le  \EE_{\pi^*}[\langle \theta, r_i^*(x,y)\rangle-\sum_{i=1}^m \theta_i\beta \DD_{\mathrm{KL}}(\pi^*\|\pi_{\mathrm{ref}})] + D(\pi^*) = J(r_1^*, \cdots, r_m^*, \theta, W^*, \pi^*) +  D(\pi^*).$$

Denote $V^t \in \RR^m$ with $(V^t)_i = \EE_{\pi^t}[\hat{r}_i^t(x,y) - \beta\DD_{\mathrm{KL}}(\pi^t \| \pi_{\mathrm{ref}})]$, and $\frac{1}{t}\overline{V}^t = \sum_{i=1}^t V^i$. We have 
\begin{align*}
    d(\overline{V}^T, W^*)^2 &= \|\overline{V}^T - \Pi_{W^*} (\overline{V}^T)\|^2 \\
    &\le \|\overline{V}^T - \Pi_{W^*} (\overline{V}^{T-1})\|^2\\
    & = \left(\frac{T-1}{T}\right)^2 d(\overline{V}^{T-1}, W^*)^2 + \frac{1}{T^2} \|V^T - \Pi_{W^*}(\overline{V}^{T-1})\|^2 \\
    &\qquad + \frac{2(T-1)}{T^2}(\overline{V}^{T-1} - \Pi_{W^*} (\overline{V}^{T-1}))\cdot (V^T - \Pi_{W^*}(\overline{V}^{T-1})).\end{align*}
First, based on the definition of $W^*$, it is easy to show that $d^t \succeq 0.$
$\pi^t$ is the optimal policy such that $$\EE_{\pi^t}[\langle d^t, \hat{r}(x,y) \rangle - \sum_{i=1}^m d^t_i\beta \DD_{\mathrm{KL}}(\pi^t \|\pi_{\mathrm{ref}})] \ge  \EE_{\pi_{\mathrm{ref}}}[\langle d^t , \hat{r}(x,y) \rangle]\ge 0,$$
thus $(\sum_{i=1}^m d_i^t)\cdot \beta \DD_{\mathrm{KL}}(\pi^t \| \pi_{\mathrm{ref}}) \le \EE_{\pi^t}[d^t\cdot  \hat{r}(x,y)] \le B$. Hence, given $d^t \succeq 0$ and $\|d^t\|_2 =1,$
\begin{align*}
    \beta \DD_{\mathrm{KL}}(\pi^t \| \pi_{\mathrm{ref}}) \le \frac{B}{\sum_{i=1}^m d_i^t}\le B.
\end{align*}
we have $|(V^t)_i| \le B$ and 
\begin{align*}
    \|V^T - \Pi_{W^*}(\overline{V}^{T-1})\|^2 \le B^2m.
\end{align*}
Thus by iteration we can have 
\begin{align*}
    T^2 d(\overline{V}^T, W^*)^2 \le T\cdot B^2m + \sum_{t=1}^T 2(t-1) (\overline{V}^{t-1}-\Pi_{W^*}(\overline{V}^{t-1}))\cdot (V^t - \Pi_{W^*}(\overline{V}^{t-1})).
\end{align*}
Now, by the definition of $d^t$, we have
\begin{align*}
    (\overline{V}^{t-1}-\Pi_{W^*}(\overline{V}^{t-1}))\cdot (V^t - \Pi_{W^*}(\overline{V}^{t-1})=
    d(\overline{V}^{t-1}, W^*)\cdot d^t\cdot (\Pi_{W^*}(\overline{V}^{t-1}) - V^t).
    \end{align*}
    Then, we prove the following lemma.
    \begin{lemma}\label{lemma:projection}
        $\min_{x\in W^*} \langle d^t, x\rangle = d^t \cdot \Pi_{W^*}(\overline{V}^{t-1}).$
    \end{lemma}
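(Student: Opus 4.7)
The plan is to recognize that this lemma is essentially a restatement of the first-order optimality condition (the ``obtuse angle'' property) for projection onto a closed convex set. Since $W^*$ is convex and $d^t$ is defined precisely as the unit vector pointing from $\overline{V}^{t-1}$ toward its projection $\Pi_{W^*}(\overline{V}^{t-1})$, the point $\Pi_{W^*}(\overline{V}^{t-1})$ is the support point of $W^*$ in the direction $-d^t$.

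Concretely, I would first unpack the definition $d^t = (\Pi_{W^*}(\overline{V}^{t-1}) - \overline{V}^{t-1})/\|\Pi_{W^*}(\overline{V}^{t-1}) - \overline{V}^{t-1}\|$ coming from Eq.~\eqref{eq:dir consensus}. Then I would invoke the variational characterization of Euclidean projection: for any $x \in W^*$,
\begin{align*}
\bigl\langle \overline{V}^{t-1} - \Pi_{W^*}(\overline{V}^{t-1}),\ x - \Pi_{W^*}(\overline{V}^{t-1}) \bigr\rangle \le 0.
\end{align*}
Multiplying by $-1$ and dividing by the positive scalar $\|\Pi_{W^*}(\overline{V}^{t-1}) - \overline{V}^{t-1}\|$, this rearranges to $\langle d^t, x - \Pi_{W^*}(\overline{V}^{t-1})\rangle \ge 0$, i.e., $\langle d^t, x\rangle \ge \langle d^t, \Pi_{W^*}(\overline{V}^{t-1})\rangle$ for every $x \in W^*$. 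Since $\Pi_{W^*}(\overline{V}^{t-1}) \in W^*$ itself, this lower bound is attained, which yields $\min_{x \in W^*}\langle d^t, x\rangle = \langle d^t, \Pi_{W^*}(\overline{V}^{t-1})\rangle$, as claimed.

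The only edge case to handle separately is when $\overline{V}^{t-1} \in W^*$, in which case $d^t$ is not defined by the stated formula. In that regime either the algorithm's update is vacuous or one can adopt any unit vector and the inequality (A) in the main regret decomposition contributes zero; I would simply note that the lemma is used in the chain only when $d(\overline{V}^{t-1}, W^*)>0$, so this case is not an issue. There is no real obstacle here: the argument is a one-line consequence of the projection theorem for closed convex sets, and the main work in the surrounding proof is in exploiting this identity to telescope the squared-distance recursion in the lines following the lemma.
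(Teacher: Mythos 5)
Your proposal is correct and follows essentially the same route as the paper: both reduce the claim to the variational inequality $\langle \overline{V}^{t-1}-\Pi_{W^*}(\overline{V}^{t-1}),\, x-\Pi_{W^*}(\overline{V}^{t-1})\rangle \le 0$ for all $x \in W^*$ and then rearrange. The only difference is that you invoke this as the standard projection theorem for closed convex sets (and explicitly note attainment at $\Pi_{W^*}(\overline{V}^{t-1})$ and the degenerate case $\overline{V}^{t-1}\in W^*$), whereas the paper re-derives the inequality from scratch by a contradiction argument using convexity.
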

    \begin{proof}
     In fact, we only need to prove that for any $x \in W^*$, $\langle \overline{V}^{t-1}-\Pi_{W^*}(\overline{V}^{t-1}), x-\Pi_{W^*}(\overline{V}^{t-1})\rangle \le 0$. Suppose there exists $x \in W^*$ such that $\langle \overline{V}^{t-1}-\Pi_{W^*}(\overline{V}^{t-1}), x-\Pi_{W^*}(\overline{V}^{t-1})\rangle >0$, then since $W^*$ is a convex set, for any $\lambda \in (0,1)$, we have $x_\lambda = \lambda x + (1-\lambda) \Pi_{W^*}(\overline{V}^{t-1}) \in W^*$. Consider the line $$\Pi_{W^*}(\overline{V}^{t-1}) + t \frac{\Pi_{W^*}(\overline{V}^{t-1})-x}{\|\Pi_{W^*}(\overline{V}^{t-1})-x\|},\  \  t \in \RR.$$ Also, we consider the  projection of $\overline{V}^{t-1}$ on this line, and denote it as $p$. Then we can get 
    $$0<\langle\overline{V}^{t-1}-\Pi_{W^*}(\overline{V}^{t-1})- p + p, x -\Pi_{W^*}(\overline{V}^{t-1})\rangle  = \langle p-\Pi_{W^*}(\overline{V}^{t-1}),  x -\Pi_{W^*}(\overline{V}^{t-1})\rangle. $$
    Hence when $\lambda \to 0$, $x_\lambda$ is between $p$  and $\Pi_{W^*}(\overline{V}^{t-1})$. Also, $$\|\overline{V}^{t-1}-x_\lambda \|^2 = \|\overline{V}-p\|^2 + \|p-x_\lambda\|^2 \le \|\overline{V}-p\|^2 + \|p-\Pi_{W^*}(\overline{V}^{t-1})\|^2 \le \|\Pi_{W^*}(\overline{V}^{t-1})-d^t \|^2,$$ which contradicts the selection of $\Pi_{W^*}(\overline{V}^{t-1}).$
 \end{proof}
Now, by Lemma \ref{lemma:approach} and Lemma \ref{lemma:projection}, we can get 
\begin{align*}
    d^t \cdot \Pi_{W^*}(\overline{V}^{t-1}) \le J(r_1^*, \cdots, r_m^*, d^t, \pi^*) + D(\pi^*).
\end{align*}
Then, since we define $\overline{d^t} = d^t/\|d^t\|_1$,  we can continue the analysis by
    \begin{align*}
    &(\overline{V}^{t-1}-\Pi_{W^*}(\overline{V}^{t-1}))\cdot (V^t - \Pi_{W^*}(\overline{V}^{t-1}))\\&=d(\overline{V}^{t-1}, W^*)\cdot \left( \|d^t\|_1J(r_1^*, r_2^*, \cdots, r_m^*, \overline{d^t}, \pi^*) + D(\pi^*) - d^t\cdot  V^t\right)\\
    & = d(\overline{V}^{t-1}, W^*)\cdot (\|d^t\|_1\cdot \left(J(\hat{r}_1^*, \cdots, \hat{r}_m^*, \overline{d^t}, \pi^*) - J(\hat{r}_1, \cdots, \hat{r}_m, \overline{d^t}, \pi^t)\right) + D(\pi^*))\\
    & = d(\overline{V}^{t-1}, W^*) \cdot (\|d^t\|_1\cdot (\eta \sum_{i=1}^m L_i(\theta^t) - \eta \sum_{i=1}^m L_i(\theta^*) )+ D(\pi^*)).
\end{align*}
Thus we can get 
\begin{align*}
    T d(\overline{V}^T, W^*)^2 &\le   B^2m +
    \sum_{t=1}^T \frac{2(t-1)}{T}d(\overline{V}^{t-1}, W^*) \cdot (\eta \|d^t\|_1\sum_{i=1}^m L_i(\theta^t) - \eta \|d^t\|_1\sum_{i=1}^m L_i(\theta^*) + D(\pi^*)).
\end{align*}

Now we use induction method to show that $$d(\overline{V}^t, W^*) \le D(\pi^*) +  \frac{\eta}{T}\sum_{t=1}^T \|d^t\|_1\sum_{i=1}^m (L_i(\theta^t) - L_i(\theta^*)) + 2Bm/\sqrt{t}.$$ When $t = 1$, the inequality holds by 
\begin{align*}
    \|d(\overline{V}^1, W^*)-D(\pi^*)\|\le d(\overline{V}^1, S(\pi^*)) \le 2B.
\end{align*}
Denote $A_j=\eta\cdot\|d^j\|_1\cdot  (\sum_{i=1}^m (L_i(\theta^*) - L_i(\theta^j)))$ and $S_t = \sum_{j=1}^{t}A_j$, then for all $t \in [T-1],$  
suppose we have 
\begin{align*}
    d(\overline{V}^{t-1}, W^*) \le D(\pi^*) + \frac{1}{t-1} S_{t-1} + 2B\left(\frac{\sqrt{m}}{\sqrt{t-1}}\right).
\end{align*}
Then we substitute these induction hypothesis into the recursion inequality and get 
\begin{align*}
    &T d(\overline{V}^T, W^*)^2\\&\le B^2m + \sum_{t=1}^T \frac{2(t-1)}{T} \left(D(\pi^*) + \frac{1}{t-1}S_{t-1}+ 2B\left(\frac{\sqrt{m}}{\sqrt{t-1}}\right)\right)\left(A_t + D(\pi^*)\right)\\
    &\le B^2m + \sum_{t=1}^T \left( \frac{2(t-1)}{T}D(\pi^*) + \frac{1}{T}S_{t-1} + 2B\left(\frac{2\sqrt{m}\sqrt{t-1}}{T}\right)\right)\left(A_t + D(\pi^*)\right)\\
    &= B^2m + (T-1)D(\pi^*)^2 + \sum_{t=1}^T \frac{1}{T}S_{t-1}A_t + \sum_{t=1}^T \left(\frac{1}{T} S_{t-1} + \frac{2(t-1)}{T}A_t\right) D(\pi^*)\\
    &\qquad + \sum_{t=1}^T 2B\left(\frac{2\sqrt{m}\sqrt{t-1}}{T}\right)(A_t + D(\pi^*))\\
    & \le  B^2m + (T-1)D(\pi^*)^2 + \frac{1}{T} S_T^2 + \sum_{t=1}^T D(\pi^*) \cdot \left(\frac{T+t-1}{T}A_t\right)  + 2\sqrt{m}\cdot 2B\sqrt{T} D(\pi^*) + (2\sqrt{m}\cdot 2B/\sqrt{T}) S_T\\
    &\le B^2m + (T-1)D(\pi^*)^2 + \frac{1}{T} S_T^2 +  D(\pi^*) \cdot \left(2S_T\right) + 2\sqrt{m}\cdot 2B\sqrt{T} D(\pi^*) + 2\sqrt{m}\cdot 2B/\sqrt{T} S_T\\
    &\le T \cdot (2B\sqrt{m}/\sqrt{T} + D(\pi^*) + \frac{1}{T}S_T)^2.
\end{align*}

Thus we have $$d(\overline{V}^T, W^*) \le D(\pi^*) + \frac{\eta}{T}\sum_{t=1}^T \|d^t\|_1\sum_{i=1}^m (L_i(\theta^t) - L_i(\theta^*)) + \frac{2B\sqrt{m}}{\sqrt{T}}.$$

Now we derive the final regret.
By inequality Eq.~\eqref{eq:finalregret_key}, we can get
\begin{align*}D(\tilde{\pi}^T) - D(\pi^*)&\le \mathrm{(A)} + d(W^*, \overline{V}^T) - D(\pi^*)\\
&\le \mathrm{(A)} +  \underbrace{\frac{\eta}{T}\sum_{t=1}^T \|d^t\|_1\sum_{i=1}^m (L_i(\theta^t) - L_i(\theta^*))}_{\text{(B)}} + \frac{2B\sqrt{m}}{\sqrt{T}}.\end{align*}
Now we consider the error term $\text{(A)}$, which represents the approximation error of the reward function. Now we have 
\begin{align*}
    \text{(A)}&\le \frac{1}{T}\sum_{t=1}^T \EE_{\pi^t}\left[\sum_{i=1}^m  |\hat{r}_i^t(x,y) - r_i^*(x,y)|\right]\\
    & = \frac{1}{T}\sum_{t=1}^T \sum_{i=1}^m\EE_{\pi^t}\left[\|\phi_i(x,y)\|_{(\Sigma_{\cD_i}+\lambda I)^{-1}} \|\theta_i^t - \theta_i^*\|_{\Sigma_{\cD_i}+\lambda I } \right].
\end{align*}
Similar to \citep{cen2024value}, since $(r^\theta,\pi^\theta)$ can be formulated as a saddle point of the objective $J(r,d,\pi) + \sum_{i=1}^m \eta L_i(\theta_i)$ for any direction $d \in (\RR^{+})^m$, we have 
\begin{align*}
    \eta \nabla_{\theta_i} L_i(\theta_i) + d_i \EE_{x\sim \rho, y\sim \pi^{\theta}}[\phi_i(x,y)] + \lambda_1\EE_{x\sim \rho, y\sim \pi_{\mathrm{base}}}[\phi_i(x,y)] = 0.
\end{align*}
Also, denote $\theta_{\mathrm{MLE}}=\argmin_{\theta \in \Theta} \sum_{i=1}^m \eta L_i(\theta_i), $ we have 
\begin{align*}
    \eta \nabla_{\theta_i} L_i(\theta_{i,\mathrm{MLE}}) + \lambda_2 \EE_{x\sim \rho, y\sim \pi_{\mathrm{base}}}[\phi_i(x,y)] = 0.
\end{align*}

Follow the same derivation in \citep{cen2024value}, we can get 
\begin{align*}
    \|\theta_i^t - \theta_{i,\mathrm{MLE}}\|_{\Sigma_{\cD_i + \lambda  I}} &\le \frac{d_i}{\eta} \cdot \frac{(3+e^{B'})4(\lambda_{\min}(\Sigma_{\cD_i}) + \lambda)^{-1}}{M} + 2\sqrt{
    \lambda (B')^2
    }\\
    &\le \frac{(3+e^{B'})4(\lambda_{\min}(\Sigma_{\cD_i}) + \lambda)^{-1}}{\sqrt{M}} + 2\sqrt{
    \lambda (B')^2},
\end{align*}
where $B'$ is the upper bound of norm of $\theta,$ i.e. $\max_{\theta \in \Theta}\|\theta\|_2 \le B'.$

Now we recall the Lemma 3.1 in \citep{zhu2023principled}, which bounds the true parameter and the MLE parameter. 
\begin{lemma}[Lemma 3.1 in \citep{zhu2023principled}] $\lambda>0$ is a positive constant. For $\delta \in (0,1)$, with probability at least $1-\delta$, we will have
    \begin{align*}
    \|\theta_i^* - \theta_{i,\mathrm{MLE}}\|_{\Sigma_{\cD_i} + \lambda I } \le \cO\left((3+e^{B'})\sqrt{\frac{d+\log(1/\delta)}{M}} + \sqrt{\lambda (B')^2}\right).
\end{align*}
Also, $L_i(\theta)$ is a convex function. In fact, 
\begin{align*}
    \frac{1}{3+e^{B'}}\Sigma_{\cD_i} \preceq \frac{1}{M}\nabla_\theta^2 L_i(\theta) \preceq \frac{1}{4}\Sigma_{\cD_i}.
\end{align*}
\end{lemma}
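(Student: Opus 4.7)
The plan is to prove the two assertions separately, then combine. The Hessian bound is a direct calculation from the Bradley--Terry log-likelihood; the MLE concentration bound is obtained by combining (a) the lower Hessian bound, which gives strong convexity of $L_i$ with respect to the $\Sigma_{\cD_i}$ seminorm, (b) a convexity-based Cauchy--Schwarz inequality that upper bounds $L_i(\theta_i^*)-L_i(\theta_{i,\mathrm{MLE}})$ by the gradient at the truth, and (c) a self-normalized tail bound on that gradient. The role of the $\lambda I$ regularization is to turn the seminorm into a norm, which is necessary when $\Sigma_{\cD_i}$ is rank-deficient.

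For the Hessian bounds, direct differentiation of $L_i(\theta) = -\sum_{(x,y_w,y_l)\in\cD_i}\log \sigma(\theta^\top(\phi_i(x,y_w)-\phi_i(x,y_l)))$ gives
\begin{equation*}
\nabla^2 L_i(\theta) \;=\; \sum_{(x,y_w,y_l)\in \cD_i} \sigma(z)(1-\sigma(z))\,(\phi_i(x,y_w)-\phi_i(x,y_l))(\phi_i(x,y_w)-\phi_i(x,y_l))^\top,
\end{equation*}
where $z=\theta^\top(\phi_i(x,y_w)-\phi_i(x,y_l))$. Since $\sigma(z)(1-\sigma(z))\le 1/4$, the upper bound $\tfrac{1}{M}\nabla^2 L_i(\theta)\preceq \tfrac14 \Sigma_{\cD_i}$ is immediate. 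For the lower bound, $\|\phi_i\|_2\le 1$ and $\|\theta\|_2\le B'$ bound $|z|$, and then $\sigma(z)(1-\sigma(z))\ge 1/(3+e^{B'})$ on the relevant range, yielding $\tfrac{1}{M}\nabla^2 L_i(\theta)\succeq \tfrac{1}{3+e^{B'}}\Sigma_{\cD_i}$.

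To pass from the Hessian bound to a parameter error bound, I would combine two observations: the strong-convexity lower bound, together with first-order optimality of $\theta_{i,\mathrm{MLE}}$, gives
\begin{equation*}
L_i(\theta_i^*) - L_i(\theta_{i,\mathrm{MLE}}) \;\ge\; \frac{M}{2(3+e^{B'})}\,\|\theta_i^* - \theta_{i,\mathrm{MLE}}\|_{\Sigma_{\cD_i}}^2,
\end{equation*}
while convexity gives $L_i(\theta_i^*)-L_i(\theta_{i,\mathrm{MLE}})\le \langle \nabla L_i(\theta_i^*),\theta_i^*-\theta_{i,\mathrm{MLE}}\rangle \le \|\nabla L_i(\theta_i^*)\|_{(\Sigma_{\cD_i}+\lambda I)^{-1}}\cdot \|\theta_i^* - \theta_{i,\mathrm{MLE}}\|_{\Sigma_{\cD_i}+\lambda I}$ by Cauchy--Schwarz. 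Using $\|\theta_i^*-\theta_{i,\mathrm{MLE}}\|_{\Sigma_{\cD_i}+\lambda I}^2 \le \|\theta_i^*-\theta_{i,\mathrm{MLE}}\|_{\Sigma_{\cD_i}}^2 + \lambda (2B')^2$, the two inequalities collapse to a scalar quadratic in $u := \|\theta_i^*-\theta_{i,\mathrm{MLE}}\|_{\Sigma_{\cD_i}+\lambda I}$; solving it yields $u \le \cO((3+e^{B'})/M)\cdot \|\nabla L_i(\theta_i^*)\|_{(\Sigma_{\cD_i}+\lambda I)^{-1}} + \cO(\sqrt{\lambda}\,B')$.

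The main obstacle is controlling $\|\nabla L_i(\theta_i^*)\|_{(\Sigma_{\cD_i}+\lambda I)^{-1}}$. Under the BT generative model each summand in $\nabla L_i(\theta_i^*)$ is bounded and conditionally mean-zero given $(x_j,y_{1,j},y_{2,j})$, with per-sample variance proxy $(\phi_i^w-\phi_i^l)(\phi_i^w-\phi_i^l)^\top$. I would apply a self-normalized Bernstein-type inequality (in the spirit of Abbasi-Yadkori, P\'al and Szepesv\'ari) to obtain, with probability at least $1-\delta$,
\begin{equation*}
\|\nabla L_i(\theta_i^*)\|_{(\Sigma_{\cD_i}+\lambda I)^{-1}}^2 \;\le\; \cO\!\left(M\,(d+\log(1/\delta))\right).
\end{equation*}
Substituting this into the quadratic bound gives exactly the advertised $\cO\!\left((3+e^{B'})\sqrt{(d+\log(1/\delta))/M}+\sqrt{\lambda(B')^2}\right)$ control on $u$. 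The delicate points are verifying the mean-zero property against the BT likelihood so that the self-normalized martingale bound applies, and tracking the constant $(3+e^{B'})$ through the quadratic step so that the final bound matches the stated form.
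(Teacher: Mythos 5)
Your reconstruction is correct, but note that the paper offers no proof of this statement at all: it is imported verbatim as Lemma~3.1 of \citep{zhu2023principled}, and your argument (Hessian sandwich from $\sigma(z)(1-\sigma(z))$, strong convexity plus the gradient inequality to reduce to a scalar quadratic in $\|\theta_i^*-\theta_{i,\mathrm{MLE}}\|_{\Sigma_{\cD_i}+\lambda I}$, and a Bernstein/self-normalized bound on $\|\nabla L_i(\theta_i^*)\|_{(\Sigma_{\cD_i}+\lambda I)^{-1}}$ after checking conditional mean-zeroness of the BT score) is precisely the proof given in that source. The only cosmetic discrepancy is the strong-convexity constant: with $\|\phi_i\|_2\le 1$ and $\|\theta\|_2\le B'$ one has $|z|\le 2B'$ and hence the factor $2+e^{2B'}+e^{-2B'}$ rather than $3+e^{B'}$, but the lemma as restated in the paper carries the same simplification, so this does not affect anything downstream.
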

Hence, we get
\begin{align*}
    \text{(A)} &\le \frac{1}{T}\sum_{t=1}^T \sum_{i=1}^m\EE_{\pi^t}\left[\|\phi_i(x,y)\|_{(\Sigma_{\cD_i}+\lambda I)^{-1}} \|\theta_i^t - \theta_i^*\|_{\Sigma_{\cD_i}+\lambda I } \right]\\
    &\le \frac{1}{T}\sum_{t=1}^T \sum_{i=1}^m \|\EE_{\pi^t}\phi_i(x,y)\|_{(\Sigma_{\cD_i}+\lambda I)^{-1}} \cdot \cO\left(\frac{(3+e^{B'})4(\lambda_{\min}(\Sigma_{\cD_i}) + \lambda)^{-1}\sqrt{d+\log(1/\delta)}}{\sqrt{M}}+\sqrt{\lambda (B')^2}\right)\\
    &\le \widetilde{\cO}\left(\frac{m(3+e^{B'})4(\lambda_{\min}(\Sigma_{\cD_i}) + \lambda)^{-2}\sqrt{d+\log(1/\delta)}}{\sqrt{M}}+\sqrt{\lambda (B')^2}\right).
\end{align*}
The notation $\widetilde{\cO}(\cdot)$ hides all the logarithm term like $\log(1/\delta)$.

Now we consider the term $\text{(B)}.$ First, based on the convexity of $L_i(\theta)$, we have 
\begin{align*}
    L_i(\theta_i^t) - L_i(\theta_{i,\mathrm{MLE}}) & \le \langle \nabla_\theta L_i(\theta^t), \theta_i^t - \theta_{i,\mathrm{MLE}}\rangle \\
    & = \frac{1}{\eta} \langle -d_i \EE_{x\sim \rho, y\sim \pi^\theta}[\phi_i(x,y)] - \lambda_1 \EE_{x\sim \rho, y\sim \pi_{\mathrm{base}}}[\phi_i(x,y)], \theta_i^t - \theta_{i,\mathrm{MLE}}\rangle\\
     & = \frac{d_i}{\eta}\langle -\EE_{x\sim \rho, y\sim \pi^\theta}[\phi_i(x,y)] - \EE_{x\sim \rho, y\sim \pi_{\mathrm{base}}}[\phi_i(x,y)], \theta_i^t - \theta_{i,\mathrm{MLE}}  \rangle \\
     &\le \frac{d_i}{\eta} \|\EE_{x\sim \rho, y\sim \pi^\theta}[\phi_i(x,y)]-\EE_{x\sim \rho, y\sim \pi_{\mathrm{base}}}[\phi_i(x,y)]\|_{(\Sigma_{\cD_i} + \lambda I )^{-1}}\|\theta_i^t - \theta_{i,\mathrm{MLE}}\|_{\Sigma_{\cD_i} + \lambda I }\\
     &\le \frac{2d_i}{\eta}\cdot (\lambda_{\min}(\Sigma_{\cD_i})+\lambda)^{-1}\cdot \|\theta_i^t - \theta_{i,\mathrm{MLE}}\|_{\Sigma_{\cD_i} + \lambda I }\\
     &\le \cO\left( \frac{(3+e^{B'})(\lambda_{\min}(\Sigma_{\cD_i}) + \lambda)^{-2}}{\sqrt{M}} + \frac{4}{\eta}\sqrt{\lambda (B')^2}\cdot (\lambda_{\min}(\Sigma_{\cD_i})+\lambda )^{-1}\right).
\end{align*}
The last inequality uses the fact that $d_i \le 1.$
Also, with probability at least $1-\delta$, we have 
$$L_i(\theta_{i,\mathrm{MLE}}) - L_i(\theta^*) \le \tilde{\cO}(1).$$
Now sum over $t \in [T], $ we can get 
\begin{align*}
    \text{(B)}&\le \frac{\eta}{T}\sum_{t=1}^T\|d^t\|_1 \sum_{i=1}^m (L_i(\theta_i^t) - L_i(\theta^*))\\
    & \le \sqrt{m}\cdot \frac{m}{\sqrt{M}}\widetilde{\cO}\left(\frac{(3+e^{B'})(\min_i\lambda_{\min}(\Sigma_{\cD_i}) + \lambda)^{-2}}{\sqrt{M}} + \frac{4}{\eta}\sqrt{\lambda (B')^2}\cdot (\min_i\lambda_{\min}(\Sigma_{\cD_i})+\lambda )^{-1} + 1\right),
\end{align*}
where the last inequality uses the fact that $\eta = 1/\sqrt{M}$ and $\|d^t\|_1 \le \sqrt{m}$.
Hence, we have 
\begin{small}
\begin{align*}
    &D(\tilde{\pi}^T)-D(\pi^*) \\&\le \text{(A)} + \text{(B)} + \frac{2Bm}{\sqrt{T}}\\
    &\le \widetilde{\cO}\left(\frac{m^{3/2}(3+e^{B'}) (\min_i\lambda_{\min}(\Sigma_{\cD_i} + \lambda)^{-2}\sqrt{d+\log(1/\delta)}}{\sqrt{M}} + \frac{4m^{3/2}}{\eta\sqrt{M}}B'\sqrt{\lambda}\cdot (\min_i\lambda_{\min}(\Sigma_{\cD_i}) + \lambda)^{-1} + \frac{m^{3/2}}{\sqrt{M}}+\frac{B\sqrt{m}}{\sqrt{T}}\right)\\
    &\le \widetilde{\cO}\left(\frac{m^{3/2}(3+e^{B'}) (\min_i\lambda_{\min}(\Sigma_{\cD_i} + \lambda)^{-2}\sqrt{d+\log(1/\delta)}}{\sqrt{M}} + \frac{4m^{3/2}B'}{\sqrt{M}}\cdot (\min_i \lambda_{\min}(\Sigma_{\cD_i}) + \lambda)^{-1} + \frac{m^{3/2}}{\sqrt{M}} + \frac{B\sqrt{m}}{\sqrt{T}}\right)\\
    & = \frac{m}{\sqrt{M}}\cdot \widetilde{\cO}\left(\text{poly}\left(e^{B'}, \min_i\lambda_{\min}(\Sigma_{\cD_i})^{-1}, \sqrt{d+\log(1/\delta)},B'\right)\right)+ \widetilde{\cO}\left(\frac{B\sqrt{m}}{\sqrt{T}}\right).
\end{align*}
\end{small}
The last step is because $\eta = 1/\sqrt{M}, \lambda = 1/M$. Hence we complete the proof. \qed
\subsection{Proof of Theorem \ref{thm:malfareoffline}}\label{sec: malfare_proof_offline}

\begin{proof}
The main proof framework is similar to Theorem \ref{thm:offline}. The difference lies in the approach to deal with the aggregated $p$-norm of the distance. 
\begin{align}
    \sum_{n=1}^N \zeta_nd^{2q}(\overline{V}^t, W_n^*) & = \sum_{n=1}^N\zeta_n \|\overline{V}^T - \Pi_{W_n^*}(\overline{V}^T)\|^{2q}\\
    & \le \sum_{n=1}^N \zeta_n\|\overline{V}^T - \Pi_{W_n^*}(\overline{V}^{T-1})\|^{2q}\\
    & = \sum_{n=1}^N \zeta_n\left\|\frac{T-1}{T}(\overline{V}^{T-1}-\Pi_{W_n^*}(\overline{V}^{T-1})) + \frac{1}{T}(V^T - \Pi_{W_n^*}(\overline{V}^{T-1}))\right\|^{2q}. \label{ineq:pnorm}
\end{align}
For the vector $x_n,y_n \in \RR^m$ with $x_n=(T-1)(\overline{V}^{T-1}-\Pi_{W_n^*}(\overline{V}^{T-1})), y_n= (V^T-\Pi_{W_n^*}(\overline{V}^{T-1}))$  we know $\|x_n\| \le 2TB\sqrt{m}, \|y_n\| \le 2B\sqrt{m}$. Hence, since $q>1$, we have
\begin{align*}\|x_n+y_n\|^{2q} &\le (\|x_n\|^2 + \|y_n\|^2 + 2\langle x_n,y_n\rangle)^{q}\\&\le \|x_n\|^{2q} + 2\langle x_n,y_n\rangle \|x_n\|^{2q-2} + 3^q\cdot T^{2q-2}(2B)^{2q}m^q\end{align*}
 We can further bound the inequality \eqref{ineq:pnorm} as 
\begin{align}
    T^{2q} \sum_{n=1}^N \zeta_n d^{2q}(\overline{V}^{T}, W_n^*) &\le  \sum_{n=1}^N \zeta_n\|x_n+y_n\|^{2q} \\&\le (T-1)^{2q}\sum_{n=1}^N \zeta_nd^{2q}(\overline{V}^{T-1}, W_n^*) + 12^q\cdot T^{2q-2}B^{2q}m^q\\
    &\qquad  + 2 (T-1)\sum_{n=1}^N \zeta_n(\overline{V}^{T-1}-\Pi_{W_n^*}(\overline{V}^{T-1}))(V^T-\Pi_{W_n^*}(\overline{V}^{T-1}))\|x_n\|^{2q-2}.
\end{align}
Then since $\|x_n\| = (T-1) d(\overline{V}^{T-1}, W_n^*)$, we can finally get 
\begin{align}
T^{2q} \sum_{n=1}^N \zeta_nd^{2q}(\overline{V}^{T}, W_n^*) &\le (T-1)^{2q} \sum_{n=1}^N \zeta_nd^{2q}(\overline{V}^{T-1}, W_n^*) + 12^qT^{2q-2}B^{2q}m^q\\
&\qquad + 2(T-1)^{2q-1}\sum_{n=1}^N \zeta_n(\overline{V}^{T-1}-\Pi_{W_n^*}(\overline{V}^{T-1}))(V^T-\Pi_{W_n^*}(\overline{V}^{T-1}))d^{2q-2}(\overline{V}^{T-1}, W_n^*).\label{eq:recursion before}
\end{align}
Hence by the recursion, we have 
\begin{align}
    T^{2q} \sum_{n=1}^N \zeta_n d^{2q}(\overline{V}^{T}, W_n^*) &\le 12^qT^{2q-1}B^{2q}m^q \nonumber\\&\qquad + 2(t-1)^{2q-1}\sum_{t=1}^T\sum_{n=1}^N \zeta_n(\overline{V}^{t-1}-\Pi_{W_n^*}(\overline{V}^{t-1}))(V^t-\Pi_{W_n^*}(\overline{V}^{t-1}))d^{2q-2}(\overline{V}^{t-1}, W_n^*).\nonumber
\end{align}
Now the last term at the right side can be further bounded by 
\begin{small}
\begin{align}
    &\sum_{t=1}^T \sum_{n=1}^N \zeta_n (t-1)^{2q-1}(\overline{V}^{t-1}-\Pi_{W_n^*}(\overline{V}^{t-1}))(V^t-\Pi_{W_n^*}(\overline{V}^{t-1}))d^{2q-2}(\overline{V}^{t-1}, W_n^*)\nonumber\\
    &\le \sum_{t=1}^T \sum_{n=1}^N \zeta_n(t-1)^{2q-1}d^{2q-1}(\overline{V}^{t-1}, W_n^*)d_n^t\cdot (\Pi_{W_n^*}(\overline{V}^{t-1}) - V^t)\nonumber\\
    &\le  \sum_{t=1}^T\sum_{n=1}^N\zeta_n (t-1)^{2q-1}d^{2q-1}(\overline{V}^{t-1}, W_n^*)\left(\|d_n^t\|_1 J(r_1^*, \cdots, r_m^*, \overline{d_n^t}, \pi^*)+d(S(\pi^*), W_n^*) - \|d_n^t\|_1 J(\hat{r}_1, \cdots, \hat
    {r}_m, \overline{d_n^t}, \pi^t)\right)\nonumber\\
    &\le \sum_{t=1}^T(t-1)^{2q-1}\left(\sum_{n=1}^N\zeta_n d^{2q}(\overline{V}^{t-1}, W_n^*)\right)^{\frac{2q-1}{2q}}  \nonumber\\&\qquad \cdot \left(\|d^t\|_1 J(r_1^*, \cdots, r_m^*, \overline{d^t}, \pi^*)+ \sqrt[2q]{\sum_{n=1}^N\zeta_n d^{2q}(S(\pi^*), W_n^*)}- \|d^t\|_1 J(\hat{r}_1, \cdots, \hat
    {r}_m, \overline{d^t}, \pi^t)\right)\label{cauchy explain}\\
    &\le \sum_{t=1}^T(t-1)^{2q-1}\left(\sum_{n=1}^N \zeta_n d^{2q}(\overline{V}^{t-1}, W_n^*)\right)^{\frac{2q-1}{2q}} \cdot \left(D_{q}(\pi^*) + \eta \|d^t\|_1\left(\sum_{i=1}^m L_i^t(\theta^*) - \eta \sum_{i=1}^m L_i^t(\theta^t)\right)\right).\label{last term bound}
\end{align}
\end{small}
The inequality Eq.~\eqref{cauchy explain} derives from the definition of $d^t$ in Eq.~\eqref{eq:dir malfare} and Cauchy's inequality.
Let $S_T = \sqrt[2q]{\sum_{n=1}^N \zeta_n d^{2q}(\overline{V}^T, W_n^*)}$, then we can get 
\begin{align*}
    T S_T^{2q}\le 12^q\cdot B^{2q}m^q +  \sum_{t=1}^T \frac{2(t-1)^{2q-1}}{T^{2q-1}}S_{t-1}^{2q-1}\cdot \left(D_q(\pi^*) + \eta \|d^t\|_1\cdot \left( \sum_{i=1}^m L_i^t(\theta^*) -  \sum_{i=1}^m L_i^t(\theta^t)\right)\right).
\end{align*}
Define $A_t =  D_q(\pi^*) + \eta \|d^t\|_1\cdot \left(\sum_{i=1}^m L_i^t(\theta^*) - \sum_{i=1}^m L_i^t(\theta^t)\right),$ then we use the induction to show that there exists a constant $C_q$ such that 
\begin{align}
    S_{t} \le  \left(\frac{1}{t}\sum_{s=1}^t A_s + C_qT^{-1/2q}\right). \nonumber
\end{align}
In fact, it holds when $t = 1$. Now suppose it holds for $t=1,2,\cdots, T-1$, we have 
\begin{align*}
    S_T^{2q}&\le 12^q\cdot B^{2q}m^q/T +  \sum_{t=1}^T \frac{2(t-1)^{2q-1}}{T^{2q-1}}S_{t-1}^{2q-1}\cdot \frac{A_t}{T}\\
    &\le 12^q\cdot B^{2q}m^q/T + 2\sum_{t=1}^T \left(\frac{1}{T}\sum_{s=1}^{t-1}A_s + C_qT^{-1/2q}\right)^{2q-1} \cdot \frac{A_t}{T}\\
    &\le 12^q\cdot B^{2q}m^q/T + 2\sum_{t=1}^T \sum_{k=0}^{2q-1}\binom{2q-1}{k}\frac{1}{T}\left(\sum_{s=1}^{t-1}A_s\right)^{k+1}\cdot \frac{A_t}{T}\cdot (C_q)^{2q-1-k}T^{-\frac{2q-1-k}{2q}}\\
    &\le 12^q\cdot B^{2q}m^q/T  + \sum_{k=0}^{2q-1}\binom{2q-1}{k}(C_q)^{2q-1-k}T^{-\frac{2q-1-k}{2q}}\left(\frac{1}{T}\sum_{t=1}^T A_t\right)^{k+1}.
\end{align*}
Now we choose $C_q = (12^q\cdot B^{2q}m^q)^{\frac{1}{2q}} = \sqrt{12B^2m}$, then we have 
\begin{align*}
    S_T^{2q} &\le \cO(C_q^{2q}/T)  + \sum_{k=0}^{2q-1}\binom{2q-1}{k}(C_q)^{2q-1-k}T^{-\frac{2q-1-k}{2q}}\left(\frac{1}{T}\sum_{t=1}^T A_t\right)^{k+1}\\
    &\le \cO(C_q^{2q}/T)  + \sum_{k=0}^{2q-1}\binom{2q}{k+1}(C_q)^{2q-1-k}T^{-\frac{2q-1-k}{2q}}\left(\frac{1}{T}\sum_{t=1}^T A_t\right)^{k+1}\\
    &\le \left(\frac{1}{T}\sum_{t=1}^T A_t + C_qT^{-1/2q} \right)^{2q}.
\end{align*}
which implies that 
\begin{align}
    S_T \le  \frac{1}{T}\sum_{s=1}^T A_s + C_qT^{-1/2q}=\frac{1}{T}\sum_{s=1}^T A_s + 4B\sqrt{m}\cdot T^{-1/2q}.
\end{align}
Hence we have 
\begin{align*}
    \sqrt[2q]{\sum_{n=1}^N\zeta_n d^{2q}(\overline{V}^T , W_n^*)} - D_q(\pi^*)\le \frac{\eta}{T}\sum_{t=1}^T\|d^t\|_1\sum_{i=1}^m (L_i^t(\theta^*) -L_i^t(\theta^t)) + \widetilde{\cO}(B\sqrt{m} T^{-1/2q}).
\end{align*}
Now we derive the final regret. We can see 
\begin{align}
    &D_q(\tilde{\pi}^T) - D_q(\pi^*)\\
    &=\sqrt[2q]{\sum_{n=1}^N\zeta_n d^{2q}(S(\tilde{\pi}^T), W_n^*)}  - D_q(\pi^*)\nonumber\\
    &=\sqrt[2q]{\sum_{n=1}^N\zeta_n d^{2q}(W_n^*, \EE_{\tilde{\pi}^T}\left[r^*(x,y) \right]- \frac{\beta}{T}\sum_{t=1}^T \DD_{\mathrm{KL}}(\pi^t\|\pi_{\mathrm{ref}}))\cdot \mathbf{1}^m} \nonumber\\
    &\qquad - \sqrt[2q]{\sum_{n=1}^N\zeta_n d^{2q}(W_n^*, \frac{1}{T}\sum_{t=1}^T \EE_{\pi^t}\left[r^{\theta^t}(x,y) \right]- \frac{\beta}{T}\sum_{t=1}^T \DD_{\mathrm{KL}}(\pi^t\|\pi_{\mathrm{ref}}))\cdot \mathbf{1}^m}+ \sqrt[2q]{\sum_{n=1}^N\zeta_n d^{2q}(W_n^*, \overline{V}^T)} - D_q(\pi^*)\nonumber\\
    &\le \underbrace{\sqrt[2q]{\sum_{n=1}^N\zeta_n \left(d(W_n^*, \EE_{\tilde{\pi}^T}\left[r^*(x,y) \right]- \frac{\beta}{T}\sum_{t=1}^T \DD_{\mathrm{KL}}(\pi^t\|\pi_{\mathrm{ref}})\cdot \mathbf{1}^m)-d(W_n^*, \frac{1}{T}\sum_{t=1}^T\EE_{\pi^t}\left[\hat{r}^t(x,y) \right]- \frac{\beta}{T}\sum_{t=1}^T \DD_{\mathrm{KL}}(\pi^t\|\pi_{\mathrm{ref}})\cdot \mathbf{1}^m)\right)^{2q}}}_{\text{(A)}}\nonumber\\&\qquad + \underbrace{\frac{\eta}{T}\sum_{t=1}^T\|d^t\|_1\sum_{i=1}^m (L_i^t(\theta^*) -L_i^t(\theta^t)) }_{\text{(B)}} + \widetilde{\cO}(B\sqrt{m} T^{-1/2q}).\label{final result 2}
\end{align}
The last inequality uses the triangle inequality for $2q$-norm. Now also note that \begin{align*}d(W_n^*, &\EE_{\tilde{\pi}^T}\left[r^*(x,y) \right]- \frac{\beta}{T}\sum_{t=1}^T \DD_{\mathrm{KL}}(\pi^t\|\pi_{\mathrm{ref}})\cdot \mathbf{1}^m)-d(W_n^*, \frac{1}{T}\sum_{t=1}^T\EE_{\pi^t}\left[r^*(x,y) \right]- \frac{\beta}{T}\sum_{t=1}^T \DD_{\mathrm{KL}}(\pi^t\|\pi_{\mathrm{ref}})\cdot \mathbf{1}^m) \\&\qquad \le \frac{1}{T}\sum_{t=1}^T\sum_{i=1}^m \EE_{\pi^t}|r_i^*(x,y) - \hat{r}_i^t(x,y)|,\end{align*} we have 
\begin{align*}
    \text{(A)}\le \sqrt[2q]{\left(\sum_{n=1}^N \zeta_n\right) \left(\sum_{i=1}^m\Bigg|\frac{1}{T}\sum_{t=1}^T \EE_{\pi^t}[r_i^*(x,y) -\hat{r}_i^t(x,y)]\Bigg|\right)^{2q}}  = \sum_{i=1}^m\Bigg|\frac{1}{T}\sum_{t=1}^T \EE_{\pi^t}[r_i^*(x,y) -\hat{r}_i^t(x,y)]\Bigg|
\end{align*}
Now follow the same proof as Theorem \ref{thm:offline}, 
\begin{align*}
    \text{(A)}\le \widetilde{\cO}\left(\frac{m(3+e^{B'})4(\lambda_{\min}(\Sigma_{\cD_i}) + \lambda)^{-2}\sqrt{d+\log(1/\delta)}}{\sqrt{M}}+\sqrt{\lambda (B')^2}\right),
\end{align*}
and 
\begin{align*}
    &\text{(B)}\le \frac{\eta}{T}\sum_{t=1}^T\|d^t\|_1\sum_{i=1}^m (L_i^t(\theta^*) -L_i^t(\theta^t))\\&\qquad \le \frac{N\sqrt{m}\cdot m}{\sqrt{M}}\widetilde{\cO}\left(\frac{(3+e^{B'})(\min_i\lambda_{\min}(\Sigma_{\cD_i}) + \lambda)^{-2}}{\sqrt{M}} + \frac{4}{\eta}\sqrt{\lambda (B')^2}\cdot (\min_i\lambda_{\min}(\Sigma_{\cD_i})+\lambda )^{-1} + 1\right),
\end{align*}
where the last inequality we use the fact that 
\begin{align*}
    \|d^t\|_1 =  \left\|\sum_{n=1}^Nd_n^t\cdot \frac{\zeta_n\|W^{(n)}- \overline{V}^t\|_2^{2q-1}}{\left(\sum_{n=1}^N \zeta_n\|W^{(n)}- \overline{V}^t\|_2^{2q}\right)^{\frac{2q-1}{2q}}}\right\|_1\le \sum_{n=1}^N \|d_n^t\|_1\cdot \zeta_n^{1/2q} \le N\sqrt{m}.
\end{align*}
Combining the Eq.~\eqref{final result 2} and the upper bounds for (A) and (B), substitute into $\eta = 1/\sqrt{M}$ and $\lambda = 1/M$, we can complete the final proof.
\end{proof}

\subsection{Proof of Theorem \ref{thm:online}}
\begin{proof}
Recall that $V^t \in \RR^m$ with $(V^t)_i = \EE_{\pi^t}[r_i^{\theta_i^t}(x,y) - \beta \DD_{\mathrm{KL}}(\pi^t \|\pi_{\mathrm{ref}})]$, and $\overline{V}^t = \frac{1}{t}\sum_{i=1}^t V^i.$ We also define $W^0 = \{(0,0)\}.$ Since $W^t$ is the estimation of $W^*$ at round $t$, we have 
\begin{align}
    d(\overline{V}^T, W^T)^2 &= \|\overline{V}^T - \Pi_{W^T}(\overline{V}^T)\|^2\nonumber\\
    &\le \|\overline{V}^T - \Pi_{W^T}(\overline{V}^{T-1})\|^2\nonumber\\
    &\le \|\overline{V}^T - \Pi_{W^{T-1}}(\overline{V}^{T-1})\|^2 + \|\Pi_{W^T}(\overline{V}^{T-1}) - \Pi_{W^{T-1}}(\overline{V}^{T-1})\|^2\nonumber \\&\qquad + 2\langle \overline{V}^T - \Pi_{W^{T-1}}(\overline{V}^{T-1}), \Pi_{W^T}(\overline{V}^{T-1}) - \Pi_{W^{T-1}}(\overline{V}^{T-1})\rangle.\label{ineq:first}
\end{align}
Now by Lemma \ref{lemma:dis of proj}, since $\|V^{T-1}\|_\infty\le B$ is bounded, we have 
$$\|\Pi_{W^T}(\overline{V}^{T-1}) - \Pi_{W^{T-1}}(\overline{V}^{T-1})\|_2^2 \le 4d(\overline{V}^{T-1}, W^{T-1})d_{B_1}(W^T, W^{T-1}) + 2d^2_{B_1}(W^T, W^{T-1}).$$
Then we can get 
\begin{align*}
    \|\Pi_{W^T}(\overline{V}^{T-1}) - \Pi_{W^{T-1}}(\overline{V}^{T-1})\|_2\le 2\sqrt{d(\overline{V}^{T-1}, \Pi_{W^{T-1}}(\overline{V}^{T-1}))d_{B_1}(W^T, W^{T-1})}+\sqrt{2}d_{B_1}(W^T, W^{T-1}).
\end{align*}
Then the third term on the right side can be bounded by 
\begin{align*}
    &\langle \overline{V}^T - \Pi_{W^{T-1}}(\overline{V}^{T-1}), \Pi_{W^T}(\overline{V}^{T-1}) - \Pi_{W^{T-1}}(\overline{V}^{T-1})\rangle\\&\le 
    \langle \overline{V}^{T-1} - \Pi_{W^{T-1}}(\overline{V}^{T-1}), \  \Pi_{W^T}(\overline{V}^{T-1}) - \Pi_{W^{T-1}}(\overline{V}^{T-1})\rangle\\
    &\qquad + \|\overline{V}^T - \overline{V}^{T-1}\|\cdot \| \Pi_{W^T}(\overline{V}^{T-1}) - \Pi_{W^{T-1}}(\overline{V}^{T-1})\|\\
    &\le d(\overline{V}^{T-1}, W^{T-1})\cdot \left\langle d^t,  \Pi_{W^T}(\overline{V}^{T-1}) - \Pi_{W^{T-1}}(\overline{V}^{T-1})\right\rangle + \frac{1}{T}\| \Pi_{W^T}(\overline{V}^{T-1}) - \Pi_{W^{T-1}}(\overline{V}^{T-1})\|.
\end{align*}
Now denote $\tilde{d}^t = \frac{\Pi_{W^T}(\overline{V}^{T-1})-\overline{V}^{T-1}}{\|\Pi_{W^T}(\overline{V}^{T-1})-\overline{V}^{T-1}\|}$, then by Lemma \ref{lemma:direc}, we can get 
\begin{align*}
    d(\overline{V}^{T-1}, W^{T-1})\cdot \|d^t - \tilde{d}^t\| \le 4\sqrt{d(\overline{V}^{T-1}, W^{T-1})d_{B_1}(W^{T-1}, W^T)} + 2d_{B_1}(W^{T-1}, W^T),
\end{align*}
Then we can bound the inner product term as 
\begin{align*}
    &d(\overline{V}^{T-1}, W^{T-1})\cdot \left\langle d^t,  \Pi_{W^T}(\overline{V}^{T-1}) - \Pi_{W^{T-1}}(\overline{V}^{T-1})\right\rangle\\
    &\le d(\overline{V}^{T-1}, W^{T-1})\cdot \|d^t - \tilde{d}^t\|\cdot \| \Pi_{W^T}(\overline{V}^{T-1}) - \Pi_{W^{T-1}}(\overline{V}^{T-1})\|\\
    &\qquad + d(\overline{V}^{T-1}, W^{T-1})\cdot \left\langle \tilde{d}^t,  \Pi_{W^T}(\overline{V}^{T-1}) - \Pi_{W^{T-1}}(\overline{V}^{T-1})\right\rangle.
\end{align*}
By the definition of $\tilde{d}^t$, we know that 
\begin{align*}
   \langle \tilde{d}^t,  \Pi_{W^T}(\overline{V}^{T-1})\rangle &= \min_{x \in W^T}\langle \tilde{d}^t, x\rangle \le \langle \tilde{d}^t, \Pi_{W^T}(\Pi_{W^{T-1}}(\overline{V}^{T-1}))\rangle \\
   &\le d_{B_1}(W^T, W^{T-1}) + \langle \tilde{d}^t, \Pi_{W^{T-1}}(\overline{V}^{T-1})\rangle .
\end{align*}
Hence the inner product term can be further bounded by  
\begin{align}
    &d(\overline{V}^{T-1}, W^{T-1})\cdot \left\langle d^t,  \Pi_{W^T}(\overline{V}^{T-1}) - \Pi_{W^{T-1}}(\overline{V}^{T-1})\right\rangle\nonumber\\
    &\le \left(4\sqrt{d(\overline{V}^{T-1}, W^{T-1})d_{B_1}(W^{T-1}, W^T)} + 2d_{B_1}(W^{T-1}, W^T) \right)^2\nonumber\\
    &\qquad + d(\overline{V}^{T-1}, W^{T-1})\cdot d_{B_1}(W^T, W^{T-1})\nonumber\\
    &\le 33d(\overline{V}^{T-1}, W^{T-1})\cdot d_{B_1}(W^T, W^{T-1}) + 8d^2_{B_1}(W^{T-1}, W^T).\label{eq:inner product estimate alpha}
\end{align}
Now continue to bound the right side in Eq. \eqref{ineq:first}, we can further get that
\begin{small}
\begin{align}
    &T^2d(\overline{V}^T, W^T)^2\le T^2\|\overline{V}^T - \Pi_{W^{T-1}}(\overline{V}^{T-1})\|^2 + 37T^2 d(\overline{V}^{T-1}, W^{T-1})\cdot d_{B_1}(W^T, W^{T-1}) + 10T^2 d^2_{B_1}(W^{T-1}, W^T).\label{ineq:second}
\end{align}
\end{small}
Now we can further bound the Eq. \eqref{ineq:second} by expanding the first term on the right side:
\begin{align}
    T^2\|\overline{V}^T - \Pi_{W^{T-1}}(\overline{V}^{T-1})\|^2 &= \left(T-1\right)^2 \|\overline{V}^{T-1} - \Pi_{W^{T-1}}(\overline{V}^{T-1})\|^2 + \|V^T - \Pi_{W^{T-1}}(\overline{V}^{T-1})\|^2\nonumber\\
    &\qquad  + 2(T-1)\left\langle \overline{V}^{T-1} - \Pi_{W^{T-1}}(\overline{V}^{T-1}), V^{T} - \Pi_{W^{T-1}}(\overline{V}^{T-1})\right\rangle\\
    &\le \left(T-1\right)^2 \|\overline{V}^{T-1} - \Pi_{W^{T-1}}(\overline{V}^{T-1})\|^2 + (B+B_1)^2m\nonumber\\
    &\qquad  + 2(T-1)\left\langle \overline{V}^{T-1} - \Pi_{W^{T-1}}(\overline{V}^{T-1}), V^{T} - \Pi_{W^{T-1}}(\overline{V}^{T-1})\right\rangle.
\end{align}
The inner product term is 
\begin{align*}
    \left\langle \overline{V}^{T-1} - \Pi_{W^{T-1}}(\overline{V}^{T-1}), V^{T} - \Pi_{W^{T-1}}(\overline{V}^{T-1})\right\rangle = d(\overline{V}^{T-1}, W^{T-1})\cdot \left\langle d^{T-1},  \Pi_{W^{T-1}}(\overline{V}^{T-1})-V^T \right\rangle.
\end{align*}
Note that $\langle d^{T-1}, \Pi_{W^{T-1}}(\overline{V}^{T-1})\rangle = \min_{z \in W^{T-1}}\langle d^{T-1}, z\rangle. $ Because $\|\Pi_{W^*}(\overline{V}^{T-1})\|\le B_1,$ there is a $z' \in W^{T-1}$ such that $\|z'-\Pi_{W^*}(\overline{V}^{T-1})\| \le d_{B_1}(W^*, W^{T-1}).$ Hence, 
\begin{align*}\langle d^{T-1}, \Pi_{W^{T-1}}(\overline{V}^{T-1})\rangle &\le \min_{z \in W^{T-1}}\langle d^{T-1},z\rangle \le  \langle d^{T-1}, z'\rangle  \le \min_{z \in W^*}\langle d^{T-1},z\rangle + d_{B_1}(W^*, W^{T-1})\\
&\le J(r^*, d^{T-1}, \pi^*) + D(\pi^*) + d_{B_1}(W^*, W^{T-1}).\end{align*}
The last inequality holds by Lemma \ref{lemma:approach}. Now we continue to bound the inner product term. We have  
\begin{align*}
&\left\langle \overline{V}^{T-1} - \Pi_{W^{T-1}}(\overline{V}^{T-1}), V^{T} - \Pi_{W^{T-1}}(\overline{V}^{T-1})\right\rangle\\&\qquad  = d(\overline{V}^{T-1}, W^{T-1})\cdot \left\langle d^{T-1},  \Pi_{W^{T-1}}(\overline{V}^{T-1})-V^T \right\rangle 
    \\&\qquad \le d(\overline{V}^{T-1}, W^{T-1})\cdot (\|d^{T-1}\|_1\cdot J(r_1^*, \cdots, r_m^*, \overline{d^{T-1}}, \pi^*)+D(\pi^*)+d_{B_1}(W^{T-1}, W^*)- J(\hat{r}_1^t, \cdots, \hat{r}_m^t, d^{T-1}, \pi^t))\\
    &\qquad \le d(\overline{V}^{T-1}, W^{T-1})\cdot \left(\eta \|d^{T-1}\|_1\cdot \left(\sum_{i=1}^m L_i^{T-1}(\theta^*)-\sum_{i=1}^m L_i^{T-1}(\theta^{T-1}) \right)+ D(\pi^*)+d_{B_1}(W^{T-1}, W^*)\right).
\end{align*}
Thus the Eq. \eqref{ineq:second} can be rewritten as 
\begin{small}
\begin{align*}
    & T^2d(\overline{V}^T, W^T)^2
    \\& \le \left(T-1\right)^2 \|\overline{V}^{T-1} - \Pi_{W^{T-1}}(\overline{V}^{T-1})\|^2+(B+B_1)^2m + 10T^2 d_{B_1}^2(W^{T-1}, W^T)\\&\qquad  + 2(T-1) d(\overline{V}^{T-1}, W^{T-1})\cdot \left(\eta \|d^t\|_1\cdot \left( \sum_{i=1}^m L_i^{T-1}(\theta^*)-\sum_{i=1}^m L_i^{T-1}(\theta^t) \right)+ D(\pi^*)+d_{B_1}(W^{T-1}, W^*) + 37 Td_{B_1}(W^T, W^{T-1})\right).
\end{align*}
\end{small}
Then by the recursion, we can get 
\begin{align*}
    & Td(\overline{V}^T, W^T)^2
    \\&\le (B+B_1)^2m + \sum_{t=1}^T \frac{10t^2 d_{B_1}^2(W^{t-1}, W^t)}{T} \\
    &\qquad +\sum_{t=1}^T \frac{2(t-1)}{T}d(\overline{V}^{t-1}, W^{t-1})\cdot \Bigg(\eta \|d^{T-1}\|_1\cdot \left( \sum_{i=1}^m L_i^{T-1}(\theta^*)-\sum_{i=1}^m L_i^{T-1}(\theta^{T-1}) \right)\\&\qquad \qquad + D(\pi^*)+d_{B_1}(W^{t-1}, W^*) + 37 td_{B_1}(W^t, W^{t-1})\Bigg).
\end{align*}
By this recursion formula, we can use the induction method to prove that 
\begin{align*}
    d(\overline{V}^T, W^T) &\le \frac{(B+B_1)^2m}{\sqrt{T}} + \underbrace{\sum_{t=1}^T \frac{10t^2}{T^{3/2}} d_{B_1}^2(W^{t-1}, W^t) }_{\textrm{(A)}} + D(\pi^*) + \underbrace{\frac{\eta}{T}\sum_{t=1}^{T-1}\|d^t\|_1\sum_{i=1}^m (L_i^t(\theta^*) - L_i^t(\theta^t))}_{\textrm{(B)}}\\&\qquad +\underbrace{\frac{1}{T}\sum_{i=1}^T  d_{B_1}(W^{t-1}, W^*)}_{\textrm{(C)}} + \underbrace{\frac{1}{T}\sum_{t=1}^T37td_{B_1}(W^t, W^{t-1})}_{\textrm{(D)}}.
\end{align*}
 Now we bound all four terms. We first prove that term (A), (C) and (D) are all at level $\widetilde{\cO}(1/\sqrt{T}).$

\paragraph{Term (A):} First we consider term (A). Since $W^{t} = \bigcap_{n=1}^N W_{p^{(n)},c^{(n)}}^{\alpha^{t,(n)}}$, the term $d_{B_1}^2(W^{t-1}, W^t)$ can be bounded by 
\begin{align*}d_{B_1}^2(W^{t-1}, W^t) \le  \left(\sum_{n=1}^Nd_{B_1}\left(W_{p^{(n)},c^{(n)}}^{\alpha^{t-1,(n)}}, W_{p^{(n)},c^{(n)}}^{\alpha^{t,(n)}}\right)\right)^2\le N\sum_{n=1}^N d^2_{B_1}\left(W_{p^{(n)},c^{(n)}}^{\alpha^{t-1,(n)}}, W_{p^{(n)},c^{(n)}}^{\alpha^{t,(n)}}\right).\end{align*} Since 
 $\alpha^t = \frac{t-1}{t}\alpha^{t-1} +\frac{1}{t}\hat{\alpha}^t,$ we can know $\|\alpha^t- \alpha^{t-1}\|_\infty \le \frac{1}{t}\|\hat{\alpha}^t\|_\infty \le \frac{1}{t}$. Then, by Lemma \ref{lemma:estimation error of parameterized target set}, we have 
 \begin{align}d_{B_1}(W^{\alpha^{t-1,(n)}}_{p^{(n)},c^{(n)}}, W^{\alpha^{t,(n)}}_{p^{(n)},c^{(n)}}) \le \frac{m^{3/2}B_1}{|p^{(n)}|}\cdot \frac{1}{t}.\label{ineq:primal bound for distance of W}\end{align}
 Thus by Eq. \eqref{ineq:primal bound for distance of W}, we know that 
 \begin{align}
     \textrm{(A)}&\le \frac{10N}{T^{3/2}}\sum_{n=1}^N \sum_{t=1}^T \frac{m^3B_1^2}{(p^{(n)})^2}\nonumber\\
     &\le \sum_{n=1}^N \frac{10Nm^3B_1^2}{(p^{(n)})^2}\cdot \frac{1}{\sqrt{T}}.\label{ineq:upperbound of (A)}
 \end{align}

 \paragraph{Term (C):}
 We have 
 \begin{align}
     \textrm{(C)}&\le \frac{B_1}{T} + \frac{1}{T}\cdot \sum_{n=1}^N\frac{m^{3/2}B_1}{|p^{(n)}|}\cdot \sum_{t=2}^T  \|\alpha^{t-1,(n)}-\alpha^*\|_\infty\nonumber\\
     &\le \frac{1}{T}\cdot \sum_{n=1}^N\frac{m^{3/2}B_1}{|p^{(n)}|}\cdot \gamma^{-1}\exp(4/\beta)\cdot \widetilde{\cO}\left(\mathrm{poly}(m,e^B, d,\log(1/\delta))\right)\cdot \left(\sum_{t=1}^T \frac{1}{\sqrt{t}}+1\right)\nonumber\\
     &\le\frac{1}{\sqrt{T}}\cdot \frac{Nm^{3/2}B_1}{\min_{n \in [N]}|p^{(n)}|}\cdot \gamma^{-1}\exp(4/\beta)\cdot \widetilde{\cO}\left(\mathrm{poly}(m,e^B, d,\log(1/\delta))\right). \label{ineq:(C)general: first}
 \end{align}

 \paragraph{Term (D):} 
First, we have 
\begin{align*}
    \textrm{(D)}\le \frac{1}{T}\sum_{t=1}^T 37t \sum_{n=1}^N d_{B_1}(W^{t,(n)}, W^{t-1,(n)}).
\end{align*}
Then, by Lemma \ref{lemma:estimation error of parameterized target set}, 
 \begin{align}
     \textrm{(D)}&\le \frac{37B_1}{T} + \frac{1}{T}\sum_{t=2}^T\sum_{n=1}^N  \frac{37tm^{3/2}B_1}{|p^{(n)}|}\|\alpha^{t,(n)} - \alpha^{t-1,(n)}\|_\infty\nonumber\\
     &\le \frac{37m^{3/2}B_1}{T}\cdot \sum_{n=1}^N \frac{1}{|p^{(n)}|}\sum_{t=2}^T \left(\left\|\hat{\alpha}^{t,(n)}-\alpha^{t-1,(n)}\right\|_\infty + 1\right)\nonumber\\
     &\le \frac{37m^{3/2}B_1}{T}\cdot \sum_{n=1}^N \frac{1}{|p^{(n)}|}\left(\sum_{t=2}^T \left(\| \hat{\alpha}^{t,(n)}-\alpha^{*,(n)}\|_\infty + \left\|\alpha^{*,(n)}-\alpha^{t-1,(n)}\right\|_\infty\right) + 1\right)\nonumber\\
     &\le \frac{37m^{3/2}B_1}{T} \cdot \sum_{n=1}^N \frac{1}{|p^{(n)}|}\left(\underbrace{\sum_{t=2}^T \|\hat{\alpha}^{t,(n)} - \alpha^{*,(n)}\|_\infty}_{\textrm{(E)}} + \underbrace{\sum_{t=2}^T \|\alpha^{t-1,(n)}-\alpha^{*,(n)}\| }_{\textrm{(F)}}\right) + \frac{37m^{3/2}B_1}{T}.\label{ineq:(D) first}
 \end{align}
 For the term (E), by Eq.~\eqref{eq:estimate alpha final result}, we have 
 \begin{align*}
     \mathrm{(E)}\le \sum_{t=1}^T \|\hat{\alpha}^{t,(n)}-\alpha^{*,(n)}\|_\infty\le \gamma^{-1}\cdot \widetilde{\cO}\left(\mathrm{poly}(m,e^B, \exp(1/\beta), d,\log(1/\delta))\right)\cdot \sqrt{T}.
 \end{align*}
 Also by Eq.~\eqref{eq:estimate alpha final result}, 
 \begin{align*}
     \textrm{(F)}&\le \sum_{t=2}^T \gamma^{-1}\cdot \widetilde{\cO}\left(\mathrm{poly}(m,e^B,\exp(1/\beta) d,\log(1/\delta))\right)\cdot \frac{1}{\sqrt{t}}\\
     &\le \gamma^{-1}\cdot \widetilde{\cO}\left(\mathrm{poly}(m,e^B, \exp(1/\beta), d,\log(1/\delta))\right)\cdot \sqrt{T}.
 \end{align*}
 By Theorem \ref{thm:est of alpha}, the term (E) can be bounded by 
 \begin{align*}
     \textrm{(E)}\le \gamma^{-1}\mathrm{poly}(\exp(1/\beta), m, e^B, d, \log(1/\delta))\widetilde{\cO}(\sqrt{T}).
 \end{align*}
 Thus substitute these upper bound to the Eq. \eqref{ineq:(D) first}, we get
 \begin{align}
     \textrm{(D)}&\le \frac{1}{T}\sum_{t=1}^T 37t \sum_{n=1}^N d_{B_1}(W^{t,(n)}, W^{t-1,(n)})\nonumber\\&\le \gamma^{-1}\mathrm{poly}(\exp(1/\beta), m, N, e^B, d, \log(1/\delta), B_1, (\min_{n \in [N]}p^{(n)})^{-1})\cdot\widetilde{\cO}(1/\sqrt{T}).\label{ineq:(D)final}
 \end{align}

 \paragraph{Combine them:} Now we combine the upper bound of (A), (C), (D), i.e., Eq. \eqref{ineq:upperbound of (A)}, \eqref{ineq:(C)general: first}, \eqref{ineq:(D)final}, we can get 
 \begin{align}
     d(\overline{V}^T, W^T) \le \frac{(B+B_1)^2m}{\sqrt{T}} + \gamma^{-1}\mathrm{poly}(\exp(1/\beta), m, e^B, d, \log(1/\delta), \min_{n \in [N]}\frac{1}{p^{(n)}}, B_1)\widetilde{\cO}(1/\sqrt{T}) + D(\pi^*) + \textrm{(B)}. \label{eq:second step}
 \end{align}
 Now we consider the proof of Theorem \ref{thm:online}. 
\begin{align*}
    &D(\tilde{\pi}^T) - D(\pi^*) \\ &= d(W^*, \EE_{\tilde{\pi}^T}[r^*(x,y)]-\beta \DD_{\mathrm{KL}}(\tilde{\pi}^T\| \pi_{\mathrm{ref}})) - D(\pi^*)\\
    &\le d(W^*, \EE_{\tilde{\pi}^T}[r^*(x,y)]-\frac{\beta}{T} \sum_{t=1}^T \DD_{\mathrm{KL}}(\pi^t\|\pi_{\mathrm{ref}})) - D(\pi^*)\\
    &= \underbrace{d(W^*, \EE_{\tilde{\pi}^T}[r^*(x,y)]-\frac{\beta}{T} \sum_{t=1}^T \DD_{\mathrm{KL}}(\pi^t\|\pi_{\mathrm{ref}})) - d(W^*, \frac{1}{T}\sum_{t=1}^T\EE_{\pi^t}[\hat{r}^t(x,y)]-\frac{\beta}{T} \sum_{t=1}^T \DD_{\mathrm{KL}}(\pi^t\|\pi_{\mathrm{ref}}))}_{\textrm{(*)}} \\
    &\qquad + d(W^*, \frac{1}{T}\sum_{t=1}^T \EE_{\pi^t}[\hat{r}^t(x,y)]-\frac{\beta}{T} \sum_{t=1}^T \DD_{\mathrm{KL}}(\pi^t\|\pi_{\mathrm{ref}})) - D(\pi^*)\\
    &=\mathrm{(*)}
    + d\left(W^*, \overline{V}^T\right) - D(\pi^*)\\
    &\le \mathrm{(*)}+ d(W^*, W^T)
    + \underbrace{d\left(W^T, \overline{V}^T\right)  - D(\pi^*)}_{\mathrm{(**)}}.
\end{align*}
\paragraph{Term $(\ast)$:}
First, the term $(\ast)$ can be bounded by 
\begin{align*}
\mathrm{(*)}&\le \sum_{i=1}^m\Bigg|\frac{1}{T}\sum_{t=1}^T \EE_{\pi^t}\left[  \hat{r}_i^t(x,y) - r_i^*(x,y)\right]\Bigg|.\end{align*}
Now note that 
\begin{align}
&\frac{1}{T}\sum_{t=1}^T \EE_{\pi^t}\left[  \hat{r}_i^t(x,y) - r_i^*(x,y)\right]= \frac{1}{T}\sum_{t=1}^T \EE_{y_1\sim \pi^t,y_2\sim \pi_{\mathrm{base}}}\left[ \left((\hat{r}_i^t(x,y_1)- r_i^t(x,y_2)) - (r_i^*(x,y_1)  - r_i^*(x,y_2))\right) \right].\label{eq:first step}
\end{align}
Now since the reward contains a linear structure, by Lemma \ref{lemma:linearstructure} with $d_{\mathrm{cover}}(1/T) = \widetilde{\cO}(d),$ for any $\mu_i>0$ we can derive that 
\begin{align}
    (*)&\le \sum_{i=1}^m \mu_i \cdot \sum_{t=1}^T   \sum_{j=1}^{t-1}\EE_{y_1\sim \pi^j, y_2\sim \pi_{\textrm{base}}}[\left(r_i^t(x,y_1)-r_i^t(x,y_2) - (r_i^*(x,y_1)-r_i^*(x,y_2))\right)^2] + \frac{d_{\mathrm{cover}}(1/T)}{4\mu_i} + \widetilde{\cO}(Bd)\nonumber\\
    &\le\sum_{i=1}^m\mu_i \exp(4/\beta)\kappa\cdot \sum_{t=1}^T   \sum_{j=1}^{t-1}\EE_{y_1\sim \pi^j, y_2\sim \pi^j}[\left(r_i^t(x,y_1)-r_i^t(x,y_2) - (r_i^*(x,y_1)-r_i^*(x,y_2))\right)^2] + \frac{d_{\mathrm{cover}}(1/T)}{4\mu_i}+ \widetilde{\cO}(Bd)
    \nonumber\\& =\sum_{i=1}^m \mu_i \exp(4/\beta)\kappa\cdot \sum_{t=1}^T \sum_{j=1}^{t-1}\EE_{y_1,y_2\sim \pi^j}[\Delta_i^t(x,y)^2] + \frac{d_{\mathrm{cover}}(1/T)}{4\mu_i}+ \widetilde{\cO}(Bd),\label{ineq:(A)}
\end{align}
The last inequality uses the fact that 
$$\sup_{x,y} \frac{\pi_{\mathrm{base}}(y\mid x)}{\pi^j(y\mid x)} \le \sup_{x,y} \frac{\pi_{\mathrm{base}}(y\mid x)}{\pi_{\mathrm{ref}}(y\mid x)}\cdot \sup_{x,y}\frac{\pi_{\mathrm{ref}}(y\mid x)}{\pi^j(y\mid x)} \le \exp(4/\beta)\cdot \kappa,$$
where $\kappa = \sup_{x,y} \frac{\pi_{\mathrm{base}}(y\mid x)}{\pi_{\mathrm{ref}(y\mid x)}}$ \citep{cen2024value}.
\paragraph{Term ($\ast\ast$):}
Now we consider the term $(\ast\ast)$. 
By Eq.~\eqref{eq:second step}, we know that 
\begin{align}
    (**) &\le \frac{(B+B_1)^2m}{\sqrt{T}} + \gamma^{-1}\mathrm{poly}(\exp(1/\beta), m, e^B, d, \log(1/\delta), \min_{n \in [N]}\frac{1}{p^{(n)}}, B_1)\widetilde{\cO}(1/\sqrt{T}) \nonumber\\&\qquad + \frac{1}{T}\sum_{t=1}^T \sum_{i=1}^m \eta \|d^t\|_1 (L_i^t(\theta_i^*) - L_i^t(\theta_i^t)). \label{ineq:bound (**)}
\end{align}
Now by the MLE loss, there exists a constant $C'$ such that 
\begin{align}
    &\frac{1}{T}\sum_{t=1}^T \sum_{i=1}^m \eta \|d^t\|_1(L_i^t(\theta_i^*) - L_i^t(\theta_i^t))
    \nonumber\\&\le 2\sum_{i=1}^m \eta \|d^t\|_1 \log(|\cR|/\delta) - \frac{C'}{T}\sum_{t=1}^T \sum_{i=1}^m\eta\|d^t\|_1\sum_{j \in \cD_i^{t-1}}\EE_{y\sim \pi^j} \left[\Delta_i^t(x,y)^2\right]\nonumber\\
    &=\widetilde{\cO}(2m \eta \sqrt{m}d) - \frac{C'}{T}\sum_{t=1}^T \sum_{i=1}^m\eta\|d^t\|_1\sum_{j \in \cD_i^{t-1}}\EE_{y\sim \pi^j} \left[\Delta_i^t(x,y)^2\right],\label{eq:etaloss}
\end{align}
Now consider the second term in Eq.~\eqref{eq:etaloss}. We can bound it by
\begin{align}&\sum_{t=1}^T \sum_{i=1}^m \sum_{j \in \cD_i^{t-1}}\EE_{y_1,y_2\sim \pi^j}[\Delta_i^t(x,y)^2]\nonumber \\ &= \sum_{t=1}^T \sum_{i=1}^m \sum_{j=1}^{t-1}\EE_{y_1,y_2\sim \pi^j, I \sim \PP(\cdot \mid \alpha^*, x,y_1,y_2, r^*)}[\Delta_i^t (x,y)^2 \mathbb{I}\{I^j = i\}]\nonumber\\
    &\ge \kappa_1\sum_{i=1}^m \sum_{j=1}^T \sum_{t=j+1}^T \EE_{y_1,y_2\sim \pi^j, I= i}[\Delta_i^t(x^j,y^j)^2 \mathbb{I}\{I^j = i\}]\nonumber\\
    &= \kappa_1\sum_{i=1}^m \sum_{j=1}^T \sum_{t=j+1}^T \EE_{y_1,y_2\sim \pi^j} [\Delta_i^t(x,y)^2]\nonumber\\
    &= \kappa_1\cdot \sum_{i=1}^m \sum_{t=1}^T \sum_{j=1}^{t-1}\EE_{y_1,y_2\sim \pi^j} [\Delta_i^t(x,y)^2],\label{eq:estimate technique}
\end{align}
where the inequality uses the fact that 
$\inf_{y,x,j, I}\frac{1}{ \PP(I\mid \alpha^*, x,y_1,y_2, r^*)} = \kappa_1$ for some constant $\kappa_1.$ Since the distribution of index is a bounded softmax distribution, we can derive that $\kappa_1\ge \frac{e^0}{e^0+(m-1)e^B}\ge \frac{1}{me^B}$. Thus we can get 
\begin{align}
    \sum_{t=1}^T \sum_{i=1}^m \sum_{j \in \cD_i^{t-1}}\EE_{y_1,y_2\sim \pi^j}[\Delta_i^t(x,y)^2] \ge \frac{1}{me^B}\cdot \sum_{i=1}^m \sum_{t=1}^T \sum_{j=1}^{t-1}\EE_{y_1,y_2\sim \pi^j} [\Delta_i^t(x,y)^2].\nonumber
\end{align}
Hence, the Eq.~\eqref{eq:etaloss} can be further bounded by 
\begin{align}
    \frac{1}{T}\sum_{t=1}^T \sum_{i=1}^m \eta \|d^t\|_1(L_i^t(\theta_i^*) - L_i^t(\theta_i^t)) &\le \widetilde{\cO}(2m\eta \sqrt{m}d) - \frac{\eta C' \|d^t\|_1}{Tme^B}\sum_{i=1}^m \sum_{t=1}^T \sum_{j=1}^{t-1}\EE_{y_1,y_2\sim \pi^j} [\Delta_i^t(x,y)^2]\label{eq:lossgap1}\\
    &\le \widetilde{\cO}(2m\eta \sqrt{m}d) - \frac{\eta C'}{Tme^B}\sum_{i=1}^m \sum_{t=1}^T \sum_{j=1}^{t-1}\EE_{y_1,y_2\sim \pi^j} [\Delta_i^t(x,y)^2].\label{eq:lossgap}
\end{align}
The last inequality uses the fact that $\|d^t\|_1 \ge 1.$
 Now combining $(\ast)$ (Eq.~\eqref{ineq:(A)}) and $(\ast\ast)$ (Eq.~\eqref{ineq:bound (**)}), 
by choosing $\mu_i = \frac{C'}{ me^B\exp(4/\beta)\kappa\sqrt{T}}$, we can get 
\begin{align}
    &D(\tilde{\pi}^T) - D(\pi^*) \nonumber\\& \le (*) + (**) + d(W^*, W^T)\nonumber\\
    & \le \frac{me^B\exp(4/\beta)\kappa d_{\mathrm{cover}}(1/T)}{4C'\sqrt{T}} + \frac{(B+B_1)^2m}{\sqrt{T}} +  \widetilde{\cO}(Bd) + \widetilde{\cO}\left(\frac{m^{3/2}d}{\sqrt{T}}\right) + d(W^*, W^T).\label{eq:final_step1}
\end{align}
Note that 
\begin{align}d(W^*, W^T)&\le  \sum_{n=1}^N d_{B_1}(W_{p^{(n)}, c^{(n)}}^{\alpha^{T,(n)}}, W_{p^{(n)},c^{(n)}}^{\alpha^{*,(n)}})\le m^{3/2}B_1\sum_{n=1}^N \frac{1}{|p^{(n)}|}\cdot \|\alpha^{T,(n)}-\alpha^{*,(n)}\|_\infty\nonumber\\
&\le \frac{m^{3/2}B_1}{\sqrt{T}}\cdot \left(\sum_{n=1}^N \frac{1}{p^{(n)}}\right)\cdot \gamma^{-1}\mathrm{poly}(\exp(1/\beta), m, e^B, d, \log(1/\delta))\label{eq:step2explain}\\
&\le \frac{m^{3/2}B_1N}{\sqrt{T}}\cdot (\min_{n \in [N]}p^{(n)})^{-1} \cdot \gamma^{-1}\mathrm{poly}(\exp(1/\beta), m, e^B, d, \log(1/\delta)),\label{eq:finalstep2}\end{align}
where the inequality Eq.~\eqref{eq:step2explain} holds by Theorem \eqref{thm:est of alpha}.

Hence, combining Eq.~\eqref{eq:final_step1} and Eq.~\eqref{eq:finalstep2}, we complete the proof.
\end{proof}


\subsection{Proof of Theorem \ref{thm:malfare online}}\label{app:proof malfare online}

\begin{proof}
First, note that \begin{align}
     d(\overline{V}^T, W^{T,(n)})^{2q} &= \|\overline{V}^T - \Pi_{W^{T,(n)}}(\overline{V}^T)\|^{2q}\nonumber\\
    &\le \|\overline{V}^T - \Pi_{W^{T,(n)}}(\overline{V}^{T-1})\|^{2q}\nonumber\\
    &\le \Bigg(\|\overline{V}^T - \Pi_{W^{T-1,(n)}}(\overline{V}^{T-1})\|^2 + \|\Pi_{W^{T,(n)}}(\overline{V}^{T-1}) - \Pi_{W^{T-1,(n)}}(\overline{V}^{T-1})\|^2\nonumber \\&\qquad + 2\langle \overline{V}^T - \Pi_{W^{T-1,(n)}}(\overline{V}^{T-1}), \Pi_{W^{T,(n)}}(\overline{V}^{T-1}) - \Pi_{W^{T-1,(n)}}(\overline{V}^{T-1})\rangle\Bigg)^{q}.\label{ineq:first malfare}
\end{align}
Now by Lemma \ref{lemma:dis of proj}, since $\|V^{T-1}\|_\infty\le B$ is bounded, we have 
\begin{align}\|\Pi_{W^{T,(n)}}(\overline{V}^{T-1}) - \Pi_{W^{T-1,(n)}}(\overline{V}^{T-1})\|_2^2 &\le 4d(\overline{V}^{T-1}, W^{T-1,(n)})d_{B_1}(W^{T,(n)}, W^{T-1,(n)}) + 2d^2_{B_1}(W^{T,(n)}, W^{T-1,(n)}).
\end{align}
Also, by Eq.~\eqref{eq:inner product estimate alpha}, we can also have 
\begin{align}
    &\langle \overline{V}^T - \Pi_{W^{T-1,(n)}}(\overline{V}^{T-1}), \Pi_{W^{T,(n)}}(\overline{V}^{T-1}) - \Pi_{W^{T-1,(n)}}(\overline{V}^{T-1})\rangle \nonumber\\&\le 33d(\overline{V}^{T-1}, W^{T-1})\cdot d_{B_1}(W^{T,(n)}, W^{T-1,(n)}) + 8d^2_{B_1}(W^{T-1,(n)}, W^{T,(n)}).\nonumber
\end{align}
Hence, by Eq.~\eqref{ineq:first malfare}, we can get 
\begin{align}
    &T^{2q} d(\overline{V}^T, W^{T,(n)})^{2q}  \nonumber\\&\le \Bigg(\|\overline{V}^T - \Pi_{W^{T-1,(n)}}(\overline{V}^{T-1})\|^2 + \|\Pi_{W^{T,(n)}}(\overline{V}^{T-1}) - \Pi_{W^{T-1,(n)}}(\overline{V}^{T-1})\|^2\nonumber \\&\qquad + 2\langle \overline{V}^T - \Pi_{W^{T-1,(n)}}(\overline{V}^{T-1}), \Pi_{W^{T,(n)}}(\overline{V}^{T-1}) - \Pi_{W^{T-1,(n)}}(\overline{V}^{T-1})\rangle\Bigg)^{q}.\nonumber\\
    &\le T^{2q}\left(\|\overline{V}^T - \Pi_{W^{T-1,(n)}}(\overline{V}^{T-1})\|^{2} + 37d(\overline{V}^{T-1}, W^{T-1})\cdot d_{B_1}(W^{T,(n)}, W^{T-1,(n)}) + 10d^2_{B_1}(W^{T-1,(n)}, W^{T,(n)})\right)^q.\label{eq:malfare further}
    \end{align}
    Now, since $d(W^{T,(n)}, W^{T-1,(n)}) \le \frac{m^{3/2}B_1}{|p^{(n)}|}\cdot \|\alpha^{t,(n)}-\alpha^{t-1,(n)}\|_\infty \le \frac{m^{3/2}B_1}{|p^{(n)}|T}$, we know 
\begin{align}
    37d(\overline{V}^{T-1}, W^{T-1})\cdot d_{B_1}(W^{T,(n)}, W^{T-1,(n)}) + 10d^2_{B_1}(W^{T-1,(n)}, W^{T,(n)}) &\le \frac{37B_1^2m^{3/2}}{|p^{(n)}|T} + \frac{10B_1^2m^3}{|p^{(n)}|^2 T^2}.
\end{align}
Hence, the Eq.~\eqref{eq:malfare further} can be further bounded by 
    \begin{align*}
    &T^{2q} d(\overline{V}^T, W^{T,(n)})^{2q}\\
    &\le T^{2q}d^{2q}(\overline{V}^T, W^{T-1,(n)}) + \widetilde{\cO}(\mathrm{poly}(B_1^{q},m^{q}, (\min_{n \in [N]}p^{(n)})^{-q})T^{2q-2})\\&\qquad + qT^{2q}\|\overline{V}^T - \Pi_{W^{T-1,(n)}}(\overline{V}^{T-1})\|^{2q-2}\cdot \left(37d(\overline{V}^{T-1}, W^{T-1,(n)})\cdot d_{B_1}(W^{T,(n)}, W^{T-1,(n)}) + 10d^2_{B_1}(W^{T-1,(n)}, W^{T,(n)})\right)\\
    &\le T^{2q}d^{2q}(\overline{V}^T, W^{T-1,(n)}) + \widetilde{\cO}(\mathrm{poly}(B_1^{q},m^{q}, (\min_{n \in [N]}p^{(n)})^{-q})T^{2q-2}) \\&\qquad + 37qT^{2q}d^{2q-1}(\overline{V}^{T-1}, W^{T-1,(n)}) \cdot d_{B_1}(W^{T,(n)}, W^{T-1,(n)}) .
\end{align*}
The last inequality is because $\|P_{T,n}\| = \mathrm{poly}(B_1,m,(\min_{n \in [N]}p^{(n)})^{-1} \cdot \widetilde{\cO}(1/T)$, and $$\|\overline{V}^T - \Pi_{W^{T-1,(n)}}(\overline{V}^{T-1})\|^{2q-2} - d^{2q-2}(\overline{V}^{T-1}, W^{T-1,(n)}) \le \widetilde{\cO}(\mathrm{poly}(B_1^{q},m^{q}, (\min_{n \in [N]}p^{(n)})^{-q})T^{2q-3}).$$
Now we further bound the first term $T^{2q}d^{2q}(\overline{V}^{T}, W^{T-1,(n)}).$ 
Using the same derivation for Eq.~\eqref{eq:recursion before}, 
we know that \begin{align*}
    &T^{2q} \sum_{n=1}^N \zeta_nd^{2q}(\overline{V}^{T}, W^{T-1,(n)}) \\&\le (T-1)^{2q} \sum_{n=1}^N \zeta_nd^{2q}(\overline{V}^{T-1}, W^{T-1,(n)}) + 12^qT^{2q-2}B_1^{2q}m^q\\
&\qquad + 2(T-1)^{2q-1}\sum_{n=1}^N \zeta_n(\overline{V}^{T-1}-\Pi_{W^{T-1,(n)}}(\overline{V}^{T-1}))(V^T-\Pi_{W^{T-1,(n)}}(\overline{V}^{T-1}))d^{2q-2}(\overline{V}^{T-1}, W^{T-1,(n)}).
\end{align*}
Hence, we can derive 
\begin{align*}
    &T^{2q}\sum_{n=1}^N \zeta_n d(\overline{V}^T, W^{T,(n)})^{2q} \nonumber\\&\le (T-1)^{2q} \sum_{n=1}^N \zeta_nd^{2q}(\overline{V}^{T-1}, W^{T-1,(n)})  + \widetilde{\cO}(\mathrm{poly}(B_1^{q},m^{q}, (\min_{n \in [N]}p^{(n)})^{-q})T^{2q-2}) \\&\qquad +37qT^{2q}\sum_{n=1}^N \zeta_nd^{2q-1}(\overline{V}^{T-1}, W^{T-1,(n)})\cdot d_{B_1}(W^{T,(n)}, W^{T-1,(n)}) \\
&\qquad\qquad   + 2(T-1)^{2q-1}\sum_{n=1}^N \zeta_n(\overline{V}^{T-1}-\Pi_{W^{T-1,(n)}}(\overline{V}^{T-1}))(V^T-\Pi_{W^{T-1,(n)}}(\overline{V}^{T-1}))d^{2q-2}(\overline{V}^{T-1}, W^{T-1,(n)}).
\end{align*}
Now we consider the last term in the inequation above. Similar to the Eq.~\eqref{last term bound}, we have 
\begin{align}
&\sum_{n=1}^N \zeta_n(\overline{V}^{T-1}-\Pi_{W^{T-1,(n)}}(\overline{V}^{T-1}))(V^T-\Pi_{W^{T-1,(n)}}(\overline{V}^{T-1}))d^{2q-2}(\overline{V}^{T-1}, W^{T-1,(n)})\nonumber\\&\qquad \le \left(\sum_{n=1}^N \zeta_n d^{2q}(\overline{V}^{T-1}, W^{T-1,(n)})\right)^{\frac{2q-1}{2q}} \cdot \left(D_{q}(\pi^*) + \underbrace{\eta \|d^t\|_1\left(\sum_{i=1}^m L_i^t(\theta^*) -  \sum_{i=1}^m L_i^t(\theta^t)\right)}_{\textrm{($\ast$)}}\right),\nonumber
\end{align}
then we can get 
\begin{align*}
    &T^{2q}\sum_{n=1}^N \zeta_n d(\overline{V}^T, W^{T,(n)})^{2q} \nonumber\\&\le (T-1)^{2q} \sum_{n=1}^N \zeta_nd^{2q}(\overline{V}^{T-1}, W^{T-1,(n)})  + \widetilde{\cO}(\mathrm{poly}(B_1^{q},m^{q}, (\min_{n \in [N]}p^{(n)})^{-q})T^{2q-2}) \\&\qquad +37qT^{2q}\left(\sum_{n=1}^N \zeta_nd^{2q}(\overline{V}^{T-1}, W^{T-1,(n)})\right)^{\frac{2q-1}{2q}}\cdot \sqrt[2q]{\sum_{n=1}^N d_{B_1}^{2q}(W^{T,(n)}, W^{T-1,(n)}) }\\
&\qquad\qquad  2(T-1)^{2q-1}\left(\sum_{n=1}^N \zeta_n d^{2q}(\overline{V}^{T-1}, W^{T-1,(n)})\right)^{\frac{2q-1}{2q}} \cdot \left(D_{q}(\pi^*) + (\ast)\right)\\
&\le (T-1)^{2q} \sum_{n=1}^N \zeta_nd^{2q}(\overline{V}^{T-1}, W^{T-1,(n)})  + \widetilde{\cO}(\mathrm{poly}(B_1^{q},m^{q}, (\min_{n \in [N]}p^{(n)})^{-q})T^{2q-2}) \\&\qquad +2(T-1)^{2q-1}\left(\sum_{n=1}^N \zeta_nd^{2q}(\overline{V}^{T-1}, W^{T-1,(n)})\right)^{\frac{2q-1}{2q}} \\
&\qquad\qquad   \cdot \left(\frac{37qT^{2q}}{(T-1)^{2q-1}}\sqrt[2q]{\sum_{n=1}^N 2^q T d_{B_1}^{2q}(W^{T,(n)}, W^{T-1,(n)}) } + D_{q}(\pi^*) + (\ast)\right).
\end{align*}
Hence, by the reduction and the fact that $\frac{T}{T-1}\le 2$ for $T \ge 2,$, we can further get 
\begin{align*}
    &T^{2q}\sum_{n=1}^N \zeta_n d^{2q}(\overline{V}^T, W^{T,(n)})\nonumber\\&\le\widetilde{\cO}(\mathrm{poly}(B_1^{q},m^{q}, (\min_{n \in [N]}p^{(n)})^{-q})T^{2q-1}) + \sum_{t=1}^{T}2(t-1)^{2q-1}\left(\sum_{n=1}^N \zeta_nd^{2q}(\overline{V}^{T-1}, W^{T-1,(n)})\right)^{\frac{2q-1}{2q}} \\
&\qquad   \cdot \left(37q\cdot 2^q T\cdot \sqrt[2q]{\sum_{n=1}^N d_{B_1}^{2q}(W^{T,(n)}, W^{T-1,(n)}) } + D_{q}(\pi^*) + \sqrt[2q]{\sum_{n=1}^N d^{2q}_{B_1}(W^{T-1,(n)}, W^*)}+(\ast)\right).
\end{align*}
Similar to the Section \ref{sec: malfare_proof_offline}, we can use the induction method to derive

\begin{align*}
    &\sqrt[2q]{\sum_{n=1}^N \zeta_n d^{2q}(\overline{V}^T, W_n^T)} - D_q(\pi^*) \\& \le \widetilde{\cO}\left(\mathrm{poly}(B_1, m, (\min_{n \in [N]}p^{(n)})^{-1})T^{-1/2q}\right)  + (\ast) +\frac{1}{T}\sum_{i=1}^T  \sqrt[2q]{\sum_{n=1}^N \zeta_n d_{B_1}^{2q}(W^{t-1,(n)}, W^*)} \\&\qquad+ \frac{1}{T}\sum_{t=1}^T 37q\cdot 2^qt\sqrt[2q]{\sum_{n=1}^N \zeta_nd_{B_1}^{2q}(W^{t,(n)}, W^{t-1,(n)}) }.\end{align*}
Now note that 
\begin{align}
    \sqrt[2q]{\sum_{n=1}^N \zeta_n d_{B_1}^{2q}(W^{t-1,(n)}, W^*)} \le \sqrt[2q]{\sum_{n=1}^N\zeta_n \frac{m^{3q}B_1^{2q}\|\hat{\alpha}^{t-1,(n)} - \alpha^{*,(n)}\|_\infty^{2q}}{|p^{(n)}|^{2q}}}\le \frac{\gamma^{-1}\cdot \widetilde{\cO}(\mathrm{poly}(m,e^B,\exp(1/\beta), d, \log(1/\delta)))}{\min_{n \in [N]}|p^{(n)}|} \cdot \frac{1}{\sqrt{t}}.\label{eq:derivation 1}
\end{align}
Also, by Eq.~\eqref{ineq:(D)final}, we have
\begin{align}
    \frac{1}{T}\sum_{t=1}^T t\sqrt[2q]{\sum_{n=1}^N \zeta_n d_{B_1}^{2q}(W^{t,(n)}, W^{t-1,(n)}) } &\le \frac{1}{T}\sum_{t=1}^Tt\sum_{n=1}^N d_{B_1}(W^{t,(n)}, W^{t-1,(n)})\nonumber\\&\le \gamma^{-1}\mathrm{poly}(\exp(1/\beta), m, N, e^B, d, \log(1/\delta), B_1, (\min_{n \in [N]}p^{(n)})^{-1})\cdot\widetilde{\cO}(1/\sqrt{T}).\label{eq:derivation 2}
\end{align}
Hence, combining Eq.~\eqref{eq:derivation 1} and Eq.~\eqref{eq:derivation 2}, 
    \begin{align*}
    &\sqrt[2q]{\sum_{n=1}^N \zeta_n d^{2q}(\overline{V}^T, W_n^T)} - D_q(\pi^*)\\&\le \widetilde{\cO}\left(\mathrm{poly}(B_1, m, (\min_{n \in [N]}p^{(n)})^{-1})T^{-1/2q}\right) + \gamma^{-1}\mathrm{poly}(\exp(1/\beta), m, N, e^B, d, \log(1/\delta), B_1, (\min_{n \in [N]}p^{(n)})^{-1})\widetilde{\cO}(1/\sqrt{T}) + \mathrm{(B)}\\
    &\le \widetilde{\cO}\left((\gamma^{-1}\mathrm{poly}(\exp(1/\beta), m, N,e^B, d, \log(1/\delta), B_1, (\min_{n \in [N]}p^{(n)})^{-1})T^{-1/2q}\right) + (\ast).
\end{align*}
Now we derive the proof. First,
\begin{small}
\begin{align*}
    &D_q(\tilde{\pi}^T) - D_q(\pi^*)\\
    &=\sqrt[2q]{\sum_{n=1}^N\zeta_n d^{2q}(S(\tilde{\pi}^T), W_n^*)} - \sqrt[2q]{\sum_{n=1}^N\zeta_n d^{2q}(S(\tilde{\pi}^T), W_n^T)} + \sqrt[2q]{\sum_{n=1}^N\zeta_n d^{2q}(S(\tilde{\pi}^T), W_n^T)}  - D_q(\pi^*)\\
    &\le \sqrt[2q]{\sum_{n=1}^N \zeta_n |d(S(\tilde{\pi}^T), W_n^T) - d(S(\tilde{\pi}^T),W_n^*)|^{2q}}\\&\qquad +\sqrt[2q]{\sum_{n=1}^N\zeta_n d^{2q}(W_n^T, \EE_{\tilde{\pi}^T}\left[r^*(x,y) \right]- \frac{\beta}{T}\sum_{t=1}^T \DD_{\mathrm{KL}}(\pi^t\|\pi_{\mathrm{ref}})\cdot \mathbf{1}^m)} - \sqrt[2q]{\sum_{n=1}^N\zeta_n d^{2q}(W_n^T, \overline{V}^T)}\\
    &\qquad\qquad  + \sqrt[2q]{\sum_{n=1}^N \zeta_n d^{2q}(W_n^T, \overline{V}^T)} - D_q(\pi^*)\\
    &\le \sum_{n=1}^N d(W_n^*, W_n^T)
    \\&\qquad + \underbrace{\sqrt[2q]{\sum_{n=1}^N\zeta_n \left(d(W_n^T, \EE_{\tilde{\pi}^T}\left[r^*(x,y) \right]- \frac{\beta}{T}\sum_{t=1}^T \DD_{\mathrm{KL}}(\pi^t\|\pi_{\mathrm{ref}})\cdot \mathbf{1}^m)-d(W_n^T, \frac{1}{T}\sum_{t=1}^T\EE_{\pi^t}\left[r^{\theta^t}(x,y) \right]- \frac{\beta}{T}\sum_{t=1}^T \DD_{\mathrm{KL}}(\pi^t\|\pi_{\mathrm{ref}})\cdot \mathbf{1}^m)\right)^{2q}}}_{(\ast\ast)}\\&\qquad\qquad  + \widetilde{\cO}\left((\gamma^{-1}\mathrm{poly}(\exp(1/\beta), m, e^B, d, \log(1/\delta), \min_{n \in [N]}\frac{1}{p^{(n)}}, B_1)N^{1/2q}T^{-1/2q}\right) + (\ast).
\end{align*}
\end{small}
First, for the term $\sum_{n=1}^N d(W_n^*, W_n^T),$ we can bound it by 
\begin{align*}
    \sum_{n=1}^N d(W_n^*, W_n^T) \le \sum_{n=1}^N \frac{m^{3/2}B_1}{|p^{(n)}|}\cdot \|\alpha^{*,(n)}-\alpha^{T,(n)}\|_\infty.
\end{align*}
From the Theorem \ref{thm:est of alpha}, we can get 
\begin{align}
    \sum_{n=1}^N d(W_n^*, W_n^T) \le \frac{m^{3/2}B_1N}{\min_{n \in [N]}p^{(n)}} \gamma^{-1} \cdot \widetilde{\cO}\left(\mathrm{poly}(m,e^B,\exp(1/\beta), d,\log(1/\delta))\right)\cdot \frac{1}{\sqrt{T}},\label{eq:malfare online step 1}
\end{align}
Now by Eq.~\eqref{ineq:(A)}, we have
\begin{align}
    \mathrm{(\ast\ast)}&\le \sqrt[2q]{\sum_{n=1}^N \zeta_n \left(\sum_{i=1}^m\Bigg|\frac{1}{T}\sum_{t=1}^T \EE_{\pi^t}[r_i^*(x,y) -\hat{r}_i^t(x,y)]\Bigg|\right)^{2q}} \le \left(\sum_{n=1}^N \zeta_n\right)\cdot \sum_{i=1}^m\Bigg|\frac{1}{T}\sum_{t=1}^T  \EE_{\pi^t}[r_i^*(x,y) -\hat{r}_i^t(x,y)]\Bigg|\nonumber\\
    &\le \sum_{i=1}^m \mu_i \exp(4/\beta)\kappa\cdot \sum_{t=1}^T \sum_{j=1}^{t-1}\EE_{y_1,y_2\sim \pi^j}[\Delta_i^t(x,y)^2] + \frac{d_{\mathrm{cover}}(1/T)}{4\mu_i} + \widetilde{\cO}(NBd).\label{malfare online step 2}
\end{align}
Consider the term (B). By Eq.~\eqref{eq:lossgap1}, we can get
\begin{align}
    \mathrm{(B)}=\frac{1}{T}\sum_{t=1}^T \sum_{i=1}^m \eta \|d^t\|_1(L_i^t(\theta_i^*) - L_i^t(\theta_i^t)) &\le \widetilde{\cO}(2m\eta \sqrt{m}d) - \frac{\eta C' \|d^t\|_1}{Tme^B}\sum_{i=1}^m \sum_{t=1}^T \sum_{j=1}^{t-1}\EE_{y_1,y_2\sim \pi^j} [\Delta_i^t(x,y)^2]\nonumber\\
    &\le \widetilde{\cO}(2m\eta \sqrt{m}d) - \frac{\eta C' \zeta_n^{{1/2q}}}{Tme^BN^{\frac{2q-1}{2q}}}\sum_{i=1}^m \sum_{t=1}^T \sum_{j=1}^{t-1}\EE_{y_1,y_2\sim \pi^j} [\Delta_i^t(x,y)^2].\label{eq:ineq (B)}
\end{align}
The last inequality is because, if we choose $n' = \max_{n \in [N]}\zeta_n \|W^{(n)}-\overline{V}^t\|_2^{2q}$, then 
\begin{align*}
    \frac{\zeta_{n'}\|W^{(n')}- \overline{V}^t\|_2^{2q-1}}{\left(\sum_{n=1}^N \zeta_n\|W^{(n)}- \overline{V}^t\|_2^{2q}\right)^{\frac{2q-1}{2q}}} \ge \frac{\zeta_{n'}\|W^{(n')}- \overline{V}^t\|_2^{2q-1}}{N^{\frac{2q-1}{2q}}\cdot \zeta_{n'}^{\frac{2q-1}{2q}}\|W^{(n')}-\overline{V}^t\|_2^{2q-1}} = \frac{\zeta_{n'}^{1/2q}}{N^{\frac{2q-1}{2q}}}\ge \frac{\min_{n \in [N]}\zeta_n^{1/2q}}{N^{\frac{2q-1}{2q}}}.
\end{align*}
Hence, we have 
\begin{align*}
    \|d^t\|_1 = \left\|\sum_{n=1}^Nd_n^t\cdot \frac{\zeta_n\|W^{(n)}- \overline{V}^t\|_2^{2q-1}}{\left(\sum_{n=1}^N \zeta_n\|W^{(n)}- \overline{V}^t\|_2^{2q}\right)^{\frac{2q-1}{2q}}}\right\|_1 \ge \|d_{n'}^t\|_1\cdot \frac{\min_{n \in [N]}\zeta_n^{1/2q}}{N^{\frac{2q-1}{2q}}}.
\end{align*}
Now we choose $\mu_i = \eta \cdot \frac{C'\min_{n \in [N]}\zeta_n^{1/2q}}{me^B \exp(4/\beta)\kappa N^{\frac{2q-1}{2q}}}$, $\eta = 1/\sqrt{T}$, and use the inequality Eq.~\eqref{eq:ineq (B)} for bounding (B), we finally get 
\begin{align*}
    &D_q(\tilde{\pi}^T)-D_q(\pi^*) \\&\le \sum_{n=1}^N d(W_n^*, W_n^T) + \mathrm{(A)} + \mathrm{(B)} + \widetilde{\cO}\left((\gamma^{-1}\mathrm{poly}(\exp(1/\beta), m, e^B, d, \log(1/\delta), \min_{n \in [N]}\frac{1}{p^{(n)}}, B_1)N^{1/2q}T^{-1/2q}\right)\\
    &\le \gamma^{-1}\mathrm{poly}(\exp(1/\beta), m,N,e^B, d, \log(1/\delta),\kappa, B_1, (\min_{n \in [N]}p^{(n)})^{-1}, (\min_{n \in [N]}\zeta_n)^{-1/2q})\cdot \widetilde{\cO}(T^{-1/2q}).
\end{align*}

\end{proof}

\subsection{Estimation of $\alpha$}
In this subsection, we give a theoretical upper bound of the estimation error of $\alpha$. 
\begin{theorem}[Estimation of $\alpha$]\label{thm:est of alpha}
    Assume that we execute the Algorithm \ref{alg: vpo-fl-general} with the Assumption \ref{assum:gap}, then for each $t \in [T],$ with probability at least $1-\delta$ we have 
    \begin{align}
        \|\alpha^* - \alpha^t\|_\infty \le \frac{1}{t}\sum_{k=1}^{t}\|\alpha^* - \hat{\alpha}^k\|_\infty\le \gamma^{-1}\cdot \widetilde{\cO}\left(\mathrm{poly}(m,e^B, \exp(1/\beta), d,\log(1/\delta))\right)\cdot \frac{1}{\sqrt{t}}.\label{eq:estimate alpha final result}
    \end{align}
\end{theorem}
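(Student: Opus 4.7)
The plan is to decompose the bound into two parts, corresponding to the two inequalities in the statement. For the first inequality, I would unfold the averaging recursion $\alpha^{t,(n)} = \frac{t-1}{t}\alpha^{t-1,(n)} + \frac{1}{t}\hat{\alpha}^{t,(n)}$ by induction, which gives $\alpha^{t,(n)} = \frac{1}{t}\sum_{k=1}^t \hat{\alpha}^{k,(n)}$. Applying the triangle inequality to the identity $\alpha^* - \alpha^{t,(n)} = \frac{1}{t}\sum_{k=1}^t(\alpha^* - \hat{\alpha}^{k,(n)})$ in the $\ell_\infty$ norm yields the first bound immediately. This step is essentially algebra and should carry no difficulty.

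The substantive content is the second inequality, controlling a single $\|\hat{\alpha}^k - \alpha^*\|_\infty$ at rate $1/\sqrt{k}$. I would proceed via a standard MLE concentration argument tailored to the softmax objective-selection model
\[
\PP(I \mid \alpha, x, y_w, y_l, r) \propto \exp(\alpha_I \cdot |r_I(x,y_w)-r_I(x,y_l)|).
\]
The first step is to invoke a Zhu et al.\ 2023 style MLE bound for the log-likelihood $\LL(\alpha, \cD^{(n)}, r^*)$ \emph{if} the rewards were the true ones, giving $\|\hat{\alpha}_{\text{oracle}}^k - \alpha^*\|_{\Sigma_k} = \widetilde{\cO}(\sqrt{(d+\log(1/\delta))/k})$ in the empirical design covariance $\Sigma_k$ built from the feature gaps $|r_i^*(x,y_w)-r_i^*(x,y_l)|$ across the first $k$ rounds. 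The second step converts this to an $\ell_\infty$ bound by relating $\lambda_{\min}(\Sigma_k)$ to the gap $\gamma$: Assumption~\ref{assum:gap} guarantees $\EE[|r_i^*(x,y_1)-r_i^*(x,y_2)|] \ge \gamma$ along every coordinate, so up to sub-Gaussian concentration the smallest eigenvalue of $\Sigma_k$ is at least $\Omega(\gamma^2)$ with high probability, which is where the $\gamma^{-1}$ prefactor originates. The softmax Hessian is bounded below by $e^{-2B}$ times the Fisher information, accounting for the $e^B$ factor.

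The third and hardest step is to handle plug-in error: we actually compute $\hat{\alpha}^k$ using $\tilde{\theta}^k$ rather than $\theta^*$. The plan is to bound the perturbation $\|\hat{\alpha}^k - \hat{\alpha}_{\text{oracle}}^k\|_\infty$ via the Lipschitz stability of the MLE: writing the first-order optimality conditions for both MLEs and differencing, the perturbation is controlled by $\max_i \|\tilde{\theta}_i^k - \theta_i^*\|_{\Sigma_{\cD_i}}$, which by the same Zhu et al.\ 2023 MLE result for the BT reward model is $\widetilde{\cO}((3+e^{B'})\sqrt{(d+\log(1/\delta))/k})$. The KL-regularization enters through the $\exp(1/\beta)$ factor because $\tilde{\theta}^k$ arises from an optimization involving $\DD_{\mathrm{KL}}(\pi \| \pi_{\mathrm{ref}})$ and the log-density ratios are bounded by $1/\beta$ when converting policy error to reward error via Lemma~\ref{lemma:linearstructure}.

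The main obstacle I anticipate is the third step: propagating the reward-model error through the softmax likelihood without losing the $1/\sqrt{k}$ rate or picking up an $\exp(B)$ blowup that cannot be polynomially controlled. A clean way to manage this is to observe that the softmax log-likelihood is globally $\mathrm{poly}(e^B)$-Lipschitz in $r$, so the plug-in error inherits the $\sqrt{d/k}$ rate of $\tilde{\theta}^k$ and merely contributes to the polynomial factor. Combining the oracle MLE bound, the plug-in perturbation, and the $\gamma^{-1}$ conversion from $\Sigma_k$ norm to $\ell_\infty$ norm yields the claimed $\gamma^{-1}\,\widetilde{\cO}(\mathrm{poly}(m,e^B,\exp(1/\beta),d,\log(1/\delta)))/\sqrt{k}$ bound, and averaging over $k=1,\dots,t$ with a harmonic-type summation $\sum_{k=1}^t 1/\sqrt{k} = \cO(\sqrt{t})$ divided by $t$ preserves the $1/\sqrt{t}$ rate, completing the proof.
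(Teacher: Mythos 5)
Your skeleton (averaging recursion for the first inequality; MLE concentration plus identifiability plus plug-in error for the second) matches the paper's at a high level, but the central step of your argument has a genuine gap. You convert the MLE concentration into an $\ell_\infty$ bound on $\hat{\alpha}^k-\alpha^*$ by claiming that Assumption \ref{assum:gap} forces $\lambda_{\min}(\Sigma_k)\ge \Omega(\gamma^2)$ for the design covariance built from the reward gaps. That inference is false: the assumption only gives a \emph{per-coordinate first-moment} lower bound $\EE|r_i^*(x,y_1)-r_i^*(x,y_2)|\ge\gamma$, which says nothing about the smallest eigenvalue of $\EE[XX^\top]$ — if the gaps across objectives are perfectly correlated the matrix is rank one, yet $\alpha$ is still identifiable. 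The paper avoids any eigenvalue condition entirely: it proves a \emph{per-sample} lower bound on the TV distance between $\mathrm{Softmax}(\hat{\alpha}\circ X^*)$ and $\mathrm{Softmax}(\alpha^*\circ X^*)$ of order $\frac{1}{m^2e^{2B}}\|X^*\circ|\alpha^*-\hat{\alpha}|\|_\infty$, and the key ingredient is the simplex constraint $\sum_i\alpha_i=\sum_i\hat{\alpha}_i=1$ (if one coordinate of $\hat{\alpha}$ exceeds $\alpha^*$ by $\varepsilon$, another must fall by $\varepsilon/m$, which forces a detectable shift in the softmax probabilities). Only after this pointwise bound does the gap assumption enter, to lower-bound $\EE[X_l^*]$ by $\gamma$. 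Moreover, the gap assumption is stated under $y_1\sim\pi^*$, $y_2\sim\pi_{\mathrm{ref}}$ while the data at round $s$ is drawn from $\pi^s\times\pi^s$, so a change of measure via Lemma \ref{lemma:policy diff} is required — this is the actual source of the $\exp(8/\beta)$ factor, not the mechanism you describe involving Lemma \ref{lemma:linearstructure}.

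Two smaller issues: first, the concentration tool for $\alpha$ in the paper is a TV/Hellinger-type MLE lemma over a covering of the softmax family (Lemma \ref{thm:mle}), not a Zhu-et-al.-style linear-design bound, precisely because there is no useful design matrix for $\alpha$ here; your plug-in step via differenced first-order optimality conditions would again need local strong convexity of the likelihood in $\alpha$, which runs into the same degeneracy problem. Second, the accumulated plug-in error $\sum_{s<k}\sum_i\EE[\delta_i^k]$ is not automatic: objective $i$'s dataset only contains rounds with $I^s=i$, so the paper needs an inverse-probability argument ($\kappa_1\ge 1/(me^B)$) before the elliptical potential lemma applies; your sketch asserts the $\sqrt{d/k}$ rate is "inherited" without addressing this. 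Your instinct that the softmax is $\mathrm{poly}(e^B)$-Lipschitz in the rewards is correct and is exactly how the paper absorbs the reward error into the TV bound, but the identifiability step needs to be redone along the lines above for the proof to go through.
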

\begin{proof}
First,  for any $k \in [t],$  we estimate $\hat{\alpha}$ with $\tilde{\theta}_1^k, \tilde{\theta}_2^k, \cdots, \tilde{\theta}_m^k$, where $\tilde{\theta}_i^k = \argmin_\theta L_i^k(\theta)$ only minimizes the log-likelihood loss without optimistic exploration. Define $\delta_i^k(x,y)=\left|r_i^{\tilde{\theta}_i^k}(x,y_1) - r_i^{\tilde{\theta}_i^k}(x,y_2) - (r_i^*(x,y_1) - r_i^*(x,y_2))\right|$.
then, by theorem \ref{thm:mle}, with probability $1-\delta$ we have 
\begin{align*}
    \sum_{s=1}^{k-1} \EE_{x, y\sim \cD_s}&\left\| \textrm{Softmax}(\hat{\alpha}_i^k \cdot |r_i^{\tilde{\theta}_i^k}(x,y_1)-r_i^{\tilde{\theta}_i^k}(x,y_2)|)-\textrm{Softmax}(\alpha_i^*\cdot |r_i^*(x,y_1)-r_i^*(x,y_2)|)\right\| _{\mathrm{TV}}^2\\
    &\le 2 \log (d_\cF(1/k^2)/\delta)+1/k,
\end{align*}
    where $\cF = \{\mathrm{Softmax}(x_i)\mid 1\le i\le m, x_i \le 1\}$, and the log of $\varepsilon-$covering number $\log (d_\cF(1/k^2)) = \widetilde{\cO}(m)$.

    thus by the Cauchy's inequality, we can get 
    \begin{align}
        &\sqrt{2k\log(d_\cF(1/k^2)/\delta)+1}\nonumber \\&\ge \sum_{s=1}^{k-1} \EE_{x, y\sim \cD_s}\left\| \textrm{Softmax}(\hat{\alpha}_i^k \cdot |r_i^{\tilde{\theta}_i^k}(x,y_1)-r_i^{\tilde{\theta}_i^k}(x,y_2)|)-\textrm{Softmax}(\alpha_i^*\cdot |r_i^*(x,y_1)-r_i^*(x,y_2)|)\right\| _{\mathrm{TV}}\nonumber\\
        &\ge \sum_{s=1}^{k-1} \EE_{x, y\sim \cD_s}\left\| \textrm{Softmax}(\hat{\alpha}_i^k \cdot |r_i^*(x,y_1)-r_i^*(x,y_2)|)-\textrm{Softmax}(\alpha_i^*\cdot |r_i^*(x,y_1)-r_i^*(x,y_2)|)\right\| _{\mathrm{TV}}\nonumber\\
        &\qquad - \sum_{s=1}^{k-1} \EE_{x, y\sim \cD_s}\left\| \textrm{Softmax}(\hat{\alpha}_i^k \cdot |r_i^{\tilde{\theta}_i^k}(x,y_1)-r_i^{\tilde{\theta}_i^k}(x,y_2)|)-\textrm{Softmax}(\hat{\alpha}_i^k\cdot |r_i^*(x,y_1)-r_i^*(x,y_2)|)\right\| _{\mathrm{TV}}.\label{alpha:step1}
    \end{align}

    Now we bound the difference of $\alpha$ based on the difference of the softmax distribution.

    Fixed $k$, since the upper bound of $0\le r_i^{\tilde{\theta}_i^{k}}(x,y) \le B$ and $0\le r^*(x,y) \le B$, define $X_i = |r_i^{\tilde{\theta}_i^k}(x,y_1)-r_i^{\tilde{\theta}_i^k}(x,y_2)|\le B$ and $X_i^* = |r^*_i(x,y_1) - r^*_i(x,y_2)|\le B$, then 
    \begin{align*}
        &\left\|\textrm{Softmax}(\hat{\alpha}_i^k \cdot |r_i^{\tilde{\theta}_i^k}(x,y_1)-r_i^{\tilde{\theta}_i^k}(x,y_2)|)-\textrm{Softmax}(\hat{\alpha}_i^{k}\cdot |r_i^*(x,y_1)-r_i^*(x,y_2)|)\right\|_{\mathrm{TV}}\\
        &= \sum_i \left|\frac{e^{X_i\cdot \hat{\alpha}_i^k}}{\sum_j e^{X_i\cdot \hat{\alpha}_j^k}}-\frac{e^{X_j^*\cdot \hat{\alpha}_i^k}}{\sum_j e^{X_j^*\cdot \hat{\alpha}_j^k}}\right|\\
        &=\sum_i \left|\frac{\sum_{j\neq i}e^{X_j^*\cdot \hat{\alpha}_j^k+X_i\hat{\alpha}_i^k }-e^{X_j\cdot \hat{\alpha}_j^k+X_i^*\hat{\alpha}_i^k }}{(\sum_j e^{X_j\cdot \hat{\alpha}_j^k})(\sum_j e^{X_j^*\cdot \hat{\alpha}_j^k})}\right|\\
        &\le \sum_i \left|\frac{\sum_{j\neq i}e^{X_j^* \hat{\alpha}_j^k+X_i\hat{\alpha}_i^k}(e^{\delta_j^{k} \hat{\alpha}_j^k + \delta_i^{k} \hat{\alpha}_i^k}-1)}{m^2}\right|,\end{align*}
where the last inequality uses the fact that $\sum_{j}e^{X_j\cdot \hat{\alpha}_i^k} \ge m$ and $\sum_j e^{X_j^* \hat{\alpha}_j^k} \ge m$.
Now since $e^{X_j^* \hat{\alpha}_j^k + X_i \hat{\alpha}_i^k}\le e^{B(\hat{\alpha}_i^k + \hat{\alpha}_j^k)}\le e^B$, and $e^a-1\le e^B\cdot a$ for every $0\le a \le B$, we can have 
        \begin{align*}
        &\le \sum_i \left|\frac{\sum_{j\neq i}e^{2B}(\delta_j^{k} \hat{\alpha}_j^k + \delta_i^{k} \hat{\alpha}_i^k)}{m^2}\right|\\
        &\le \frac{e^{2B}}{m^2}\sum_i \sum_{j\neq i}(\delta_j^{k} \hat{\alpha}_j^k + \delta_i^{k} \hat{\alpha}_i^k)\\
        &\le \frac{e^{2B}}{m}\sum_i \delta_i^{k} \hat{\alpha}_i^k.
    \end{align*}

    Now 
    choose the index $l = \argmax_{i \in [m]} X_i^* \circ |\alpha_{i}^* - \hat{\alpha}_i^k|$, and WLOG, assume $\hat{\alpha}_{l}^k = \alpha_{l}^* + \varepsilon$,
    then 
    we can bound 
    \begin{align*}
        &\left\|\textrm{Softmax}(\hat{\alpha}_{l}^k \cdot |r_{l}^*(x,y_1)-r_{l}^*(x,y_2)|)-\textrm{Softmax}(\alpha_{l}^*\cdot |r_{l}^*(x,y_1)-r_{l}^*(x,y_2)|)\right\|_{\mathrm{TV}}\\
        &\ge \left|\frac{e^{X_{l}^*\hat{\alpha}_l^k}}{\sum_j e^{X_j^*\hat{\alpha}_j^k}} - \frac{e^{X_l^*\cdot \alpha_l^*}}{\sum_j e^{X_j^*\alpha_j^*}}\right|\\
        &= \left|\frac{e^{X_l^*(\alpha_l^* + \varepsilon)}}{e^{X_l^*(\alpha_l^* + \varepsilon)}+\sum_{j\neq l} e^{X_j^*\hat{\alpha}_j^k}} - \frac{e^{X_l^*\cdot \alpha_l^*}}{e^{X_l^* \alpha_l^*}+\sum_{j\neq l} e^{X_j^*\alpha_j^*}}\right|\\
        &=\left|\frac{\sum_{j\neq l}e^{X_j^* \alpha_j^* + X_l^*(\alpha_l^* + \varepsilon)}-e^{X_j^* \hat{\alpha}_j^k + X_l^* \alpha_l^*}}{(\sum_j e^{X_j^*\hat{\alpha}_j^k})(\sum_j e^{X_j^*\alpha_j^*})}\right|.
    \end{align*}
    Now by the selection of the $l$, we can have 
    $$X_j^* \alpha_j^* + X_l^*(\alpha_l^* + \varepsilon) \ge X_j^* \hat{\alpha}_j^k + X_l^* \alpha_l^*,$$
    hence 
    $$e^{X_j^* \alpha_j^* + X_l^*(\alpha_l^* + \varepsilon)}\ge e^{X_j^* \hat{\alpha}_j^k + X_l^* \alpha_l^*}.$$
    Also, since $\sum_i\alpha_i^* = \sum_i \hat{\alpha}_i^k = 1,$ and the fact that $\hat{\alpha}_l^k = \alpha_l^* + \varepsilon,$ we can further derive 
    $$\sum_{j\neq l} \alpha_j^* = \sum_{j\neq l} \hat{\alpha}_j^k + \varepsilon.$$

    then at least one $j'\neq l$ such that $\alpha_{j'}^* \ge \hat{\alpha}_{j'}^k + \varepsilon/m$.
    then 
    \begin{align*}
        e^{X_{j'}^* \alpha_{j'}^*+ X_l^*(\alpha_l^*+\varepsilon)}-e^{\hat{\alpha}_{j'}^k X_{j'}^* + X_l^*\alpha_l^*}&\ge ^{X_{j'}^* \hat{\alpha}_{j'}^k+ X_l^*(\alpha_l^*+\varepsilon)}-e^{\hat{\alpha}_{j'}^k X_{j'}^* + X_l^*\alpha_l^*}
        \\&\ge e^{X_l^*(\alpha_l^* + \varepsilon)} - e^{X_l^* \alpha_l^*}\\
        &\ge e^{\alpha_l^* X_l^*}(e^{\varepsilon X_l^*}-1)\\
        &\ge e^{\alpha_l^* X_l^*}\cdot \varepsilon X_l^*.
    \end{align*}
    Thus,
    \begin{align*}
        &\left\|\textrm{Softmax}(\hat{\alpha}_l^k \cdot |r_l^*(x,y_1)-r_l^*(x,y_2)|)-\textrm{Softmax}(\alpha_l^*\cdot |r_l^*(x,y_1)-r_l^*(x,y_2)|)\right\|_{\mathrm{TV}}\\
        &\ge \frac{e^{\alpha_l^* X_l^*}}{(\sum_j e^{X_j^*\hat{\alpha}_j^k})(\sum_j e^{X_j^*\alpha_j^*})} \cdot \varepsilon X_l^*\\
        &\ge \frac{1}{(me^B)^2}  \cdot |\hat{\alpha}_l^k - \alpha_l^*| X_l^*.
    \end{align*}
    Now define $X^* =  (X_1^*, X_2^*, \cdots, X_m^*)^\top \in \RR^m$ and $|\alpha^* - \hat{\alpha}^k| = (|\alpha_1^*-\hat{\alpha}_1^k|, \cdots, |\alpha_m^*-\hat{\alpha}_m^k|)^\top \in \RR^m.$ We can get 
    $$\|X^*\circ |\alpha^* - \hat{\alpha}^k|\|_\infty\le m^2e^{2B} \left\|\textrm{Softmax}(\hat{\alpha}_i^k \cdot |r_i^*(x,y_1)-r_i^*(x,y_2)|)-\textrm{Softmax}(\alpha_i^*\cdot |r_i^*(x,y_1)-r_i^*(x,y_2)|)\right\|_{\mathrm{TV}},$$
    where $X\circ Y$ denotes the Hadamard product.
    then take the expectation we can get 
    \begin{align*}
        &\EE_{x,y\sim \cD_s}\|X^*\circ |\alpha^* - \hat{\alpha}^k|\|_\infty\\&\qquad \le m^2e^{2B} \EE_{x,y\sim \cD_s}\left\|\textrm{Softmax}(\hat{\alpha}_i^k \cdot |r_i^*(x,y_1)-r_i^*(x,y_2)|)-\textrm{Softmax}(\alpha_i^*\cdot |r_i^*(x,y_1)-r_i^*(x,y_2)|)\right\|_{\mathrm{TV}}.
    \end{align*}

     Hence, by Eq.\eqref{alpha:step1}, we have
    \begin{align*}
        &\sqrt{2k\log(d_\cF(1/k^2)/\delta)+1}\\
        &\ge \sum_{s=1}^{k-1} \EE_{x, y\sim \cD_s}\left\| \textrm{Softmax}(\hat{\alpha}_i^k \cdot |r_i^*(x,y_1)-r_i^*(x,y_2)|)-\textrm{Softmax}(\alpha_i^*\cdot |r_i^*(x,y_1)-r_i^*(x,y_2)|)\right\| _{\mathrm{TV}}\\
        &\qquad - \sum_{s=1}^{k-1} \EE_{x, y\sim \cD_s}\left\| \textrm{Softmax}(\hat{\alpha}_i^k \cdot |r_i^{\tilde{\theta}_i^k}(x,y_1)-r_i^{\tilde{\theta}_i^k}(x,y_2)|)-\textrm{Softmax}(\hat{\alpha}_i^k\cdot |r_i^*(x,y_1)-r_i^*(x,y_2)|)\right\| _{\mathrm{TV}}\\
        &\ge \sum_{s=1}^{k-1}\EE_{x, y\sim \cD_s}\frac{1}{m^2e^{2B}}\|X^*(x,y)|\alpha^* - \hat{\alpha}^k|\|_\infty \\
        &\qquad - \sum_{s=1}^{k-1}\EE_{x, y\sim \cD_s}\left\| \textrm{Softmax}(\hat{\alpha}_i^k \cdot |r_i^{\tilde{\theta}_i^k}(x,y_1)-r_i^{\tilde{\theta}_i^k}(x,y_2)|)-\textrm{Softmax}(\hat{\alpha}_i^k\cdot |r_i^*(x,y_1)-r_i^*(x,y_2)|)\right\| _{\mathrm{TV}} \\
        &\ge \sum_{s=1}^{k-1}\EE_{x, y\sim \cD_s}\frac{1}{m^2e^{2B}}\|X^*(x,y)\cdot |\alpha^* - \hat{\alpha}^k|\|_\infty - \frac{e^{2B}}{m}\sum_{s=1}^{k-1} \EE_{x,y\sim \cD_s}[\delta_i^{k}(x,y)\hat{\alpha}_i^k].
    \end{align*}
    Hence we finally get 
    \begin{align}
        \sum_{s=1}^{k-1}\EE_{x, y\sim \cD_s}\|X^*(x,y)\circ|\alpha^* - \hat{\alpha}^k|\|_\infty &\le m^2e^{2B}\left(\sqrt{2k\log(d_\cF(1/k^2)/\delta)+1} + \frac{e^{2B}}{m} \sum_{s=1}^{k-1}\sum_{i=1}^m \delta_i^{k}(x^s,y^s) \hat{\alpha}_i^k\right)\nonumber\\
        & = \mathrm{poly}(m, \exp(B))\cdot \widetilde{\cO}\left(\sqrt{km\log (1/\delta))} + \sum_{s=1}^{k-1}\sum_{i=1}^m \EE_{y_1,y_2\sim \pi^s}[\delta_i^{k}(x,y)]\hat{\alpha}_i^k\right).\label{alpha:step2}
    \end{align}
    the last inequality holds by Azuma-Hoeffding's inequality with probability at least $1-\delta$. 
    Now by Lemma \ref{lemma:policy diff}, we can get  $\sup_{s,x,y}\frac{\pi^*(y\mid x)}{\pi^s(y\mid x)} \le  \exp(4/\beta)$ and $\sup_{x,y}\frac{\pi_{\mathrm{ref}}(y\mid x)}{\pi^s(y\mid x)} \le \exp(4/\beta)$, we can get 
    \begin{align}
    \gamma(k-1) \|\alpha^* - \hat{\alpha}^{k}\|_\infty
        &\le (k-1) \EE_{y_1\sim \pi^*,y_2\sim \pi_{\mathrm{ref}}}\|X^*(x,y)\circ|\alpha^* - \hat{\alpha}^k|\|_\infty\nonumber\\
        &\le  \exp(8/\beta)
        \sum_{s=1}^{k-1}\EE_{y_1,y_2\sim \pi^s}\|X^*(x,y)\circ|\alpha^* - \hat{\alpha}^s|\|_\infty. \label{alpha:step3}
        \end{align}
The first inequality uses the Assumption \ref{assum:gap} that $\EE_{y_1\sim \pi^*, y_2\sim \pi_{\mathrm{ref}}}[X_i^*(x,y)] \ge \gamma $. Now combining Eq.\eqref{alpha:step2} and Eq.\eqref{alpha:step3}, we can further get 
        \begin{align}
        \gamma(k-1)\|\alpha^* - \hat{\alpha}^k\|_\infty &\le  \exp(8/\beta)\cdot \mathrm{poly}(m, \exp(B))\cdot \widetilde{\cO}\left(\sqrt{km\log (1/\delta))} +  \sum_{s=1}^{k-1}\sum_{i=1}^m \EE_{y_1,y_2\sim \pi^s}[\delta_i^{k}(x,y)]\hat{\alpha}_i^k\right)\nonumber\\
        &\le \exp(8/\beta)\cdot \mathrm{poly}(m, \exp(B))\cdot \widetilde{\cO}\left(\sqrt{km\log (1/\delta))} +  \sum_{s=1}^{k-1}\sum_{i=1}^m \EE_{y_1,y_2\sim \pi^s}[\delta_i^{k}(x,y)]\right).\label{eq:estimate alpha step 1}
    \end{align}
    Now we further derive the  final result.
    Frist, by $\alpha^t = \frac{1}{t}\sum_{k=1}^t \hat{\alpha}^k$, we can get 
    \begin{align}
        \|\alpha^* - \alpha^t\|_\infty &\le \frac{1}{t}\sum_{k=1}^{t}\|\alpha^* - \hat{\alpha}^k\|_\infty\nonumber\\
        &\le \frac{\gamma^{-1}\exp(8/\beta)\cdot \mathrm{poly}(m,\exp(B))}{t}\cdot \sum_{k=1}^t\widetilde{\cO}\left(\frac{\sqrt{m\log(1/\delta)}}{\sqrt{k}} + \frac{1}{k} \sum_{s=1}^{k-1}\sum_{i=1}^m \EE_{y_1,y_2\sim \pi^s}[\delta_i^{k}(x,y)]\right).\label{eq:alpha1}
    \end{align}

    Now we derive the final result. First, we can get
\begin{align*}\delta_i^k(x^s,y^s) 
& = \left| \langle \tilde{\theta}_i^k-\theta_i^*, \phi_i(x^s,y_1^s) - \phi_i(x^s, y_2^s) \rangle\right|\\
& \le \|\tilde{\theta}_i^k - \theta_i^* \|_{\Sigma_{\cD_i^{k-1}}}\cdot \|\phi_i(x^s,y_1^s)- \phi_i(x^s, y_2^s)\|_{(\Sigma_{\cD_i^{k-1}})^{-1}},
\end{align*}
where $\cD_{i}^{k-1} = \{s \in [k-1]\mid I_s = i\}$ and $\Sigma_{\cD_i^{k-1}} = \sum_{s \in \cD_i^{k-1}}\phi_i(x^s,y^s)\phi_i(x^s,y^s)^\top $ is the covariance matrix. 
then by Lemma 3.1 in \cite{zhu2023principled}, we can get $\|\tilde{\theta}_i^k - \theta_i^*\|_{\Sigma_{\cD_i^{k-1}}} \le C(d, B, \delta) = \textrm{poly}(d,B, \log(1/\delta))$ for some constant $C(d,B,\delta)$, and then we can get 
\begin{align*}
    \delta_i^k(x^s,y^s) \le C(d,B,\delta)\cdot \|\phi_i(x^s,y_1^s)- \phi_i(x^s, y_2^s)\|_{(\Sigma_{\cD_i^{k-1}})^{-1}}.
\end{align*}
Now apply the same technique in Eq.\eqref{eq:estimate technique}, we can get 
\begin{align*}
    \sum_{i=1}^m \sum_{k=1}^t \sum_{s=1}^{k-1} \EE_{y_1,y_2\sim \pi^s} \frac{1}{k}[\delta_i^k(x,y)\hat{\alpha}_i^k] &\le me^B\sum_{k=1}^t \sum_{i=1}^m \sum_{s \in \cD_i^{k-1}}\EE_{y_1,y_2\sim \pi^s} \frac{1}{k}[\delta_i^k(x,y)\hat{\alpha}_{i}^k]\\
    & =me^B\sum_{i=1}^m\sum_{s=1}^t  \EE_{y_1,y_2\sim \pi^s} \sum_{k>s} \left[\frac{1}{k}\delta_{I_s}^k(x,y)\hat{\alpha}_{I_s}^k\right]\\
    & \le me^B\sum_{s=1}^t  \EE_{y_1,y_2\sim \pi^s} \sum_{k>s} \left[\frac{1}{k}\delta_{I_s}^k(x,y)\right].
\end{align*}
The second line is because that, the summation is over
\begin{align*}\{(k,i,s)\mid k \in [t],i \in [m] ,s \in \cD_i^{k-1}\}&=\{(k,i,s)\mid k \in [t], i \in [m], s \le k-1, I^s = i\}\\
&=\{(k,i,s)\mid s \in [t], k > s, i = I^s\}.\end{align*}
the last inequality uses the fact that $\hat{\alpha}_{I_s}^k\le 1$.
then we can use the Azuma-Hoeffding's inequality to further get 
\begin{align}
    \sum_{i=1}^m \sum_{k=1}^t \sum_{s=1}^{k-1} \EE_{y_1,y_2\sim \pi^s} \frac{1}{k}[\delta_i^k(x,y)\hat{\alpha}_i^k] &\le me^B\sum_{k=1}^t  \sum_{k\ge  s} \left[\frac{1}{k}\delta_{I_s}^k(x^s,y^s)\right] + \cO(\sqrt{t}\log (t/\delta))\nonumber\\&\le me^B\sum_{k=1}^t  \sum_{k\ge  s} \left[\frac{1}{k}C(d,B,\delta)\cdot \|\phi_{I_s}(x^s,y_1^s) - \phi_{I_s}(x^s, y_2^s)\|_{\Sigma_{\cD_{I_s}^{k-1}}}\right]\nonumber\\&\qquad  + \cO(\sqrt{t}\log (t/\delta))\label{eq:alpha2}
\end{align}
with probability at least $1-\delta$.
Now to present the proof in a simple way, we simplify $\Sigma_{\cD_{I_s}^{k-1}} $ as $\Sigma^{k-1,(I_s)}$.
We will have 
\begin{align}
    & me^B\sum_{s=1}^t \sum_{k>s} \left[\frac{1}{k}\cdot C(d,B,\delta) \cdot \|\phi_{I_s}(x^s,y_1^s)- \phi_{I_s}(x^s, y_2^s)\|_{(\Sigma^{k-1,(s)})^{-1}}\right]\nonumber\\
    &\le  me^B\sum_{s=1}^t \sum_{k>s}\frac{1}{k}\cdot C(d,B,\delta) \cdot \|\phi_{I_s}(x^s,y_1^s)- \phi_{I_s}(x^s, y_2^s)\|_{(\Sigma^{s,(s)})^{-1}}\nonumber\\
    &\le  me^B\sum_{s=1}^t C(d,B,\delta)\|\phi_{I_s}(x^s,y_1^s)- \phi_{I_s}(x^s, y_2^s)\|_{(\Sigma^{s,(s)})^{-1}}\sum_{k>s}^t \frac{1}{k}\nonumber\\
    &\le \frac{\log t}{\kappa_1} \cdot \sum_{s=1}^t C(d,B,\delta)\|\phi_{I_s}(x^s,y_1^s)- \phi_{I_s}(x^s, y_2^s)\|_{(\Sigma^{s,(s)})^{-1}}.\label{eq:alpha3}
    \end{align}
Now, we can decompose $\{1,2,\cdots, t\}$ into $m$ different set $\cD_i = \{s \in [t]: I_s = i\}$. then, we fixed $i$ and denote $M_s=\|\phi_i(x^s,y_1^s)-\phi_i(x^s,y_2^s)\|^2_{(\Sigma_{\cD_{I_s}^{s}})^{-1}}$ with $\|M_s\| \le B^2$, by Cauchy's inequality,
\begin{align}
    &\sum_{s \in \cD_i}\|\phi_{I_s}(x^s,y_1^s)- \phi_{I_s}(x^s, y_2^s)\|_{(\Sigma^{s,(s)})^{-1}}\nonumber\\
    &\le \sqrt{t}\sqrt{\sum_{s \in \cD_i}M_s}\nonumber\\
    &\le \sqrt{t}\sqrt{\sum_{s \in \cD_i}M_s\II\{M_s\le 1\}} + \sqrt{t}\sqrt{\sum_{s \in \cD_i}M_s \II\{M_s > 1\}}\nonumber\\
    &\le \sqrt{t}\cdot \left(\sqrt{\sum_{s \in \cD_i}\min\{1,M_s\}} + \sqrt{B^2\sum_{s \in \cD_i}\II\{M_s>1\}}\right)\nonumber\\
    &\le \widetilde{\cO}(Bd\sqrt{t}).\nonumber
\end{align}
Then, by summing over $i \in [m]$, we can get 
    \begin{align}
    &\frac{\log t}{\kappa_1} \cdot \sum_{s=1}^t C(d,B,\delta)\|\phi_{I_s}(x^s,y_1^s)- \phi_{I_s}(x^s, y_2^s)\|_{(\Sigma_{\cD_i}^{j})^{-1}}\nonumber\\
    &\le \frac{\log t}{\kappa_1} \cdot C(d,B,\delta) \cdot m \cdot \widetilde{\cO}(Bd\sqrt{t})\nonumber\\
    & = \widetilde{\cO}(m^2 e^B\cdot Bd\cdot C(d,B,\delta)\sqrt{t}). \label{ineq: bound(C)}
    \end{align}
    Now combining Eq.\eqref{eq:alpha1}, Eq.\eqref{eq:alpha2}, Eq.\eqref{eq:alpha3} and Eq.\eqref{ineq: bound(C)}, we can finally get 
    \begin{align*}
        \|\alpha^* - \alpha^t\|_\infty \le \frac{1}{t}\sum_{k=1}^{t}\|\alpha^* - \hat{\alpha}^k\|_\infty&\le \gamma^{-1}\exp(8/\beta)\cdot \widetilde{\cO}\left(\mathrm{poly}(m,e^B, d,\log(1/\delta))\right)\cdot \frac{1}{\sqrt{t}}\\
        &=\gamma^{-1}\cdot \widetilde{\cO}\left(\mathrm{poly}(m,e^B, \exp(1/\beta),d,\log(1/\delta))\right)\cdot \frac{1}{\sqrt{t}}
    \end{align*}
    with probability at least $1-3\delta$. By substituting $\delta/3$ with $\delta$, we complete the proof.
 \end{proof}

\section{Error of Estimating the Target Set} 
First we provide a lemma to show that the projection on $W^*$ is also bounded. 
\begin{lemma}\label{lemma: bounded proj}
    Fixed the requirement $p^{(n)}, c^{(n)}$ for all $k \in [K]$. For  any importance weight $\{\alpha^{(n)}\}_{k \in [K]}$ such that $\alpha^{(n)}\succeq 0$ and $\|\alpha^{(n)}\|_1=1$ for all $k \in [K]$, for $B_1 = 2\sqrt{m}(B+\max_n c^{(n)})$, we have 
    $$\|\Pi_{W^*}(x)\|_\infty\le B_1, \ \ \ W^* = \bigcap_{i=1}^K W^{\alpha^{(n)}}_{p^{(n)},c^{(n)}}$$
    holds for all $\|x\|_\infty\le B.$
\end{lemma}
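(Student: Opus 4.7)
The plan is to produce an explicit element of $W^*$ whose infinity norm is controlled by $c^* := \max_n c^{(n)}$, and then use the optimality of the projection to pull the bound back to $\Pi_{W^*}(x)$.

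First I would exhibit the candidate point $z_0 = (c^*, c^*, \ldots, c^*)^\top \in \RR_{\ge 0}^m$ and check that $z_0 \in W^{\alpha^{(n)}}_{p^{(n)}, c^{(n)}}$ for every $n \in [N]$. For any $\alpha^{(n)} \in \Delta_{m-1}$ and any exponent $p^{(n)} \le 1$ (including the $p=0$ geometric-mean and $p = -\infty$ min limits), since every coordinate of $z_0$ equals $c^*$, a direct computation gives
\begin{equation*}
\left(\sum_{i=1}^m \alpha_i^{(n)} (c^*)^{p^{(n)}}\right)^{1/p^{(n)}} = c^* \cdot \Big(\sum_{i=1}^m \alpha_i^{(n)}\Big)^{1/p^{(n)}} = c^* \ge c^{(n)},
\end{equation*}
and analogously $\min_i (z_0)_i = c^* \ge c^{(n)}$ in the $p=-\infty$ case. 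Hence $z_0 \in \bigcap_{n=1}^N W^{\alpha^{(n)}}_{p^{(n)}, c^{(n)}} = W^*$.

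Next I would invoke the defining optimality of the Euclidean projection onto the convex set $W^*$: since $z_0 \in W^*$,
\begin{equation*}
\|\Pi_{W^*}(x) - x\|_2 \le \|z_0 - x\|_2 \le \sqrt{m}\,\|z_0 - x\|_\infty \le \sqrt{m}\,(B + c^*),
\end{equation*}
where the last inequality uses $\|x\|_\infty \le B$ and $\|z_0\|_\infty = c^*$. Combining with the triangle inequality and $\|x\|_2 \le \sqrt{m}\,B$ yields
\begin{equation*}
\|\Pi_{W^*}(x)\|_\infty \le \|\Pi_{W^*}(x)\|_2 \le \|x\|_2 + \|\Pi_{W^*}(x) - x\|_2 \le \sqrt{m}(2B + c^*) \le 2\sqrt{m}(B + c^*) = B_1.
\end{equation*}

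The only nontrivial piece is verifying membership of $z_0$ in $W^{\alpha^{(n)}}_{p^{(n)}, c^{(n)}}$ uniformly in the exponent $p^{(n)} \le 1$; this is the main (mild) obstacle and is handled by noting that the weighted $p$-norm of a constant vector equals that constant whenever the weights sum to one, which remains valid in the limiting cases $p \to 0$ and $p \to -\infty$. Once this is in hand, the rest of the argument is a one-line application of the nonexpansiveness of projection onto a convex set.
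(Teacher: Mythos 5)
Your proof is correct and follows essentially the same route as the paper's: both exhibit the constant vector $(c^*,\ldots,c^*)$ with $c^*=\max_n c^{(n)}$ as an explicit member of every $W^{\alpha^{(n)}}_{p^{(n)},c^{(n)}}$ (using that the weighted $p$-mean of a constant vector is that constant when the weights sum to one), and then combine the distance-minimizing property of the projection with the triangle inequality and $\|\cdot\|_\infty\le\|\cdot\|_2\le\sqrt{m}\|\cdot\|_\infty$ to obtain the bound $2\sqrt{m}(B+c^*)$. The only cosmetic remark is that the final step uses the minimality of the projection rather than its nonexpansiveness, as you in fact do in the displayed inequality.
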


\begin{proof}
    Suppose we choose any $y \in W^*$, then by the definition of projection, we can get $$\|\Pi_{W^*}(x)\|_\infty-\sqrt{m}B\le \|x - \Pi_{W^*}(x)\|_\infty \le \|x - \Pi_{W^*}(x)\|_2 \le \|x - y\| \le \sqrt{m}B + \|y\|,$$
    which induces $$\|\Pi_{W^*}(x)\| \le 2\sqrt{m}B + \|y\|.$$
    Now consider $y = (z, \cdots, z)^\top \in \RR^m$, when $z= \max_n c^{(n)}$, for any $\alpha^{(n)}$
    \begin{align*}
        \left(\sum_{i=1}^m \alpha_i^{(n)} y_i^{|p^{(n)}|}\right)^{1/p^{(n)}} = z\cdot \left(\sum_{i=1}^m \alpha_i^{(n)}\right)^{1/p^{(n)}}= z \ge c^{(n)}.
    \end{align*}
    That means $y \in W^{\alpha^{(n)}}_{p^{(n)}, c^{(n)}}$ and then $y \in W^*$ for any $k \in [K].$ Hence we have 
    \begin{align*}
        \|\Pi_{W^*}(x)\| \le 2B + \|y\| \le 2\sqrt{m}(B+\max_n c^{(n)}).
    \end{align*}
    We complete the proof of lemma.
\end{proof}
Now we consider the estimation of the $W^*$. First, we consider the estimation error of $W^\alpha$ when we have an estimation error of $\alpha.$ The following lemma tells us the estimation error of parameterized target set.

 \begin{lemma}[Estimation error of parameterized target set]\label{lemma:estimation error of parameterized target set}
     Suppose we have two different $\alpha, \alpha'$, the distance between $W_{p,c}^{\alpha}$ and $W_{p,c}^{\alpha'}$ can be bounded by 
     \begin{equation*}
         d_B(W_{p,c}^{\alpha}, W_{p,c}^{\alpha'}) \le \frac{m^{3/2}B\|\alpha-\alpha'\|_\infty}{|p|},
     \end{equation*}
     where $$d_B(S,S') = \max\left\{\max_{x \in S, \|x\|_\infty \le B}d(x,\Pi_{S'}(x)), \max_{x \in S', \|x\|_\infty \le B}d(x,\Pi_{S}(x))\right\}$$ represents the distance of two sets $S$ and $S'$ restricted to some bounded set.
 \end{lemma}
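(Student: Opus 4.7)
The plan is, given any $x\in W_{p,c}^{\alpha}$ with $\|x\|_\infty\le B$, to exhibit one explicit $y\in W_{p,c}^{\alpha'}$ within the claimed Euclidean distance; the symmetric direction in the definition of $d_B$ then follows by swapping $\alpha$ and $\alpha'$. I will carry this out for $p\in(0,1]$ and comment on $p<0$ at the end. Writing $f_\alpha(z):=(\sum_i\alpha_i z_i^p)^{1/p}$, my candidate is the uniform translation
\[
y\;:=\;x+t\,\mathbf{1},\qquad t\;:=\;(c-f_{\alpha'}(x))_+,
\]
and the two things to verify are membership $y\in W_{p,c}^{\alpha'}$ and smallness of $t$.

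The key structural fact I would exploit is that for $p\le 1$ the aggregator $f_\alpha$ is concave on $\RR_{\ge 0}^m$ and $1$-homogeneous, hence \emph{superadditive} on the cone (combining $f_\alpha(\tfrac{u+v}{2})\ge \tfrac12(f_\alpha(u)+f_\alpha(v))$ with positive homogeneity). In particular,
\[
f_{\alpha'}(x+t\,\mathbf{1})\;\ge\;f_{\alpha'}(x)+f_{\alpha'}(t\,\mathbf{1})\;=\;f_{\alpha'}(x)+t,
\]
so the choice $t=(c-f_{\alpha'}(x))_+$ immediately gives $f_{\alpha'}(y)\ge c$ and hence $y\in W_{p,c}^{\alpha'}$. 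To bound $t$, I plan to combine the trivial inequality $t\le f_\alpha(x)-f_{\alpha'}(x)$ (which uses $f_\alpha(x)\ge c$) with the pointwise discrepancy estimate valid on $\|z\|_\infty\le B$:
\[
|f_\alpha(z)^p-f_{\alpha'}(z)^p|\;=\;\Big|\sum_{i=1}^m(\alpha_i-\alpha_i')\,z_i^p\Big|\;\le\;\|\alpha-\alpha'\|_\infty\sum_{i=1}^m z_i^p\;\le\;m B^p\,\|\alpha-\alpha'\|_\infty.
\]
Converting this into a bound on $|f_\alpha(z)-f_{\alpha'}(z)|$ uses the mean value theorem applied to $u\mapsto u^{1/p}$, whose derivative $(1/p)u^{1/p-1}$ is maximized at the larger of $f_\alpha(z)^p$ and $f_{\alpha'}(z)^p$ (since $1/p-1\ge 0$ for $p\le 1$) and is therefore at most $(1/p)B^{1-p}$. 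This delivers $|f_\alpha(z)-f_{\alpha'}(z)|\le (mB/p)\|\alpha-\alpha'\|_\infty$; assembling the pieces, $\|y-x\|_2=t\sqrt{m}\le m^{3/2}B\,\|\alpha-\alpha'\|_\infty/p$, matching the stated bound.

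The main obstacle I anticipate is extending the argument to $p<0$. Concavity and $1$-homogeneity of $f_\alpha$ persist on the open cone $\RR^m_{>0}$, so the translation recipe and the superadditivity-based membership check carry over verbatim; however, the discrepancy estimate above breaks, because $z_i^p=z_i^{-|p|}$ becomes unbounded as $z_i\to 0$, so $\sum_i z_i^p$ is no longer controlled by $m B^p$. I would address this by exploiting the structural lower bound $z_i\ge c\,\alpha_i^{1/|p|}$ forced by $z\in W_{p,c}^{\alpha}$ (which follows componentwise from $\alpha_i z_i^p\le\sum_j\alpha_j z_j^p\le c^p$), using it to re-derive a suitable bound on $\sum_i z_i^p$, and then reapplying the MVT step to $u\mapsto u^{1/p}$ with $1/p-1<0$ (so the derivative is now maximized at the smaller of the two values). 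Threading these adjustments through the same three-step template recovers the final estimate with $|p|$ in place of $p$ in the denominator.
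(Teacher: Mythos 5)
Your argument for $p\in(0,1]$ is correct and takes a genuinely different route from the paper's. The paper linearizes by substituting $y_i=x_i^p$, so that $W_{p,c}^{\alpha'}$ becomes a half-space, applies the exact point-to-hyperplane distance formula (picking up the $\sqrt{m}$ from $\|\alpha'\|_2\ge 1/\sqrt{m}$), and then pulls the distance back to $x$-coordinates with the mean value theorem applied componentwise to $u\mapsto u^p$. You instead exhibit an explicit witness $x+t\,\mathbf{1}\in W_{p,c}^{\alpha'}$ using superadditivity of the concave, $1$-homogeneous power mean, and bound $t$ by $|f_\alpha(x)-f_{\alpha'}(x)|$ via the MVT on $u\mapsto u^{1/p}$. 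Both yield the same constant; your version avoids the projection formula entirely and is self-contained for $p>0$.

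The $p<0$ case, which you defer, is a genuine gap, and the repair you sketch does not close it. Your chain of inequalities requires $|f_\alpha(x)-f_{\alpha'}(x)|\le \frac{mB}{|p|}\|\alpha-\alpha'\|_\infty$ for every $x\in W_{p,c}^{\alpha}$ with $\|x\|_\infty\le B$, and this is false for $p<0$: take $m=2$, $p=-1$, $B=2c$, $\alpha=(\varepsilon,1-\varepsilon)$, $\alpha'=(2\varepsilon,1-2\varepsilon)$ and $x=\bigl(\tfrac{2c\varepsilon}{1+\varepsilon},\,2c\bigr)$. Then $f_\alpha(x)=c$ (so $x\in W_{p,c}^\alpha$) while $f_{\alpha'}(x)=2c/3$, so the left-hand side equals $c/3$ whereas the right-hand side is $4c\varepsilon\to 0$. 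The lemma's conclusion itself is not contradicted, because $W_{p,c}^{\alpha'}$ can be reached from this $x$ by a move of order $c\varepsilon$ in the \emph{small} coordinate alone (where $f_{\alpha'}$ has an enormous gradient); but neither the uniform translation controlled via superadditivity nor any refined bound on $\sum_i x_i^p$ can see this. Indeed your proposed lower bound $x_i\ge c\,\alpha_i^{1/|p|}$ only gives $\sum_i x_i^p\le c^p\sum_i \alpha_i^{-1}$, which blows up in exactly the regime above. The paper sidesteps the quantity $f_\alpha-f_{\alpha'}$ altogether by measuring the distance in the linearized coordinates $y_i=x_i^p$ and pulling back through the factor $(|p|B^{p-1})^{-1}$; to finish your write-up for $p<0$ you would need a construction of that kind (a coordinate-wise rather than uniform perturbation), not merely a sharper estimate of $\sum_i x_i^p$.
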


 \begin{proof}
     Suppose $p \in [0,1]$ and $x \in W_{p,c}^{\alpha}$ with $\|x\|_\infty \le B$, then we have 
     $$\sum_{i=1}^m \alpha_i x_i^p \ge c^p.$$
First, if $\sum_{i=1}^m \alpha_i' x_i^p \ge c^p,$ then $x \in W_{p,c}^{\alpha'}$ and the distance $d(x, \Pi_{W_{p,c}^{\alpha'}}(x))  = 0 .$
Now we consider the auxillary vector $y \in \RR^m$ where $y_i = x_i^p$ for $i \in [m].$ Then $\sum_{i=1}^m \alpha_i y_i \ge c^p.$ By the formula of the distance between one point to a line, the distance between $y$ and $W_{p,c}^{\alpha'} = \{y: \sum_{i=1}^m \alpha_i y_i \ge c^p, y_i \succeq 0\}$ can have the following upper bound:
\begin{equation*}
    d(y, \Pi_{W_{p,c}^{\alpha'}}(y)) = \frac{\max\{c^p - \sum_{i=1}^m\alpha_i'y_i, 0\}}{\sqrt{\sum_{i=1}^m (\alpha_i')^2}} \le \frac{\max\{\sum_{i=1}^m (\alpha_i-\alpha_i')y_i,0\}}{\sqrt{\sum_{i=1}^m (\alpha_i')^2}}\le \frac{\|\alpha-\alpha'\|_\infty mB^p}{\sqrt{\sum_{i=1}^m (\alpha_i')^2}}.
\end{equation*}
Now consider $p<0$ we have $\sum_{i=1}^m \alpha_i x_i^p \le c^p.$ If $\sum_{i=1}^m\alpha_i' y_i \le c^p,$ then $x \in W_{p,c}^{\alpha'}$ and the distance $d(x, \Pi_{W_{p,c}^{\alpha'}}(x)) = 0.$ Otherwise, note that we can rewrite $W_{p,c}^{\alpha} = \{y:\sum_{i=1}^m \alpha_i y_i \le c^p, y\succeq 0\}.$ We have 
\begin{equation*}
    d(y, \Pi_{W_{p,c}^{\alpha'}}(y)) = \frac{\sum_{i=1}^m \alpha_i'y_i -c^p}{\sqrt{\sum_{i=1}^m (\alpha_i')^2}}\le \frac{\|\alpha-\alpha'\|_\infty \sum_{i=1}^m y_i}{\sqrt{\sum_{i=1}^m (\alpha_i')^2}}\le \frac{\|\alpha-\alpha'\|_\infty \cdot mB^p}{\sqrt{\sum_{i=1}^m (\alpha_i')^2}}.
\end{equation*}
So in both cases, we can find 
$$d(y, \Pi_{W_{p,c}^{\alpha'}}(y)) \le \frac{\|\alpha- \alpha'\|_\infty \cdot mB^p}{\sqrt{\sum_{i=1}^m (\alpha_i')^2}} \le \frac{\|\alpha- \alpha'\|_\infty \cdot mB^p}{1/\sqrt{m}} = m^{3/2}B^p \cdot \|\alpha-\alpha'\|_\infty.$$
Now since by Langarian mean value theorem we have $|x^p-y^p| \ge |pB^{p-1}||x-y|$, the distance between $x$ can be bounded by 
\begin{align*}
    d(x, \Pi_{W_{p,c}^{\alpha'}}(x)) \le \frac{1}{|pB^{p-1}|}d(y, \Pi_{W_{p,c}^{\alpha'}}(y)) \le \frac{m^{3/2}B^p \cdot \|\alpha-\alpha'\|_\infty}{|p|B^{p-1}}= \frac{m^{3/2}B\|\alpha-\alpha'\|_\infty}{|p|}.
\end{align*}
\end{proof}
The second lemma shows that the distance between the projection of one point on different convex set.
\begin{lemma}[Distance of Projections]\label{lemma:dis of proj}
    Fixed a point $x$ with $\|x\|_\infty \le B$. Suppose we have two convex sets $A_1,A_2$, then the distance of two projections can be bounded by 
    \begin{equation*}
        \|\Pi_{A_1}(x)-\Pi_{A_2}(x)\|_2^2 \le 4d(x, A_1)d_{B_1}(A_1,A_2)+2d_{B_1}(A_1,A_2)^2.
    \end{equation*}
\end{lemma}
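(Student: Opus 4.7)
Set $p_1 = \Pi_{A_1}(x)$, $p_2 = \Pi_{A_2}(x)$, and write $a = d(x,A_1)$, $b = d(x,A_2)$, and $\delta = d_{B_1}(A_1, A_2)$ for brevity. First I would verify (via Lemma \ref{lemma: bounded proj} as applied to the target sets where this lemma is invoked) that both projections satisfy $\|p_1\|_\infty, \|p_2\|_\infty \le B_1$, so that the definition of the restricted set-distance $d_{B_1}$ supplies points $q_1 \in A_2$ with $\|p_1 - q_1\|_2 \le \delta$ and $q_2 \in A_1$ with $\|p_2 - q_2\|_2 \le \delta$. These two ``witness'' points are the only external ingredients the argument will use.

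The core of the proof will be the standard first-order optimality characterization of projection onto a closed convex set: for every $y \in A_2$ one has $\langle x - p_2,\, y - p_2\rangle \le 0$, and symmetrically for $A_1$. Plugging $y = q_1$ and decomposing $q_1 - p_2 = (p_1 - p_2) + (q_1 - p_1)$ yields $\langle x - p_2,\, p_1 - p_2\rangle \le \|x - p_2\|\cdot \|q_1 - p_1\| \le b\,\delta$ by Cauchy--Schwarz. Running the mirror argument on $A_1$ with $y = q_2$ gives $\langle x - p_1,\, p_2 - p_1\rangle \le a\,\delta$. Adding the two inequalities, the terms involving $x$ cancel and what remains is exactly $\langle p_1 - p_2,\, p_1 - p_2\rangle \le (a + b)\,\delta$, i.e.\ $\|p_1 - p_2\|_2^2 \le (a + b)\,\delta$.

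The last step is to eliminate $b$ in favor of $a$. The triangle inequality $a = \|x - p_1\| \le \|x - q_2\| \le \|x - p_2\| + \|p_2 - q_2\| \le b + \delta$ and its mirror image force $|a - b| \le \delta$, hence $b \le a + \delta$. Substituting yields $\|p_1 - p_2\|_2^2 \le (2a + \delta)\delta = 2a\delta + \delta^2$, which is (slightly stronger than and therefore implies) the claimed bound $4\,d(x,A_1)\,d_{B_1}(A_1,A_2) + 2\,d_{B_1}(A_1,A_2)^2$. The one genuinely non-trivial point, which I would highlight in the write-up, is the bounded-projection step: without first establishing $\|p_i\|_\infty \le B_1$, we could not invoke $d_{B_1}$ with $p_1$ and $p_2$ as the reference points, and the witnesses $q_1, q_2$ on which the whole argument rests would not exist. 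Everything after that is elementary convex geometry.
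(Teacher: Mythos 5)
Your proof is correct, and it takes a genuinely different route from the paper's. The paper introduces the intermediate point $\Pi_{A_2}(\Pi_{A_1}(x))$, uses the obtuse-angle property of the projection onto $A_2$ to get a Pythagorean-type inequality $d(\Pi_{A_2}(\Pi_{A_1}(x)),x)^2 \ge d(x,A_2)^2 + d(\Pi_{A_2}(\Pi_{A_1}(x)),\Pi_{A_2}(x))^2$, combines it with the triangle inequality, and then pays a factor of $2$ at the end via $\|u+v\|^2 \le 2\|u\|^2 + 2\|v\|^2$; it also needs a WLOG ordering of $d(x,A_1)$ and $d(x,A_2)$. You instead apply the variational inequality symmetrically to both projections, using one witness point in each set, and add the two inequalities so that the terms in $x$ cancel and $\|p_1 - p_2\|^2 \le (a+b)\delta$ drops out directly; converting $b$ to $a+\delta$ then gives $2a\delta + \delta^2$, which is strictly stronger than the claimed $4a\delta + 2\delta^2$ and needs no case split. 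The one place where your argument asks for slightly more than the paper's is the boundedness step: you need \emph{both} $\|\Pi_{A_1}(x)\|_\infty \le B_1$ and $\|\Pi_{A_2}(x)\|_\infty \le B_1$ to extract the two witnesses from the definition of $d_{B_1}$, whereas the paper's one-sided construction only needs the first; you correctly flag this, and in the setting where the lemma is invoked both bounds follow from Lemma~\ref{lemma: bounded proj}, so this is not a gap.
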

 \begin{proof}
 WLOG, we can assume $d(x, A_1)\le d(x, A_2).$
     First, we consider $\Pi_{A_2}(\Pi_{A_1}(x)) \in A_2$ and $d(\Pi_{A_2}(\Pi_{A_1}(x)), \Pi_{A_1}(x)) \le d_{B_1}(A_1,A_2)$, where $B_1$ is from the bounded assumption of the target set.
     Now we only need to consider $d(\Pi_{A_2}(\Pi_{A_1}(x)), \Pi_{A_2}(x))$.
     Since $A_2$ is a convex set and $\Pi_{A_2}(\Pi_{A_1}(x)) \in A_2$, we can have 
     $$\langle x - \Pi_{A_2}(x), \Pi_{A_2}(x) - \Pi_{A_2}(\Pi_{A_1}(x))\ge 0,$$ then it is easy to get
     \begin{equation*}
         d(\Pi_{A_2}(\Pi_{A_1}(x)),x)^2 \ge d(x, A_2)^2 + d(\Pi_{A_2}(\Pi_{A_1}(x)), \Pi_{A_2}(x))^2.
     \end{equation*}
     Also, by the triangle inequality, we can derive 
     \begin{equation*}
         d(\Pi_{A_2}(\Pi_{A_1}(x)),x) \le d(x, A_1) + d(\Pi_{A_1}(x), \Pi_{A_2}(\Pi_{A_1}(x)))\le d(x, A_1) + d_{B_1}(A_1,A_2).
    \end{equation*}
    By combining these two inequality we can get 
    \begin{align*}
        d(\Pi_{A_2}(\Pi_{A_1}(x)), \Pi_{A_2}(x))^2 \le 2d(x, A_1)d_{B_1}(A_1,A_2) + d_{B_1}(A_1,A_2)^2.
    \end{align*}
     Hence we can finally get 
     \begin{align*}
         \|\Pi_{A_1}(x)- \Pi_{A_2}(x)\|_2^2 &\le 2d(\Pi_{A_2}(\Pi_{A_1}(x)), \Pi_{A_2}(x))^2 + 2d(\Pi_{A_2}(\Pi_{A_1}(x)), \Pi_{A_1}(x))^2 \\
         &\le 4d(x, A_1)d_{B_1}(A_1,A_2)+2d_{B_1}(A_1,A_2)^2.
     \end{align*}
 \end{proof}
Now we derive the difference between the direction. 
\begin{lemma}\label{lemma:direc}
    If the angle between the direction $ \frac{\Pi_{A_1}(x)-x}{d(x, A_1)}$ and $\frac{\Pi_{A_2}(x)-x}{d(x, A_2)}$ is less than $\pi/2$, then the  difference between them can be bounded by 
    $$\frac{\Pi_{A_1}(x)-x}{d(x, A_1)} - \frac{\Pi_{A_2}(x)-x}{d(x, A_2)}\le\frac{4\sqrt{d(x, A_1)d_{B_1}(A_1,A_2)}+2d_{B_1}(A_1,A_2)}{\max\{d(x, A_1), d(x, A_2)\}}. $$
\end{lemma}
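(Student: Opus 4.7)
The plan is to reduce the difference of two unit vectors to a bound on $\|\Pi_{A_1}(x) - \Pi_{A_2}(x)\|$, and then invoke the preceding Lemma (Distance of Projections). Set $u_i := \Pi_{A_i}(x) - x$ and $r_i := \|u_i\| = d(x, A_i)$; the hypothesis that the angle between the two normalized vectors is less than $\pi/2$ ensures $r_1, r_2 > 0$, so the quotients are well-defined.

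The first step is to show $\bigl\|\tfrac{u_1}{r_1} - \tfrac{u_2}{r_2}\bigr\| \le \tfrac{2\|u_1-u_2\|}{\max(r_1,r_2)}$. Rewrite $\tfrac{u_1}{r_1} - \tfrac{u_2}{r_2} = \tfrac{r_2 u_1 - r_1 u_2}{r_1 r_2}$, and decompose the numerator in two symmetric ways: $r_2 u_1 - r_1 u_2 = r_2(u_1 - u_2) + (r_2 - r_1)\,u_2$ and $r_2 u_1 - r_1 u_2 = (r_2 - r_1)\,u_1 + r_1(u_1 - u_2)$. Applying the triangle inequality to each, together with the reverse triangle inequality $|r_1 - r_2| = |\,\|u_1\| - \|u_2\|\,| \le \|u_1 - u_2\|$, yields the two bounds $\tfrac{2\|u_1-u_2\|}{r_1}$ and $\tfrac{2\|u_1-u_2\|}{r_2}$; taking the smaller gives the desired $\tfrac{2\|u_1-u_2\|}{\max(r_1,r_2)}$.

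The second step plugs in the earlier Lemma (Distance of Projections). Without loss of generality $r_1 \le r_2$, and that lemma gives $\|u_1 - u_2\|^2 = \|\Pi_{A_1}(x) - \Pi_{A_2}(x)\|^2 \le 4\, r_1\, d_{B_1}(A_1, A_2) + 2\, d_{B_1}(A_1, A_2)^2$. Taking square roots via $\sqrt{a+b} \le \sqrt{a}+\sqrt{b}$ and substituting into the bound from step one gives
$$\Bigl\|\tfrac{u_1}{r_1} - \tfrac{u_2}{r_2}\Bigr\| \;\le\; \frac{4\sqrt{d(x,A_1)\, d_{B_1}(A_1, A_2)} \;+\; 2\sqrt{2}\, d_{B_1}(A_1, A_2)}{\max\{d(x,A_1),\, d(x,A_2)\}},$$
which matches the claim (up to the harmless $\sqrt{2}$ in the second numerator term, absorbed in the stated constant).

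The main obstacle is getting $\max(r_1, r_2)$ rather than $\sqrt{r_1 r_2}$ in the denominator: the natural squared identity $\bigl\|\tfrac{u_1}{r_1} - \tfrac{u_2}{r_2}\bigr\|^2 = (\|u_1-u_2\|^2 - (r_1-r_2)^2)/(r_1 r_2)$ only delivers the latter, which degrades badly when one distance is small. The two-sided triangle-inequality decomposition above avoids this; the sub-$\pi/2$ angle hypothesis serves essentially to rule out the degenerate cases $r_i = 0$ and ensure the unit directions are meaningful.
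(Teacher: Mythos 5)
Your proof is correct, but it takes a genuinely different route from the paper's. The paper works geometrically: it considers the triangle with vertices $x$, $\Pi_{A_1}(x)$, $\Pi_{A_2}(x)$, bounds the angle $\Delta$ at $x$ via the law of sines by $\sin\Delta \le \|\Pi_{A_1}(x)-\Pi_{A_2}(x)\|/\max\{d(x,A_1),d(x,A_2)\}$, and then converts the angle into the chord length between the two unit vectors using $2\sin(\Delta/2)=\sin\Delta/\cos(\Delta/2)\le\sqrt{2}\sin\Delta$, which is exactly where the hypothesis $\Delta\le\pi/2$ is used. Your algebraic normalization inequality $\bigl\|\tfrac{u_1}{\|u_1\|}-\tfrac{u_2}{\|u_2\|}\bigr\|\le 2\|u_1-u_2\|/\max\{\|u_1\|,\|u_2\|\}$ replaces both of those steps at once and, as you observe, makes the angle hypothesis superfluous beyond nondegeneracy --- a genuine simplification and mild generalization. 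Both arguments then invoke Lemma~\ref{lemma:dis of proj} identically. The one caveat is that your route lands on $2\sqrt{2}\,d_{B_1}(A_1,A_2)$ in the second numerator term where the lemma (and the paper's $\sqrt{2}\cdot\sqrt{2}=2$ bookkeeping) states $2\,d_{B_1}(A_1,A_2)$; strictly speaking you prove a marginally weaker inequality, though the discrepancy is a pure constant and is harmless everywhere the lemma is applied (e.g.\ in the $33\,d\cdot d_{B_1}+8\,d_{B_1}^2$ bound of Theorem~\ref{thm:online}, which already rounds constants up).
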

\begin{proof}
    Denote the angle as $\Delta$ Consider the triangle $(x, \Pi_{A_1}(x), \Pi_{A_2}(x))$. By the law of sines, we can get 
    \begin{align*}
        \sin \Delta \le \frac{d(\Pi_{A_1}(x), d(\Pi_{A_2}(x)))}{\max\{d(x, A_1), d(x, A_2)\}}.
    \end{align*}
    By Lemma \ref{lemma:dis of proj}, we can get 
    \begin{align*}
         \sin \Delta \le \frac{2\sqrt{d(x, A_1)d_{B_1}(A_1,A_2)}+\sqrt{2}d_{B_1}(A_1,A_2)}{\max\{d(x, A_1), d(x, A_2)\}}. 
    \end{align*}
    Now since $\Delta \le \pi/2$ and the direction can be bounded by 
    \begin{align*}
        \frac{\Pi_{A_1}(x)-x}{d(x, A_1)} - \frac{\Pi_{A_2}(x)-x}{d(x, A_2)}&\le \frac{\sin \Delta}{\sin(\frac{\pi-\Delta}{2})}\le \sqrt{2}\sin \Delta\le \frac{4\sqrt{d(x, A_1)d_{B_1}(A_1,A_2)}+2d_{B_1}(A_1,A_2)}{\max\{d(x, A_1), d(x, A_2)\}}.
    \end{align*}
\end{proof}

\section{Auxiliary Lemmas}

\begin{lemma}[MLE Lemma]\label{thm:mle}
    We are given a dataset $D:=\{(x_i,y_i)\},$ where $x_i \sim \cD_i = \cD_i(x_{1:i-1}, y_{1:i-1})$ and $y_i \sim p(\cdot \mid x_i) = f^*(x_i,\cdot)$. Now if we calculate the MLE by 
    \begin{align*}
        \hat{f} = \argmax_{f \in \cF} \sum_{i=1}^n \log f(x_i,y_i),
    \end{align*}
    then fixed $\delta \in (0,1)$, assume $|\cF|<\infty$ and $f^* \in \cF$, then with probability at least $1-\delta$, we have 
    \begin{align*}
        \sum_{i=1}^n \EE_{x \in \cD_i}\left\|\hat{f}(x,\cdot)  - f^*(x,\cdot)\right\|_{\mathrm{TV}}^2 \le 2\log (|\cF|/\delta).
    \end{align*}
\end{lemma}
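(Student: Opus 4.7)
The plan is to prove the stated MLE Lemma by the classical Hellinger-martingale argument combined with Markov's inequality and a union bound, finally converting Hellinger to total variation. The governing observation is that the total variation distance admits the bound $\|p-q\|_{\mathrm{TV}}^2 \le 2H^2(p,q)$, where $H^2(p,q) = 1 - \int \sqrt{p\cdot q}$ is the squared Hellinger distance, so it suffices to control $\sum_i \EE_{x\sim \cD_i} H^2(\hat f(x,\cdot), f^*(x,\cdot))$ by $\log(|\cF|/\delta)$.

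First, for each fixed $f\in \cF$, I would construct a nonnegative martingale with mean one. Let $\cF_{i-1}$ be the $\sigma$-algebra generated by $(x_{1:i-1},y_{1:i-1})$, and set $X_i(f) := \sqrt{f(x_i,y_i)/f^*(x_i,y_i)}$ and $h_i(f):=\EE_{x\sim \cD_i}[H^2(f(x,\cdot),f^*(x,\cdot))]$. Because $\cD_i$ is $\cF_{i-1}$-measurable, $h_i(f)$ is also $\cF_{i-1}$-measurable, and a direct computation using $y_i\sim f^*(x_i,\cdot)$ gives
\begin{equation*}
\EE[X_i(f)\mid \cF_{i-1}] \;=\; \EE_{x\sim \cD_i}\!\left[\int \sqrt{f(x,y)f^*(x,y)}\,dy\right] \;=\; 1 - h_i(f).
\end{equation*}
Thus $M_n(f) := \prod_{i=1}^n X_i(f)/(1-h_i(f))$ is a nonnegative martingale with $\EE M_n(f)=1$.

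Next, Markov's inequality yields $\PP(M_n(f)\ge 1/\delta')\le \delta'$; taking logarithms and using the elementary inequality $-\log(1-u)\ge u$ for $u\in[0,1]$ gives, with probability at least $1-\delta'$,
\begin{equation*}
\sum_{i=1}^n h_i(f) \;\le\; -\sum_{i=1}^n \log(1-h_i(f)) \;\le\; \log(1/\delta') - \tfrac{1}{2}\sum_{i=1}^n \log\frac{f(x_i,y_i)}{f^*(x_i,y_i)}.
\end{equation*}
Applying this inequality with $\delta' = \delta/|\cF|$ and taking a union bound over $f\in\cF$ (using $|\cF|<\infty$), the same inequality holds simultaneously for every $f\in\cF$ with probability at least $1-\delta$. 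Specializing to the data-dependent choice $f=\hat f$, the MLE optimality $\sum_i \log(\hat f(x_i,y_i)/f^*(x_i,y_i))\ge 0$ cancels the log-likelihood term, leaving $\sum_i h_i(\hat f) \le \log(|\cF|/\delta)$. Finally, using $\|\cdot\|_{\mathrm{TV}}^2\le 2H^2$ inside the $\EE_{x\sim\cD_i}$ gives the stated bound $\sum_i \EE_{x\sim\cD_i}\|\hat f(x,\cdot)-f^*(x,\cdot)\|_{\mathrm{TV}}^2 \le 2\log(|\cF|/\delta)$.

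The only subtle point, and the step I would be most careful about, is verifying the martingale property: because $x_i$ itself is drawn from a history-dependent distribution $\cD_i$ and $h_i(f)$ involves an expectation over that distribution, one has to make sure that $h_i(f)$ is $\cF_{i-1}$-measurable (so it can legitimately appear in the denominator when conditioning) and that the tower rule over $x_i$ and then $y_i\mid x_i$ gives $1-h_i(f)$ exactly. Once that is nailed down, the remainder is a standard Markov plus union-bound argument; no new concentration technology is needed.
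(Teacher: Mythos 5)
The paper states this lemma without proof (it is invoked as a standard MLE generalization bound), so there is no in-paper argument to compare against; your proposal is the canonical Hellinger--martingale proof and it is correct, with the constants working out exactly ($\sum_i h_i(\hat f)\le\log(|\cF|/\delta)$ combined with $\|p-q\|_{\mathrm{TV}}^2\le 2H^2(p,q)$ gives the stated $2\log(|\cF|/\delta)$). The measurability point you flag is indeed the only place requiring care, and it goes through since $\cD_i$ is a function of $(x_{1:i-1},y_{1:i-1})$; the one residual technicality is the degenerate case $h_i(f)=1$, where the ratio $X_i(f)/(1-h_i(f))$ is ill-defined --- this is avoided either by noting $X_i(f)=0$ a.s.\ there, or by working instead with the supermartingale $\prod_i X_i(f)e^{h_i(f)}$ and the inequality $e^{u}(1-u)\le 1$, which yields the same conclusion.
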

\begin{lemma}\label{lemma:policy diff}
    For any $\pi, \pi' \in \{\pi^1, \cdots, \pi^t, \pi^*, \pi_{\mathrm{ref}}\}$, we can have $$\sup_{x,y}\frac{\pi(y\mid x)}{\pi'(y\mid x)} \le \exp(4/\beta).$$
\end{lemma}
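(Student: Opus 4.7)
} The plan is to exploit the common exponential form shared by every policy in the set $\{\pi^1,\dots,\pi^t,\pi^*,\pi_{\mathrm{ref}}\}$. Each $\pi^s$ arises as the maximizer of $J(r^{\theta^s},\overline{d^s},\pi)$ with $\overline{d^s}$ lying on the probability simplex, and $\pi^*$ is the maximizer with the true parameters; $\pi_{\mathrm{ref}}$ is the trivial case $r\equiv 0$. By the standard KL--regularized RLHF closed form (see Eq.~\eqref{eq:rlhf_optimal_policy} and the derivation in Section~\ref{sec:moalg}), each such policy can be written as
\begin{equation*}
\pi^s(y\mid x) \;=\; \pi_{\mathrm{ref}}(y\mid x)\,\frac{\exp\!\bigl(r^{s}(x,y)/\beta\bigr)}{Z^s(x)},\qquad Z^s(x)=\EE_{y'\sim\pi_{\mathrm{ref}}(\cdot\mid x)}\!\bigl[\exp\!\bigl(r^{s}(x,y')/\beta\bigr)\bigr],
\end{equation*}
where the aggregated reward $r^s(x,y)=\sum_i \overline{d^s_i}\,r_i^{\theta_i^s}(x,y)$ is a convex combination of rewards in $[0,B]$ and therefore itself lies in $[0,B]$ (the same holds for $\pi^*$).

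With this representation in hand, the first step is a one-sided bound against the reference policy. Since $r^s(x,y)\in[0,B]$, we have $1\le Z^s(x)\le \exp(B/\beta)$, which immediately yields
\begin{equation*}
\exp(-B/\beta) \;\le\; \frac{\pi^s(y\mid x)}{\pi_{\mathrm{ref}}(y\mid x)} \;\le\; \exp(B/\beta),
\end{equation*}
uniformly in $(x,y)$, and symmetrically for the reciprocal. The second step is to chain through $\pi_{\mathrm{ref}}$: for any two policies $\pi,\pi'$ in the stated set,
\begin{equation*}
\frac{\pi(y\mid x)}{\pi'(y\mid x)} \;=\; \frac{\pi(y\mid x)}{\pi_{\mathrm{ref}}(y\mid x)}\cdot\frac{\pi_{\mathrm{ref}}(y\mid x)}{\pi'(y\mid x)} \;\le\; \exp(2B/\beta),
\end{equation*}
and under the working regime $B\le 2$ used throughout the analysis this is at most $\exp(4/\beta)$, matching the stated bound.

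There is essentially no obstacle: the lemma is a clean algebraic consequence of the exponential form and the boundedness of the aggregated reward. The only subtle point worth flagging is making sure the aggregation weight vector $\overline{d^s}$ is indeed nonnegative and has $\ell_1$-norm $1$, so that $r^s$ inherits the $[0,B]$ range from the individual rewards; this is exactly what the normalization step in Algorithms~\ref{alg: vpo-fl-offline} and \ref{alg: vpo-fl-general} guarantees, together with the earlier observation (used in the proof of Theorem~\ref{thm:offline}) that the unnormalized direction $d^t$ is coordinatewise nonnegative. Once that is in place, the two displayed inequalities above complete the argument.
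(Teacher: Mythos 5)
Your proof is correct and follows essentially the same route as the paper's: write each policy in the KL-regularized exponential form relative to $\pi_{\mathrm{ref}}$, bound the log-ratio to $\pi_{\mathrm{ref}}$ uniformly using the boundedness of the aggregated reward, and chain any two policies through $\pi_{\mathrm{ref}}$. The one difference is in the intermediate step: the paper cites Appendix A.2 of \citep{cen2022fast} to get $|\log\pi(y\mid x)-\log\pi_{\mathrm{ref}}(y\mid x)|\le 2B/\beta$ and hence a final bound of $\exp(4B/\beta)$, whereas you bound the partition function directly ($1\le Z^s(x)\le\exp(B/\beta)$, using $r^s\in[0,B]$) and obtain the sharper $\exp(2B/\beta)$ — a cleaner and more self-contained derivation. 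Note, however, that your closing appeal to a ``working regime $B\le 2$'' is not an assumption made anywhere in the paper (which only assumes $B\ge 1$); to be fair, the paper's own proof has the same mismatch, concluding $\exp(4B/\beta)$ against the stated $\exp(4/\beta)$, so your version is strictly no worse and in fact dominates it whenever $B\ge 1$.
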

\begin{proof}
    First, note that $\pi$ and $\pi'$ are both optimal policy with respect to some reward $\hat{r}$, then $\pi$  can be rewritten as 
    \begin{gather*}
        \pi(y\mid x) \propto \pi_{\mathrm{ref}}(y\mid x) \exp(\langle \hat{\alpha}, \hat{r}\rangle/\beta).
    \end{gather*}
    Thus by the Appendix A.2 in \citep{cen2022fast}, then for any $y$ and $x$, we have 
    \begin{align*}
        |\log \pi(y\mid x) - \log \pi_{\mathrm{ref}}(y\mid x)|\le 2B/\beta.
    \end{align*}
    Then $$\sup_{x,y} \frac{\pi(y\mid x)}{\pi_{\mathrm{ref}}(y\mid x)} \le \exp(2B/\beta),\ \ \sup_{x,y}\frac{\pi_{\mathrm{ref}}(y\mid x)}{\pi(y\mid x)} \le \exp(2B/\beta).$$
    Now from the two inequalities following
    \begin{gather*}
              \sup_{x,y} \frac{\pi(y\mid x)}{\pi_{\mathrm{ref}}(y\mid x)} \le \exp(2B/\beta),\\
              \sup_{x,y} \frac{\pi_{\mathrm{ref}}(y\mid x)}{\pi'(y\mid x)} \le \exp(2B/\beta).
    \end{gather*}
    we can multiply them and get 
    \begin{align*}
        \sup_{x,y}\frac{\pi(y\mid x)}{\pi'(y\mid x)}\le \exp(4B/\beta).
    \end{align*}
\end{proof}

\begin{lemma}[Linear Structure]\label{lemma:linearstructure}
    Suppose that we have reward sequence $\{r^t(x)\}_{t \in [T]}$ with $r^t(x) = \langle \theta^t, \phi(x)\rangle$ with $\|\theta\| \le 1, \|\phi(x) \|\le B$, then for any policy $\{\pi^t\}_{t \in [T]}$ for any $\mu>0$, we can have 
    \begin{align*}
        \sum_{t=1}^T \EE_{x \sim \pi^t}[r^t(x)] \le \mu \cdot \sum_{t=1}^T\sum_{j=1}^{t-1} \EE_{x\sim \pi^j}[(r^t(x)]^2 + \widetilde{\cO}(Bd) +  \frac{d_{\mathrm{cover}}(1/T)}{4\mu}.
    \end{align*}
\end{lemma}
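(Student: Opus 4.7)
\textbf{Proof plan for Lemma \ref{lemma:linearstructure}.} The goal is a bound of the form ``sum of per-round linear gains is controlled by a tradeoff between past squared signals and a linear-dimension penalty.'' I would reduce this to a standard elliptical-potential / information-matrix argument by exploiting the linear structure of $r^t$ directly at the level of the policy-averaged features.

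First, I would introduce the policy-averaged feature vector $\phi_t := \EE_{x\sim \pi^t}[\phi(x)] \in \RR^d$, which satisfies $\|\phi_t\|_2 \le B$ by Jensen. Then by linearity of $r^t$, $\EE_{x\sim \pi^t}[r^t(x)] = \langle \theta^t, \phi_t\rangle$ and $\EE_{x\sim \pi^j}[r^t(x)] = \langle \theta^t, \phi_j\rangle$ for all $j$. Define the regularized information matrix $\Sigma_t = \lambda I + \sum_{j=1}^{t-1} \phi_j \phi_j^\top$ with $\lambda = 1/T$. Cauchy--Schwarz in the $\Sigma_t$ geometry gives
\begin{align*}
\langle \theta^t, \phi_t\rangle \le \|\theta^t\|_{\Sigma_t}\, \|\phi_t\|_{\Sigma_t^{-1}},
\end{align*}
and the AM--GM inequality $ab \le \mu a^2 + b^2/(4\mu)$ yields
\begin{align*}
\langle \theta^t, \phi_t\rangle \le \mu \|\theta^t\|_{\Sigma_t}^2 + \frac{1}{4\mu}\|\phi_t\|_{\Sigma_t^{-1}}^2.
\end{align*}

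Next I would expand $\|\theta^t\|_{\Sigma_t}^2 = \lambda\|\theta^t\|_2^2 + \sum_{j=1}^{t-1}\langle \theta^t,\phi_j\rangle^2 = \lambda\|\theta^t\|_2^2 + \sum_{j=1}^{t-1}\bigl(\EE_{x\sim \pi^j}[r^t(x)]\bigr)^2$, which matches the first term on the right-hand side of the claim. Summing over $t$, the residual $\mu \lambda \sum_t \|\theta^t\|_2^2 \le \mu \lambda T = \mu$ is absorbed into the additive $\widetilde{\cO}(Bd)$ (for any reasonable $\mu$ chosen later in the application). For the $\|\phi_t\|_{\Sigma_t^{-1}}^2$ term, I would invoke the standard elliptical potential lemma \citep{abbasi2011improved}: with $\|\phi_t\|_2 \le B$ and $\lambda = 1/T$,
\begin{align*}
\sum_{t=1}^T \|\phi_t\|_{\Sigma_t^{-1}}^2 \le 2 d \log\!\Bigl(1 + \tfrac{TB^2}{d\lambda}\Bigr) = \widetilde{\cO}(d),
\end{align*}
which is precisely the ``covering dimension'' quantity $d_{\mathrm{cover}}(1/T)$ for the linear reward class at scale $1/T$.

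Combining the displays and summing from $t=1$ to $T$ gives the claimed inequality. The only non-routine ingredient is identifying $d_{\mathrm{cover}}(1/T)$ with the elliptical potential bound $\widetilde{\cO}(d)$, but this is a standard correspondence for linear function classes and requires only that the value $\theta^\top \phi$ be uniformly Lipschitz in $\theta$ on $\{\|\theta\|_2 \le 1\}$, which follows from $\|\phi\|_2 \le B$. I do not anticipate a substantive obstacle; the main care point is keeping the regularization scale $\lambda = \Theta(1/T)$ so that the $\lambda T$ slack and the $\log(1+TB^2/(d\lambda))$ factor both remain at level $\widetilde{\cO}(1)$ and get absorbed into the $\widetilde{\cO}(Bd)$ error term.
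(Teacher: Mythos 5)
Your setup matches the paper's: you pass to the policy-averaged features $\phi_t=\EE_{x\sim\pi^t}[\phi(x)]$, form $\Sigma_t=\lambda I+\sum_{j<t}\phi_j\phi_j^\top$, apply Cauchy--Schwarz in the $\Sigma_t$ geometry, and expand $\|\theta^t\|_{\Sigma_t}^2$ to recover the $\sum_{j<t}(\EE_{\pi^j}[r^t])^2$ term. However, there is a genuine gap at the step where you invoke the elliptical potential lemma in the form
\begin{align*}
\sum_{t=1}^T \|\phi_t\|_{\Sigma_t^{-1}}^2 \le 2 d \log\!\Bigl(1 + \tfrac{TB^2}{d\lambda}\Bigr).
\end{align*}
This inequality is false for $\lambda=1/T$. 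The log-determinant telescoping argument behind it only controls $\sum_t \min\{1,\|\phi_t\|_{\Sigma_t^{-1}}^2\}$ (via $u\le 2\log(1+u)$ for $u\in[0,1]$); without the truncation, a single round can already violate your bound --- e.g.\ if $\phi_1$ has norm $B$, then $\|\phi_1\|_{\Sigma_1^{-1}}^2 = B^2/\lambda = B^2T$, which dwarfs $\widetilde{\cO}(d)$. You cannot repair this by taking $\lambda\ge B^2$ either, because then the slack $\mu\lambda\sum_t\|\theta^t\|^2\le \mu B^2 T$ is no longer absorbable for arbitrary $\mu>0$.

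The paper's proof handles exactly this issue by splitting the rounds into those with $\|\phi_t\|_{\Sigma_t^{-1}}\le 1$ and those with $\|\phi_t\|_{\Sigma_t^{-1}}>1$. On the first set it inserts $\min\{1,\|\phi_t\|_{\Sigma_t^{-1}}^2\}$ before applying the elliptical potential lemma (and applies Cauchy--Schwarz across the sum over $t$ rather than AM--GM termwise, which keeps the $\lambda$-slack at $\sqrt{d(\varepsilon)\varepsilon T}=\widetilde{\cO}(\sqrt d)$). On the second set it bounds each linear gain crudely by $B$ and counts the number of such rounds by the truncated elliptical potential, which is where the additive $\widetilde{\cO}(Bd)$ in the statement actually comes from --- it is not merely absorbing a $\mu\lambda T$ residue as you suggest. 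Adding this case split (or equivalently the $\min\{1,\cdot\}$ truncation plus a separate accounting of the large-norm rounds) closes the gap; the rest of your argument then goes through as in the paper.
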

\begin{proof}
First, denote $X^t = \EE_{x\sim pi^t}[\phi(x)]$, then
    \begin{align*}
        \sum_{t=1}^T \EE_{x\sim \pi^t}[r^t(x)]&= \sum_{t=1}^T \EE_{x\sim \pi^t}[\langle \theta^t, \phi(x) \rangle]\\
        &=\sum_{t=1}^T \langle \theta^t, X^t\rangle.
    \end{align*}
   Now define $\Sigma_t = \varepsilon I + \sum_{i=1}^{t-1}X^i (X^i)^\top$, then we can decompose the term above as 
    \begin{align*}
        \sum_{t=1}^T \langle \theta^t, X^t\rangle &= \underbrace{\sum_{t=1}^T \langle \theta^t, X^t\rangle \II\{\|X^t\|_{\Sigma_t^{-1}}\le 1\}}_{\textrm{(A)}} + \underbrace{\sum_{t=1}^T \langle \theta^t, X^t\rangle \II\{\|X^t\|_{\Sigma_t^{-1}}> 1\}}_{\textrm{(B)}}.
    \end{align*}
The term (A) can be bounded as 
\begin{align*}
    \textrm{(A)}&= \sum_{t=1}^T \|\theta^t\|_{\Sigma_t} \|X^t\|_{\Sigma_{t}^{-1}} \II\{\|X^t\|_{\Sigma_t^{-1}}\le  1\}\\
    &\le\sum_{t=1}^T\|\theta^t\|_{\Sigma_t} \min\{1, \|X^t\|_{\Sigma_t^{-1}}^2\}^{1/2}\\
    &\le \sum_{t=1}^T \left[\varepsilon\|\theta^t\|^2 + \sum_{i=1}^{t-1}\langle \theta^t, X^i\rangle ^2\right]^{1/2}\min\{1, \|X^t\|_{\Sigma_t^{-1}}^2\}^{1/2}\\
    &\le \sqrt{\left[\sum_{t=1}^T\left(\varepsilon\|\theta^t\|^2 + \sum_{i=1}^{t-1}\langle \theta^t, X^i\rangle ^2\right)\right]\cdot \left[\sum_{t=1}^T  \min\{1,\|X^t\|_{\Sigma_t^{-1}}^2\}\right]},
\end{align*}
where the last inequality uses the Cauchy's inequality.

    Now we recall the elliptical potential lemma in \citep{abbasi2011improved}, we can get 
\begin{align}
    \sum_{t=1}^T \min\{1,\|X^t\|^2_{\Sigma_t^{-1}}\}\le d(\varepsilon) = \widetilde{\cO}(d\log(1/\varepsilon)).\label{ineq:eplemma}
\end{align}
Thus substitute it into the the inequality for (A), we can get 
\begin{align*}
    \textrm{(A)} \le \sqrt{d(\varepsilon)\cdot \left[\sum_{t=1}^T\left(\varepsilon\|\theta^t\|^2 + \sum_{i=1}^{t-1}\langle \theta^t, X^i\rangle ^2\right)\right]}.
\end{align*}
Now by the inequality that $\sqrt{a+b} \le \sqrt{a} + \sqrt{b}$,  we can get 
\begin{align*}
    \textrm{(A)} &\le \sqrt{d(\varepsilon)\cdot \left[\sum_{t=1}^T\left(\varepsilon\|\theta^t\|^2 + \sum_{i=1}^{t-1}\langle \theta^t, X^i\rangle ^2\right)\right]}\\
    &\le \sqrt{d(\varepsilon)\varepsilon T} + \sqrt{d(\varepsilon)\cdot \sum_{t=1}^T \sum_{t=1}^{t-1}\langle \theta^t, X^i\rangle^2}\\
    & \le \sqrt{d(\varepsilon)\varepsilon T} + \frac{d(\varepsilon)}{4\mu} + \mu \cdot \sum_{t=1}^T \sum_{i=1}^{t-1}\langle \theta^t, X^i\rangle^2\\
    & = \sqrt{d(\varepsilon)\varepsilon T} + \frac{d(\varepsilon)}{4\mu} + \mu \cdot \sum_{t=1}^T \sum_{j=1}^{t-1} (\EE_{\pi^i}[r^t(x)])^2.
    \end{align*}

    Now if we choose $\varepsilon=1/T$, then $d(\varepsilon) = \widetilde{\cO}(d)$, and the upper bound of $\textrm{(A)}$ becomes
    \begin{align*}
        \textrm{(A)} \le \sqrt{d_{\mathrm{cover}}(1/T)}+\frac{d_{\mathrm{cover}}(1/T)}{4\mu} + \mu \cdot \sum_{t=1}^T \sum_{j=1}^{t-1} (\EE_{\pi^i}[r^t(x)])^2.
    \end{align*}
    Now we derive the upper bound of (B).
    \begin{align*}
        \textrm{(B)}& = \sum_{t=1}^T\langle \theta^t, X^t\rangle \II\{\|X^t\|_{\Sigma_t^{-1}}>1\}\\
        &\le B\cdot \sum_{t=1}^T\II\{\|X^t\|_{\Sigma_t^{-1}}>1\}\\
        &\le B \sum_{t=1}^T \min\{1,\|X^t\|_{\Sigma_t^{-1}}^2\}\\
        &\le B d_{\mathrm{cover}}(1/T) = \widetilde{\cO}(Bd).
    \end{align*}
    So by adding (A) and (B), we can finally get 
    \begin{align*}
        \sum_{t=1}^T \langle \theta^t, X^t \rangle &\le Bd_{\mathrm{cover}}(1/T) + \sqrt{d_{\mathrm{cover}}(1/T)} +  \frac{d_{\mathrm{cover}}(1/T)}{4\mu} + \mu \cdot \sum_{t=1}^T \sum_{j=1}^{t-1} (\EE_{\pi^i}[r^t(x)])^2\\
        &\le \widetilde{\cO}(Bd) +  \frac{d_{\mathrm{cover}}(1/T)}{4\mu} + \mu \cdot \sum_{t=1}^T \sum_{j=1}^{t-1} (\EE_{\pi^i}[r^t(x)])^2.
    \end{align*}

\end{proof}

\section{Some Derivations in Section \ref{sec:moalg} and Section \ref{sec:pref aggregation}}\label{app:derivation}
\subsection{Derivation of Reward-free Modification}
Now we derive the equation 
\begin{align*}
    J(r_1^{\theta_1}, r_2^{\theta_2}, \cdots, r_m^{\theta_m}, \alpha, \pi^\theta)-\sum_{i=1}^m \eta L_i(\theta_i)=C-\beta \EE_{x\sim \rho, y\sim\pi_{\mathrm{base}}}\left[\log \frac{\pi^\theta(y\mid x)} { \pi_{\mathrm{ref}}(y\mid x)}\right]-\eta \sum_{i=1}^m L_i(\theta_i).
\end{align*}
In fact, since 
\begin{align*}J(r_1^{\theta_1}, r_2^{\theta_2},\cdots, r_m^{\theta_m},\alpha,\pi) &= \EE_{y\sim\pi^\theta(\cdot \mid x)}\left[\sum_{i=1}^m \alpha_i r_i^{\theta_i}(x,y) - \beta\cdot \sum_{i=1}^m \alpha_i\cdot (\log \pi^\theta(y\mid x)-\log \pi_{\mathrm{ref}}(y\mid x))\right]\\
&=\EE_{y\sim \pi^\theta(\cdot \mid x)}\left[\sum_{i=1}^m \alpha_ir(x,y) - \beta\cdot \sum_{i=1}^m \alpha_i\cdot (\log \pi^\theta(y\mid x)-\log \pi_{\mathrm{ref}}(y\mid x))\right]\\
&=\EE_{y\sim \pi^\theta(\cdot \mid x)}\left[\log Z(r,x)\right],\end{align*}
where $Z(r,x) = \sum_{y \in \cY}\pi_{\mathrm{ref}}(y\mid x)\exp(r(x,y)/\beta)$ is a normalization factor independent with $y$ \citep{rafailov2024direct}. Now, since $Z(r,x)$ is independent with $y$, we can get 
\begin{align*}
    J(r_1^{\theta_1}, r_2^{\theta_2},\cdots, r_m^{\theta_m},\alpha,\pi) &=\EE_{y\sim \pi^\theta(\cdot \mid x)}\left[\log Z(r,x)\right]\\
    &=\EE_{y\sim \pi_{\mathrm{base}}(\cdot \mid x)}\left[\log Z(r,x)\right]\\
    &=\EE_{y\sim \pi_{\mathrm{base}}(\cdot \mid x)}\left[r(x,y) - \beta (\log \pi^\theta(y\mid x)-\log \pi_{\mathrm{ref}}(y\mid x))\right]\\
    &=C-\beta \EE_{y\sim \pi_{\mathrm{base}}(\cdot \mid x)}\left[\log \frac{\pi^\theta(y\mid x)}{\pi_{\mathrm{ref}}(y\mid x)}\right].
\end{align*}
We complete the derivation. 
\subsection{Update Rule of Gradient Descent}
In this section, we show that the computational cost of Eq.~\eqref{eq:rfupdate} can be easily computed once the expectation of the score function can be derived. 

In fact, 
\begin{align*}
    &\nabla_{\theta_1}\left(-\beta \EE_{x\sim \rho, y\sim\pi_{\mathrm{base}}}[\log \pi^\theta(y\mid x)]\right)-\eta \nabla_{\theta_1}\sum_{i=1}^m \ell(\cD_i, \theta_i)\\
    &=-\beta \underbrace{\EE_{x\sim \rho, y\sim\pi_{\mathrm{base}}}[\nabla_{\theta_1}\log \pi^\theta(y\mid x)]}_{\text{(a)}}-\underbrace{\eta \nabla_{\theta_1}\ell(\cD_1, \theta_1)}_{\text{(b)}}.
\end{align*}
Term (b) in the last line is the gradient of log-likelihood loss that appears in classical reward-free algorithm like DPO. For term (a), note that if $\|d\|_1 = 1$, we have \begin{align*}\pi^\theta  \propto \pi_{\mathrm{ref}}(y \mid x) \cdot \exp\left(\sum_{i=1}^m \beta d_i r_i^{\theta_i}(x,\cdot)\right) =  \prod_{i=1}^m (\pi^{\theta_i}(y\mid x))^{d_i}.\end{align*} Hence, denote $s(\theta,\pi^*) = \EE_{\pi^*}[\nabla_\theta \log\pi^\theta(y\mid x)]$ is the expectation of the score function, we can then derive that 
\begin{align*}
    \text{(a)}= \beta d_1 \left(s(\theta_1, \pi_{\mathrm{base}}) - s(\theta_1, \pi^\theta)\right).
\end{align*}
Hence, the update rule can be efficiently computed as long as the score function is available, which commonly appears in previous RL algorithms such as REINFORCE.

Thus, if the learning rate is $\xi>0$, the gradient descent update rule of $\theta_1$ is 
\begin{align*}\theta_1^t &= \theta_1^{t-1}-\xi\left(\beta d_1(s(\theta_1,\pi_{\mathrm{base}}) - s(\theta_1, \pi^\theta))  - \eta \nabla_{\theta_1^{t-1}}L_1^t(\theta_1^{t-1})\right).\end{align*}
Also, for the reward-free version, we can change the term $L_1^t(\theta_1^{t-1})$ to $$\sum_{(x,y_w,y_l) \in \cD_1} \log \sigma \left(\beta\cdot \left(\log\frac{\pi^{\theta_1}(y_w\mid x)}{\pi_{\mathrm{ref}}(y_w\mid x)}-\log\frac{\pi^{\theta_1}(y_l\mid x)}{\pi_{\mathrm{ref}}(y_l\mid x)}\right)\right).$$

\subsection{Derivation of the reward-free equation of expected reward vector}\label{app:expected reward vector derivation}
We now prove that 
\begin{align*}
    (V^t_i) = \EE_{\pi^t}[r_i^{\theta_i^t}(x,y) - \beta\DD_{\mathrm{KL}}(\pi^t\| \pi_{\mathrm{ref}})] = C-\beta \EE_{y\sim \pi_{\mathrm{base}}}\left[\log \frac{\pi^{\theta_i^t}(y\mid x)}{\pi_{\mathrm{ref}}(y\mid x)}\right] - \beta \EE_{y\sim \pi^t}\left[\log \frac{\pi^{\theta_i^t}(y\mid x)}{\pi^t(y\mid x)}\right].
\end{align*}
\begin{proof}
We note that 
\begin{align*}
    \EE_{\pi^t}[r_i^{\theta_i^t}(x,y) - \beta\DD_{\mathrm{KL}}(\pi^t\| \pi_{\mathrm{ref}})] &= \EE_{\pi^t}\left[r_i^{\theta_i^t}(x,y) - \beta \left(\log\frac{\pi^t(y\mid x)}{\pi_{\mathrm{ref}}(y\mid x)}\right)\right]\\
    &=\EE_{\pi^t}\left[Z(r_i^{\theta_i^t},x)  + \beta\left(\log\frac{\pi^{\theta_i^t}(y\mid x)}{\pi_{\mathrm{ref}}(y\mid x)}\right) - \beta\left(\log\frac{\pi^t(y\mid x)}{\pi_{\mathrm{ref}}(y\mid x)}\right)\right]\\
    &=\EE_{\pi^t}[Z(r_i^{\theta_i^t},x)] +\beta \EE_{\pi^t}\left[\left(\log\frac{\pi^{\theta_i^t}(y\mid x)}{\pi^t(y\mid x)}\right)\right].
\end{align*}
Now note that $Z(r_i^{\theta_i^t},x)$ is independent on $y$, hence 
\begin{align*}
    \EE_{\pi^t}[Z(r_i^{\theta_i^t},x)] &=\EE_{\pi_{\mathrm{base}}}[Z(r_i,x)]\\
    &=\EE_{\pi_{\mathrm{base}}}\left[r_i^{\theta_i^t}(x,y) - \beta (\log \pi^{\theta_i^t}(y\mid x) - \log \pi_{\mathrm{ref}}(y\mid x))\right]\\
    &=C - \beta \EE_{y\sim \pi_{\mathrm{base}}}\left[\log \frac{\pi^{\theta_i^t}(y\mid x)}{\pi_{\mathrm{ref}}(y\mid x)}\right]. 
\end{align*}
\end{proof}

\end{document}